\documentclass{article}

\usepackage{microtype}
\usepackage{subcaption}
\usepackage{booktabs}
\usepackage{multirow}
\usepackage[table,dvipsnames]{xcolor}
\usepackage{pifont}

\usepackage{hyperref}

\PassOptionsToPackage{noend}{algorithmic}
\usepackage[accepted]{icml2026}

\usepackage{amsmath}
\usepackage{amssymb}
\usepackage{mathtools}
\usepackage{amsthm}
\usepackage{makecell}
\usepackage{pifont}

\usepackage[capitalize,noabbrev]{cleveref}

\usepackage{graphicx}

\usepackage{soul}

\theoremstyle{plain}
\newtheorem{theorem}{Theorem}[section]
\newtheorem{proposition}[theorem]{Proposition}

\newtheorem{corollary}[theorem]{Corollary}
\theoremstyle{definition}

\theoremstyle{remark}

\newcommand{\NND}{\ensuremath{\text{NND}}}
\newcommand{\NN}{\ensuremath{\text{NN}}}
\newcommand{\Var}{\text{Var}}

\usepackage[disable,textsize=tiny]{todonotes}
\icmltitlerunning{GICDM}

\begin{document}
\twocolumn[
  \icmltitle{GICDM: Mitigating Hubness for\\Reliable Distance-Based Generative Model Evaluation}
  \icmlsetsymbol{equal}{*}

  \begin{icmlauthorlist}
    \icmlauthor{Nicolas Salvy}{in,CS,UPS}
    \icmlauthor{Hugues Talbot}{CS,in,UPS}
    \icmlauthor{Bertrand Thirion}{in,CEA,UPS}
  \end{icmlauthorlist}

  \icmlaffiliation{in}{Inria, Palaiseau, France}
  \icmlaffiliation{CS}{CentraleSup\'elec, Gif-sur-Yvette, France}
  \icmlaffiliation{UPS}{Universit\'e Paris-Saclay, Gif-sur-Yvette, France}
  \icmlaffiliation{CEA}{CEA, Gif-sur-Yvette, France}

  \icmlcorrespondingauthor{Nicolas Salvy}{nicolas.salvy@inria.fr}

  \icmlkeywords{Hubness, Curse of Dimensionality, Generative Model Evaluation, Density, Coverage}

  \vskip 0.3in
]

\printAffiliationsAndNotice{}

\begin{abstract}
  Generative model evaluation commonly relies on high-dimensional embedding spaces to compute distances between samples.
  We show that dataset representations in these spaces are affected by the hubness phenomenon, which distorts nearest-neighbor
  relationships and biases distance-based metrics.
  Building on the classical \emph{Iterative} Contextual Dissimilarity Measure (ICDM),
  we introduce Generative ICDM (GICDM), a method to correct neighborhood estimation for both real and generated data.
  We introduce a multi-scale extension to improve empirical behavior. 
  Extensive experiments on synthetic and real benchmarks demonstrate that GICDM resolves hubness-induced failures, restores reliable metric behavior, and improves alignment with human assessment.
\end{abstract}

\section{Introduction}

Generative models have achieved significant progress in recent years, 
enabling the synthesis of data for arbitrary modalities, 
including critical areas such as medical imaging \cite{pinaya2022brain,koetzier2024generating,bluethgen2024vision}. 
Evaluating the quality of generated data is essential to ensure its reliability for downstream applications.

Yet, structured data, such as images, are high-dimensional, which makes direct density
estimation infeasible. A common approach for evaluating generative models is to
use \emph{simple} distributional approximations. For example, Fréchet Inception Distance (FID) variants \cite{heusel2017gans,stein2024exposing} assume normality of the distribution.
This enables evaluation via a single score, typically the distance between the real and synthetic approximated distributions.
However, beyond the untested assumptions, this aggregate score makes it difficult to diagnose
whether a generative model lacks realism or diversity~\cite{sajjadi2018assessing}.

Pairs of fidelity and coverage metrics aim to measure these two aspects separately,
typically using distances \cite{sajjadi2018assessing,naeem2020reliable,salvy2026enhanced}.
A synthetic sample is then considered realistic (high fidelity) if it is sufficiently close to real data,
while a real point is considered covered if there are synthetic samples sufficiently close to it.
The closeness threshold is usually defined locally
as the distance from each real point to its $k$-th nearest real neighbor.

\begin{figure}[t]
  \begin{center}
    \subfloat[Visualization]
    {\hspace{-2.9em}\vspace{-0.3em}\includegraphics[width=0.79\columnwidth]{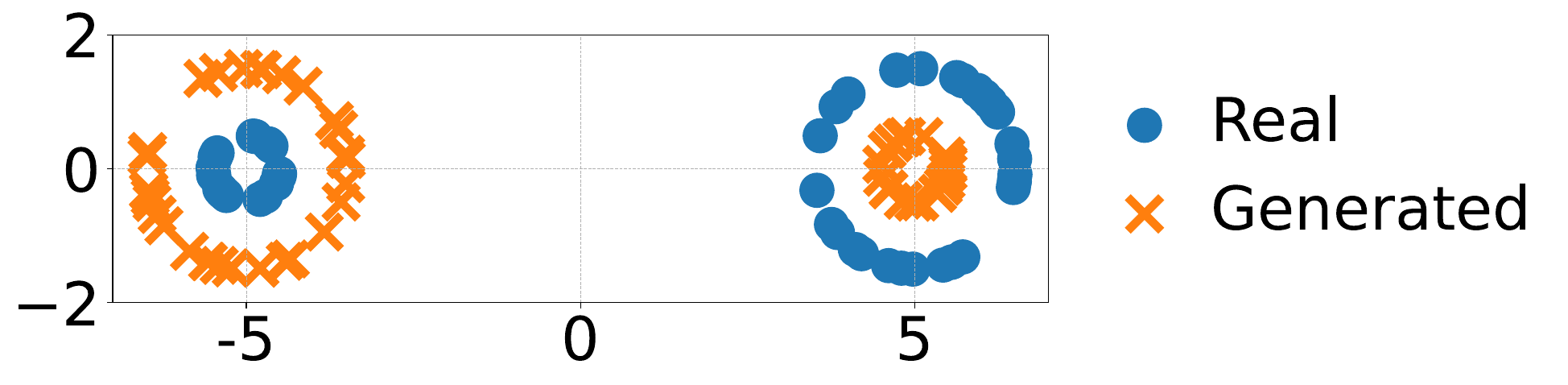}}
    \vfill
    \vspace{0.6em}
    \subfloat[Corresponding metric scores]{\vspace{-0.3em}\includegraphics[width=1\columnwidth]{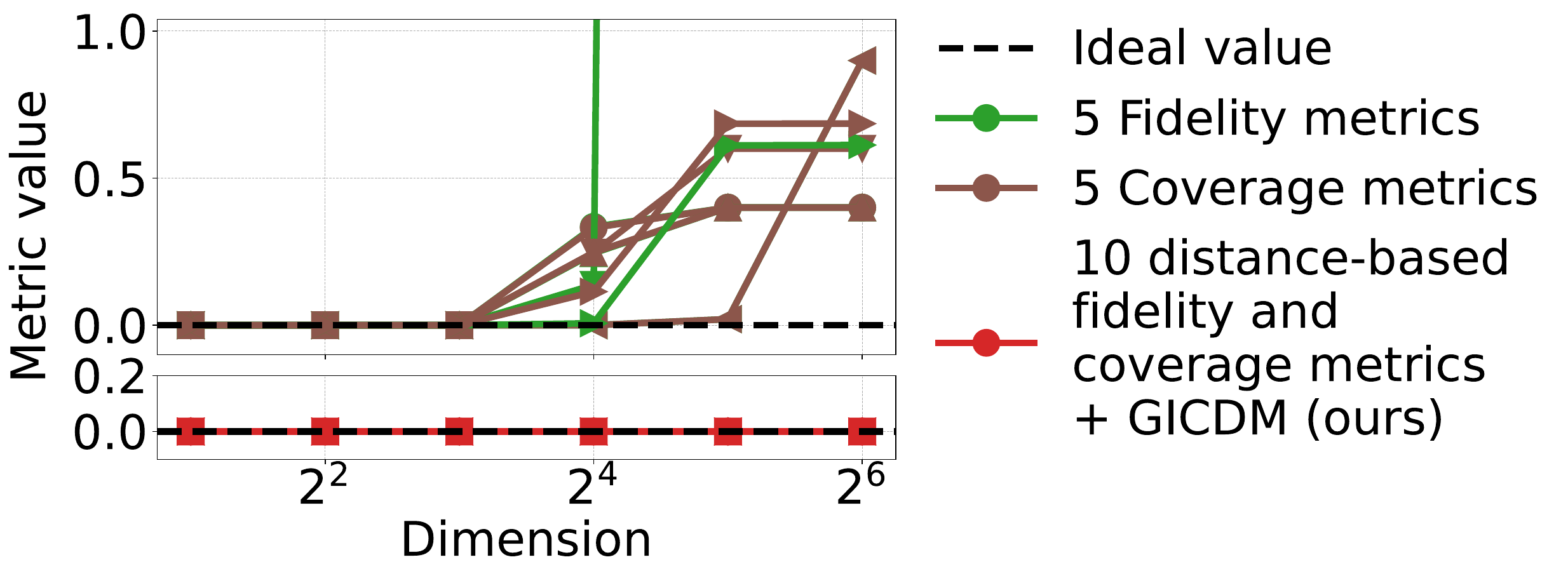}}
    \caption{
    \textbf{High dimension challenges distance-based metrics}:
    The scenario in \textit{(a)} consists of 
    real samples uniformly drawn from a 60/40 mixture of two hyperspheres, and 
    generated samples uniformly drawn from a mixture with swapped radii and proportions. 
    Standard metrics for this scenario are plotted in \textit{(b)} as the dimension increases.
    The real and generated sets are disjoint, so all metrics should score 0; 
    however, none of the standard metrics (top) do in high dimensions due to hubness: on the right side generated points spuriously appear closer to real points than real points are to each other.
    After applying GICDM (bottom), their scores correctly remain at 0. See \Cref{sec:hypersphere_metric_wise_results} for individual metric results.
    }
    \label{fig:hypersphere_test}
  \end{center}
\end{figure}

A recent position paper argued that all existing fidelity and diversity metrics are flawed \cite{raisa2025position}.
This paper introduced a synthetic benchmark of tests for evaluating generative model metrics 
and showed that all current metrics fail to meet at least 40\% of the success criteria.
While subsequent work has led to improvements \cite{salvy2026enhanced}, many failures persist.
We argue that the underlying cause of many reported failures is related to high-dimensionality,
specifically the \emph{hubness} phenomenon.
As shown in \Cref{fig:hypersphere_test}, all metrics incorrectly yield non-zero scores for two disjoint uniform distributions on mixtures of hyperspheres in high dimensions. This happens because hubness distorts distances, making generated points spuriously appear closer to real points than real points are to each other.

Distance-based evaluation assumes that feature space distances are meaningful: 
close points should be structurally and semantically similar, and distant points should be structurally or semantically different. In practice, evaluation is performed in pre-trained embedding spaces,
as feature extractors provide richer semantic representations than the raw observation space. 
However, modern embedding spaces are usually high-dimensional 
(e.g., 2048 for InceptionV3 \cite{DBLP:conf/cvpr/SzegedyVISW16},
4096 for DINOv3 \cite{DBLP:journals/corr/abs-2508-10104},
1024 for CLAP \cite{DBLP:conf/icassp/ElizaldeDIW23}),
making them vulnerable to the curse of dimensionality \cite{Bellman1961}.

A key aspect of this curse, known as \emph{hubness} \cite{radovanovic2010hubs},
undermines the reliability of nearest-neighbor relationships in high-dimensional spaces
\cite{beyer1999nearest,aggarwal2001surprising}. Specifically, certain points, called \emph{hubs},
appear disproportionately often among the $k$-nearest-neighbors of other data points,
even when they are semantically unrelated \cite{pachet2004improving}. 
At the same time, many other points, called \emph{antihubs}, never appear as nearest-neighbors,
meaning they effectively vanish from distance-based evaluation.
This phenomenon arises from the structure of high-dimensional data distributions and is
not simply due to limited sample size \cite{radovanovic2010hubs}.

Hubness has been recognized as problematic in various domains such as image recognition \cite{DBLP:conf/iccp2/TomasevBMN11} and recommender systems \cite{hara2015reducing}.
We show that hubness also affects modern embedding spaces (\Cref{tab:hubness_measures}).

To enable reliable distance-based generative model evaluation in high-dimensional spaces,
we aim to mitigate hubness while preserving metric validity.
Our goals are:
(1) to reduce hubness in the real dataset so that
nearest-neighbor
relationships are meaningful and symmetric,
(2) to preserve the relative positioning of
generated
points
with
respect to real data, and
(3) to ensure that the evaluation of each
generated point is independent of the
others, meaning its assessment depends solely on the real data.
Standard hubness
mitigation methods focus only on in-sample reduction and do not directly satisfy these requirements \cite{feldbauer2019comprehensive}.

In this paper, we introduce GICDM, a hubness reduction method tailored for distance-based generative model evaluation. Our contributions are:
\begin{itemize}
  \item \textbf{Demonstrating hubness}: We show that common embedding spaces for
  generative model evaluation exhibit hubness, and find that the
  \emph{Iterative} Contextual Dissimilarity Measure (ICDM)
  \cite{jegou2010contextual} is effective at reducing hubness in real datasets.
  \item \textbf{Hubness mitigation for generative model evaluation}:
  We show that hubness has a major impact on distance-based fidelity and coverage metrics.
  We adapt the ICDM method for evaluating generative models and enhance it with a careful filtering strategy.
  The result is GICDM, a hubness reduction method for pairs of real and generated sets.
\end{itemize}

This paper is organized as follows. \Cref{sec:hubness_background} reviews the hubness phenomenon; \Cref{sec:prior_work} surveys hubness reduction methods; \Cref{sec:related_work_metrics} discusses related evaluation metrics; \Cref{sec:method} introduces GICDM; \Cref{sec:experiments} presents experimental results; and \Cref{sec:conclusion} concludes.

\section{Background: Hubness in High-Dimensional Distributions}
\label{sec:hubness_background}

The \emph{$k$-occurrence} of a point $x \in D$, denoted $O_k(x)$, is
the number of times $x$ appears among the $k$-nearest-neighbors of
other points in the dataset $D$: $O_k(x) = \# \{y \in D \setminus \{x\} \mid x \in \mathcal{N}_k(y)\}$, 
where $\mathcal{N}_k(y)$ is the set of $k$-nearest-neighbors of $y$ in $D$.
As dimensionality increases, the distribution
of $k$-occurrences becomes increasingly skewed to the right, with the emergence of
\emph{hubs}: points with unusually high $O_k$ values.
Conversely, this skew leaves a massive number of points with $O_k=0$, known as \emph{antihubs}
(see \Cref{fig:gaussian_occurrences}). This phenomenon is called \emph{hubness}
\cite{radovanovic2010hubs}.
Unlike in low dimensions, where nearest-neighbor relationships are often
reciprocal, hubness makes these relationships highly asymmetric, rendering the
notion of "neighborhood" unreliable for many data points \cite{feldbauer2018fast, feldbauer2019comprehensive}.

\begin{figure}[b]
    \begin{center}
        \subfloat[$d=4$]{\includegraphics[width=0.23\textwidth]{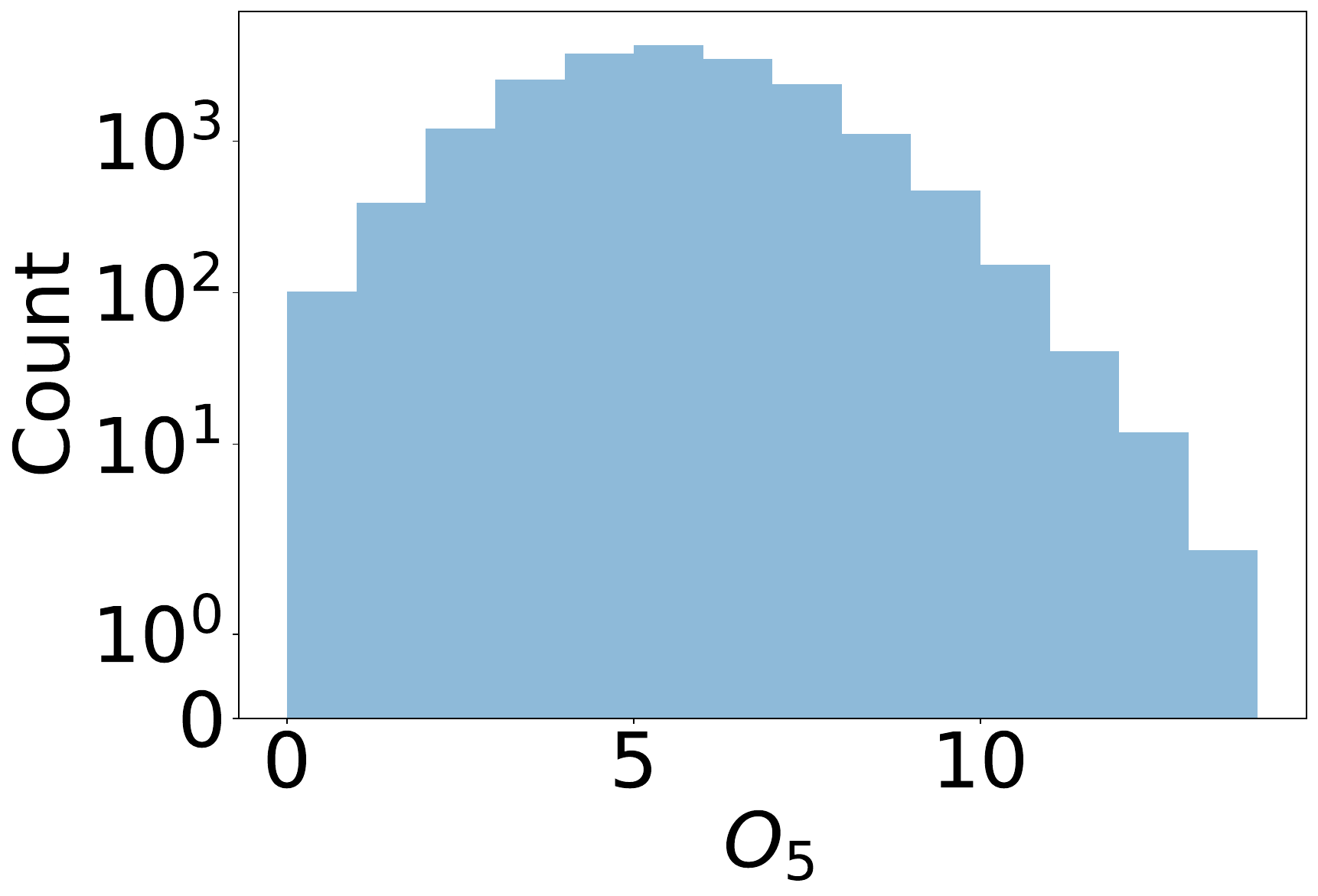}\label{fig:low_dim_occurrences}}
        \hfill
        \subfloat[$d=32$]{\includegraphics[width=0.23\textwidth]{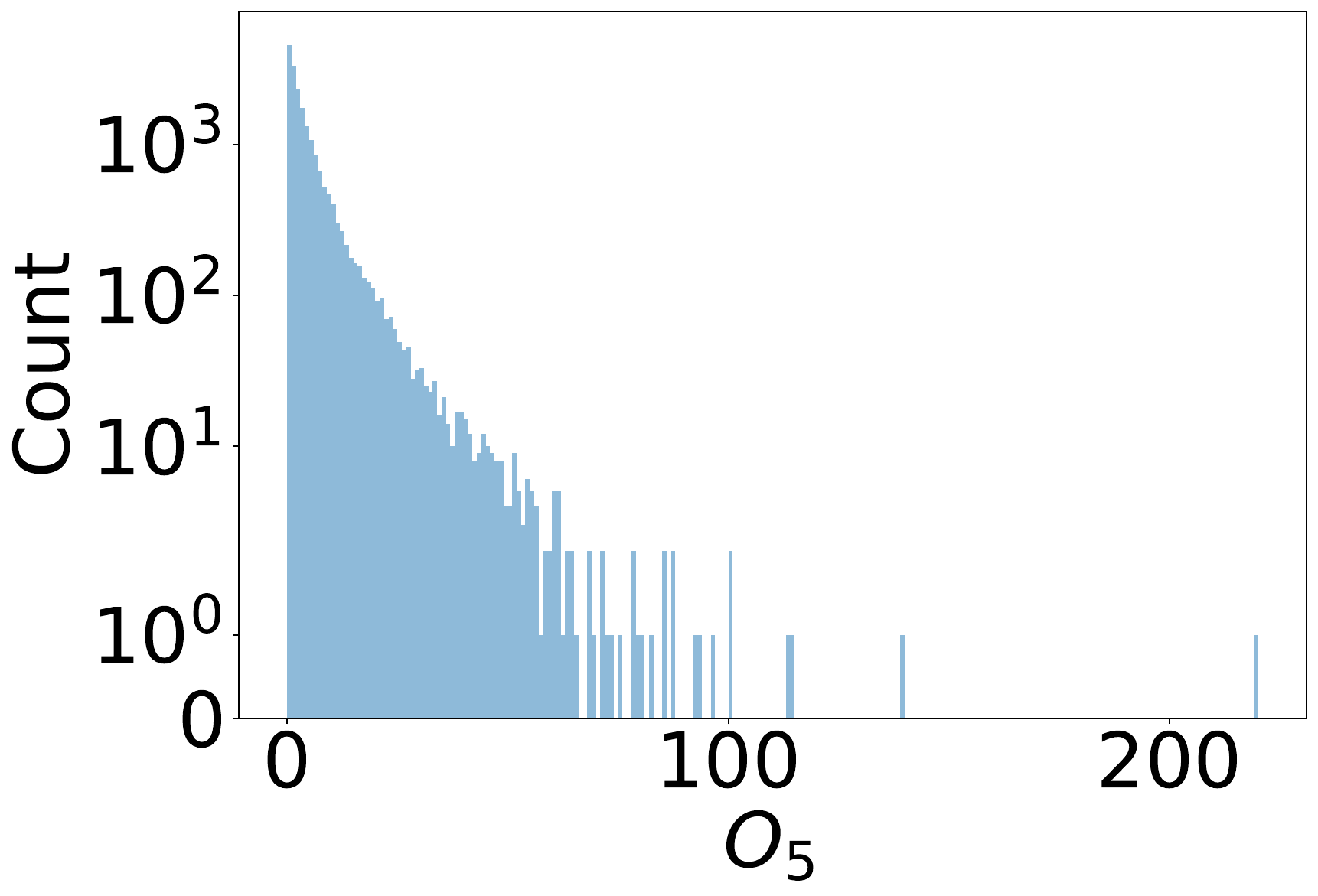}}
        \caption{
            \textbf{Distribution of $k$-occurrences} ($k=5$) for $20000$ samples
            from a standard Gaussian in \textit{(a)} $d=4$ and \textit{(b)} $d=32$.
            The y-axis shows the count (log scale) for each $k$-occurrence value (x-axis).
            As dimension increases, the distribution skews right, leading to the emergence of \emph{hubs} (frequent nearest-neighbors, high $O_k$) and \emph{antihubs} (never neighbors, $O_k=0$).
            }
        \label{fig:gaussian_occurrences}
    \end{center}
\end{figure}

\subsection{Gaussian case}
\label{sec:gaussian_case}

This phenomenon can be intuitively understood in the Gaussian case. Let $X, Y \sim \mathcal{N}(0, I_d)$ be independent standard Gaussian random vectors in $\mathbb{R}^d$.

\textbf{Spherical shell.} The squared norm $\|X\|^2$ follows a chi-squared distribution
with $d$ degrees of freedom, $\|X\|^2 \sim \chi^2(d)$, with mean $d$ and
standard deviation $\sqrt{2d}$. Therefore, in high dimensions, standard Gaussian samples concentrate on a thin spherical shell of radius $\sqrt{d}$ and thickness $O(d^{1/4})$ \cite{vershynin2018high}. See \Cref{fig:hubness_evolution} (top) for an illustration.

\textbf{Orthogonality.}
Two independent high-dimensional Gaussian vectors tend to be nearly orthogonal
\cite{vershynin2018high}.
The inner product $X \cdot Y = \sum_{i=1}^d X_i Y_i$
has a mean of $\sum_{i=1}^d \mathbb{E}[X_i]\mathbb{E}[Y_i] = 0$
and a variance of $\sum_{i=1}^d \Var(X_i Y_i) = d$.
By the law of large numbers,
$\frac{X \cdot Y}{d} \xrightarrow[]{\mathbb P} 0$ as $d \to \infty$.
Similarly,
$\frac{\|X\|^2}{d} \xrightarrow[]{\mathbb P} 1$ and
$\frac{\|Y\|^2}{d} \xrightarrow[]{\mathbb P} 1$. Thus, their cosine
similarity converges to zero:
\begin{equation*}
  \frac{X \cdot Y}{\|X\|\|Y\|} = \frac{X \cdot Y/d}{\sqrt{(\|X\|^2/d)(\|Y\|^2/d)}} \xrightarrow[]{\mathbb P} 0.
\end{equation*}
Hence, as the dimension $d$ increases, any pair of independent standard Gaussian
vectors becomes almost orthogonal.

The hubness phenomenon can then be interpreted via the Pythagorean theorem:
the squared distance between two independent samples $x$ and $y$ is approximately the sum of their squared distances to the origin,
$\|x - y\|^2 \approx \|x\|^2 + \|y\|^2$. 
As a result, points slightly closer to the origin than others tend to appear as nearest-neighbors for many points,
becoming hubs due to their relative centrality, while points far from the origin become antihubs (see \Cref{fig:occ_vs_dist_gaussian}). Importantly, removing those central points just causes the next most central points to become hubs. Hubness is not caused by a few outliers, but an inherent property of the distribution \cite{radovanovic2010hubs}.

\begin{figure}[t]
  \begin{center}
    \subfloat[Gaussian]{\includegraphics[width=0.30\textwidth]{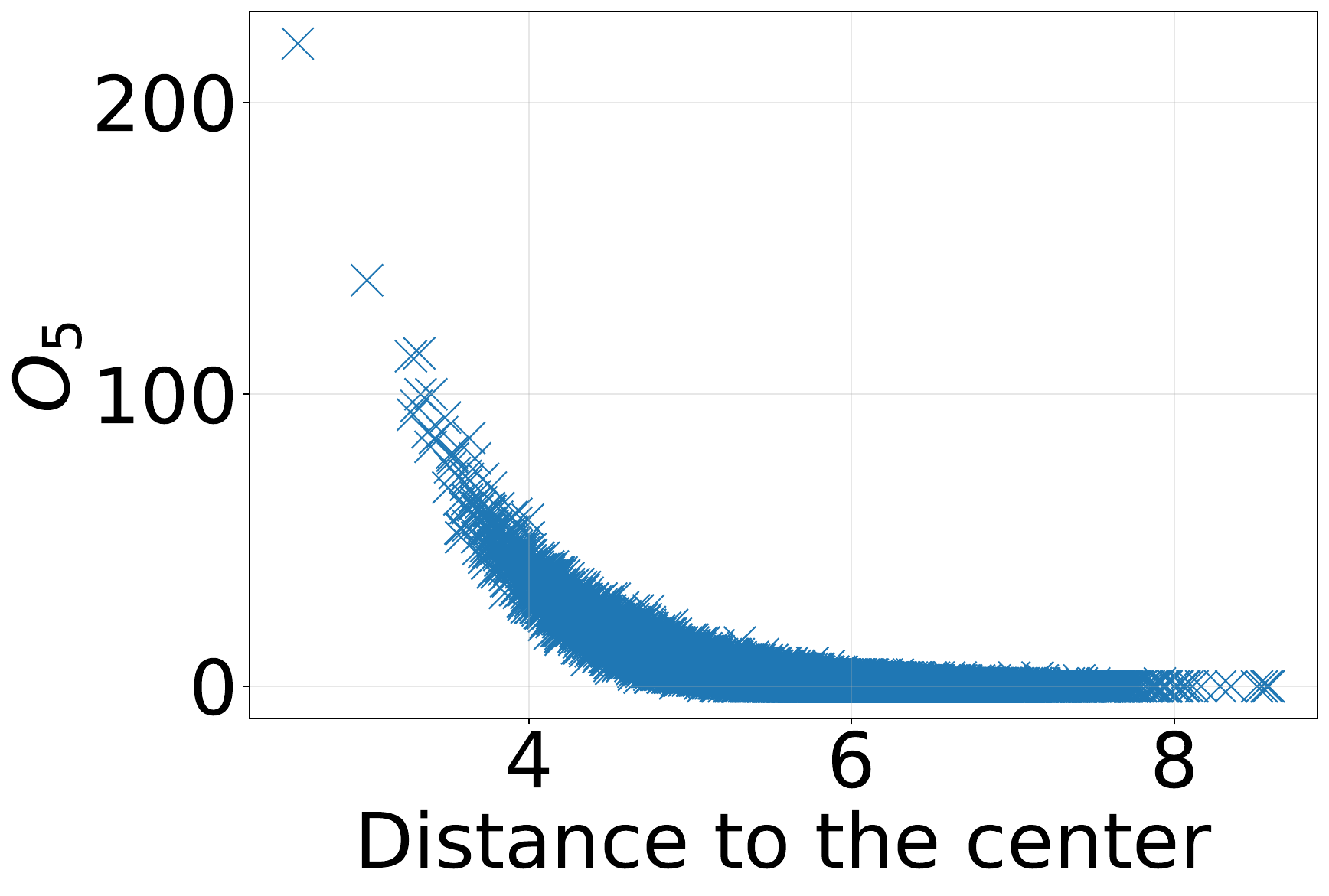}\label{fig:occ_vs_dist_gaussian}}
    \hfill
    \subfloat[Sphere]{\includegraphics[width=0.174\textwidth]{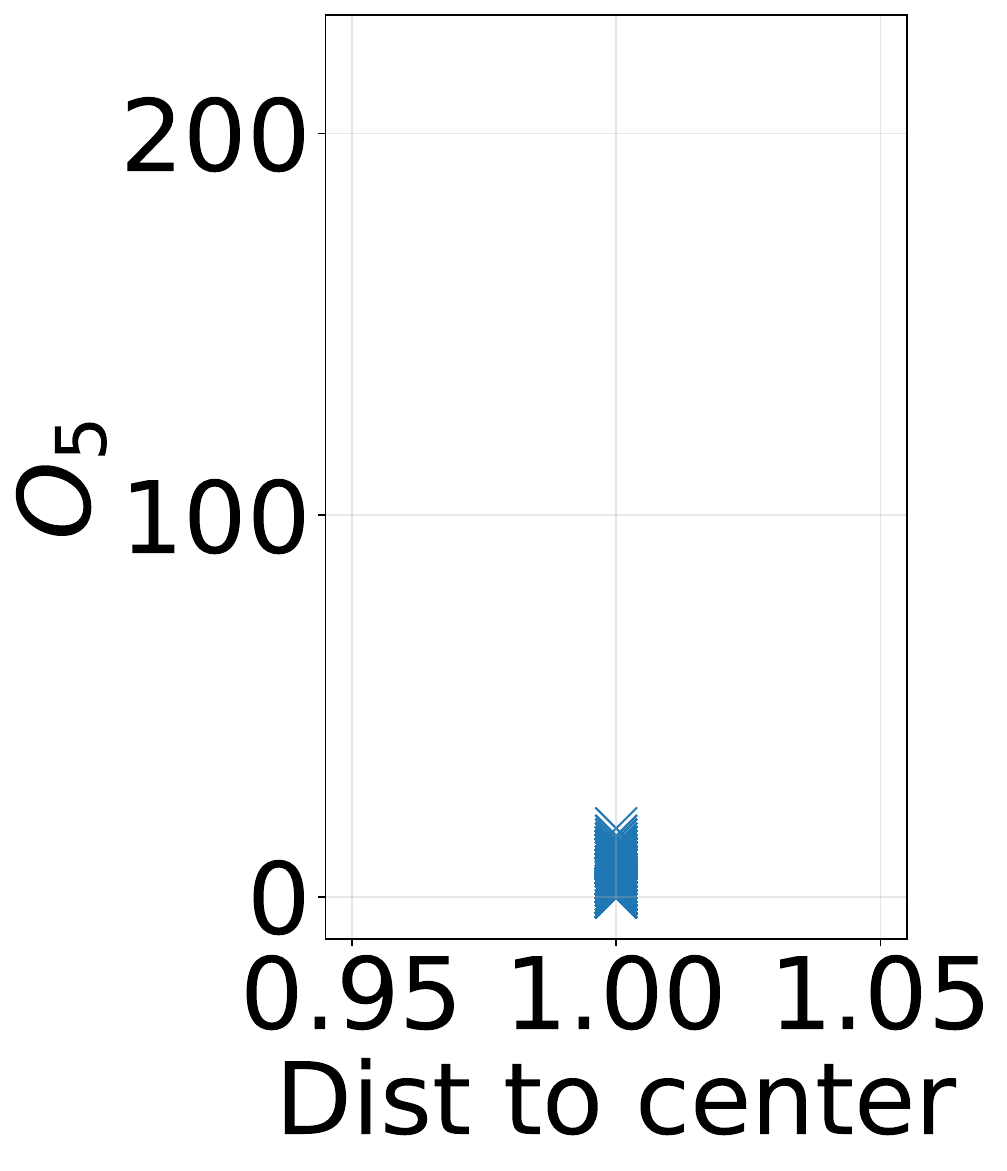}\label{fig:occ_vs_dist_sphere}}
    \caption{
      \textbf{$5$-Occurrence vs Centrality} ($N=20000$, $d=32$): 
      \textit{(a)} (adapted from \citet{radovanovic2010hubs}, Fig. 3) Standard Gaussian:
      hubness aligns with centrality; central points have higher occurrences.
      \textit{(b)} Uniform sphere: removing centrality eliminates hubness.
      So, high dimensionality alone does not cause hubness.
        }
    \label{fig:occ_vs_dist}
  \end{center}
\end{figure}

\subsection{Origin of Hubness: Density Gradient}

\begin{figure}[t]
  \begin{center}
    \begin{minipage}[c]{0.8\columnwidth}
      \subfloat[Gaussian illustration]{\includegraphics[width=0.48\textwidth, trim=1.3cm 2cm 1.3cm 1.3cm, clip]{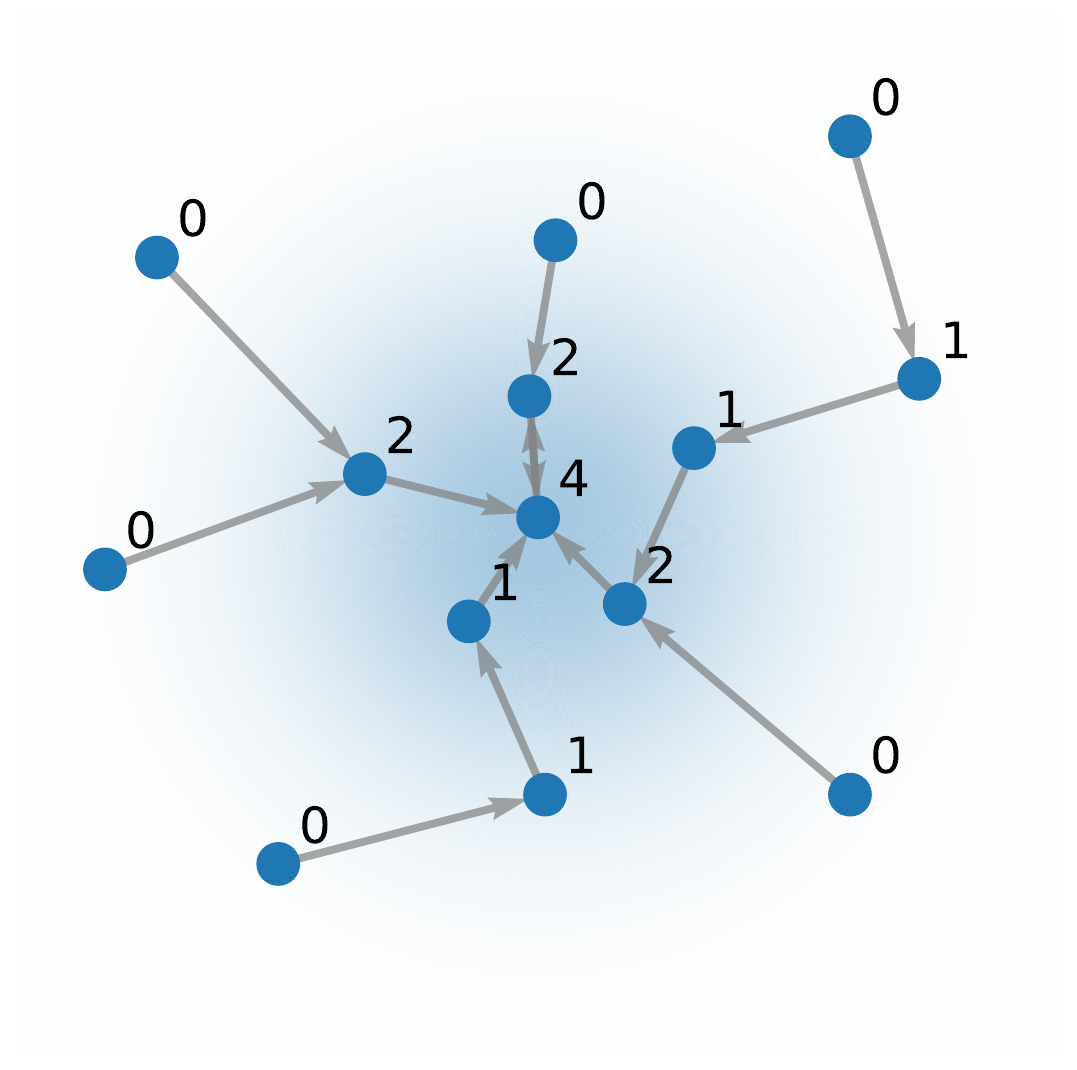}\label{fig:intuitive_gaussian}}
      \hfill
      \subfloat[Uniform Square]{\includegraphics[width=0.48\textwidth]{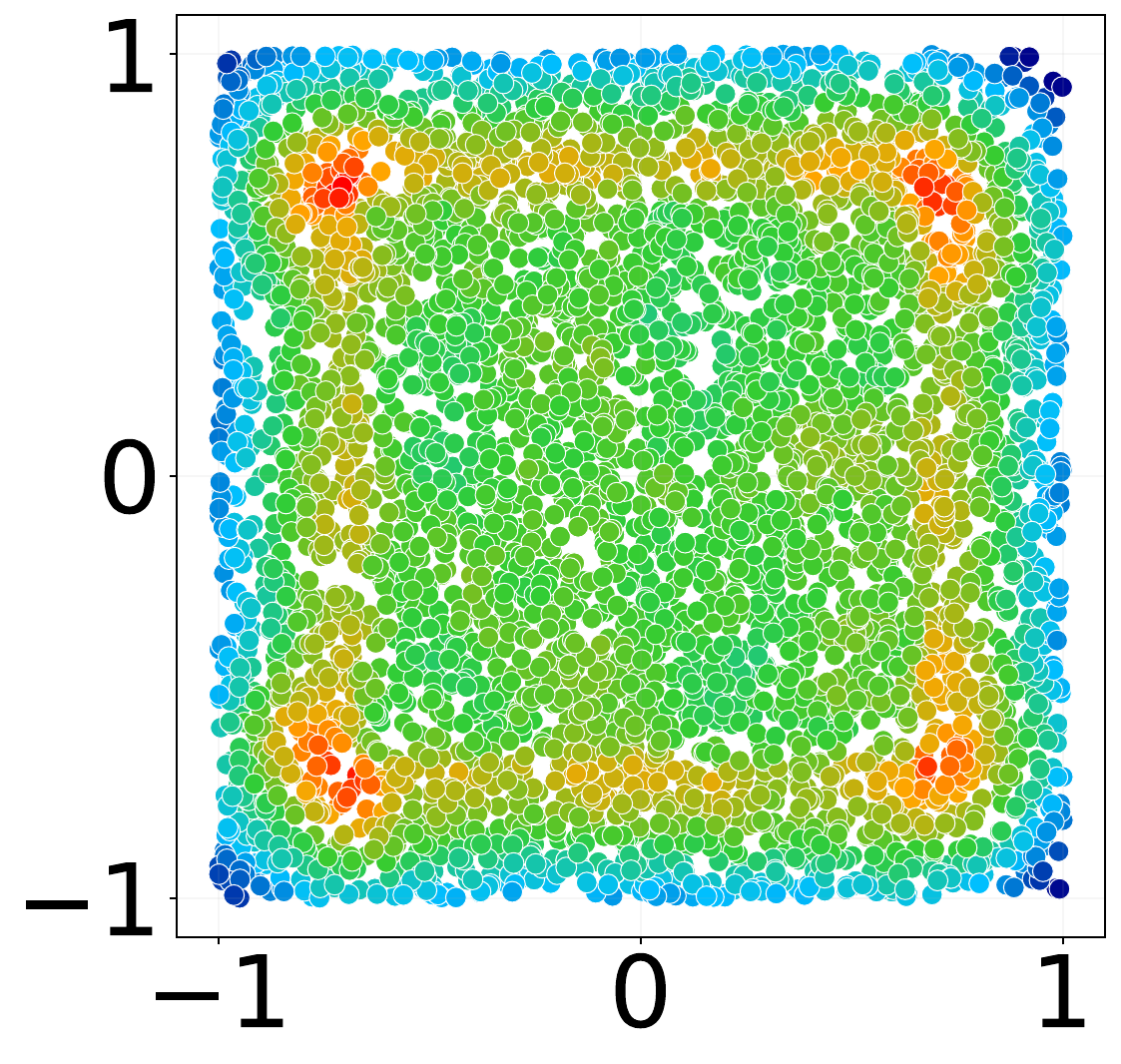}\label{fig:intuitive_square}}

        \vspace{0.3em}

      \centering
      \subfloat[Post NICDM]{
        \includegraphics[width=0.48\textwidth, trim=1.3cm 2cm 1.3cm 1.3cm, clip]{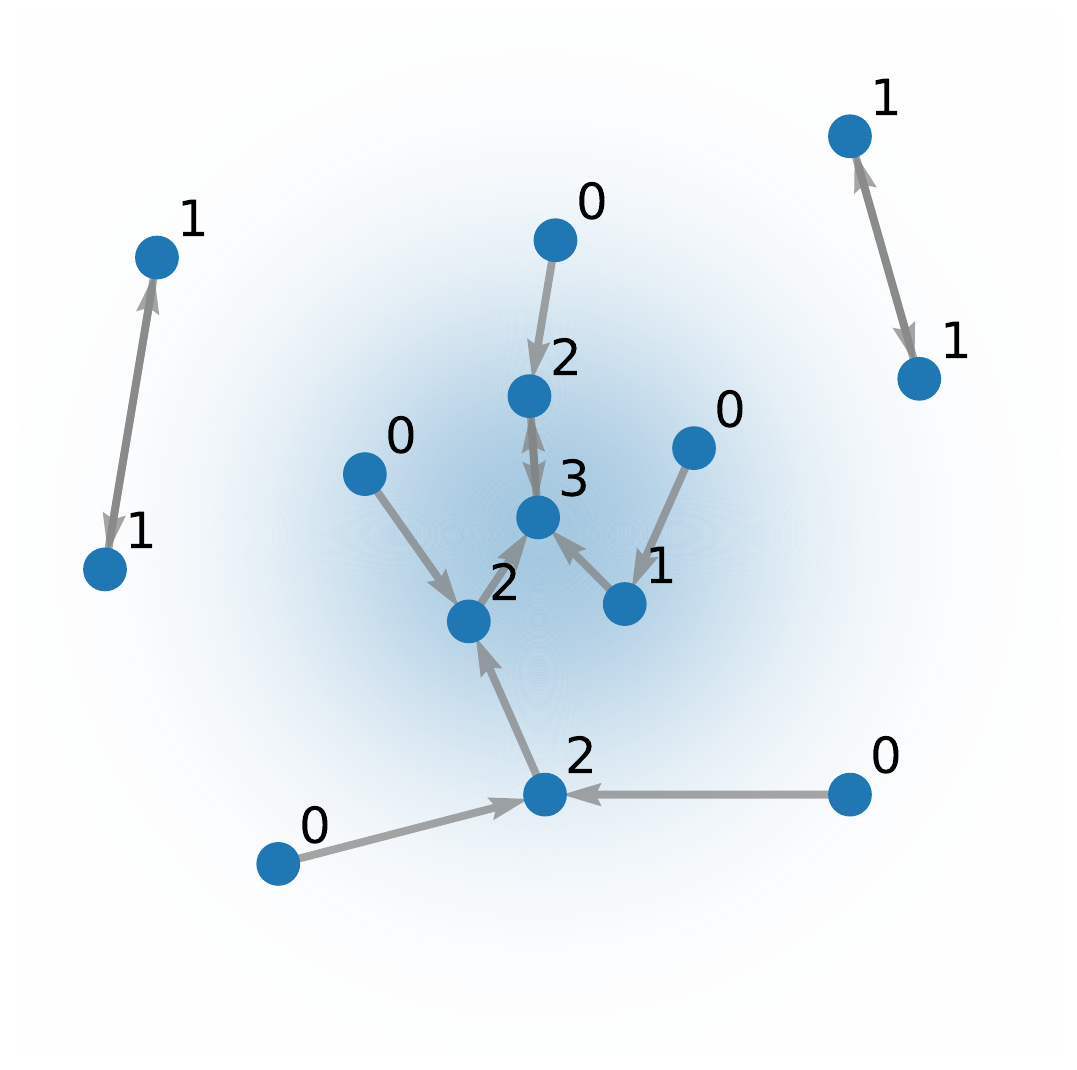}
        \hfill
        \includegraphics[width=0.48\textwidth]{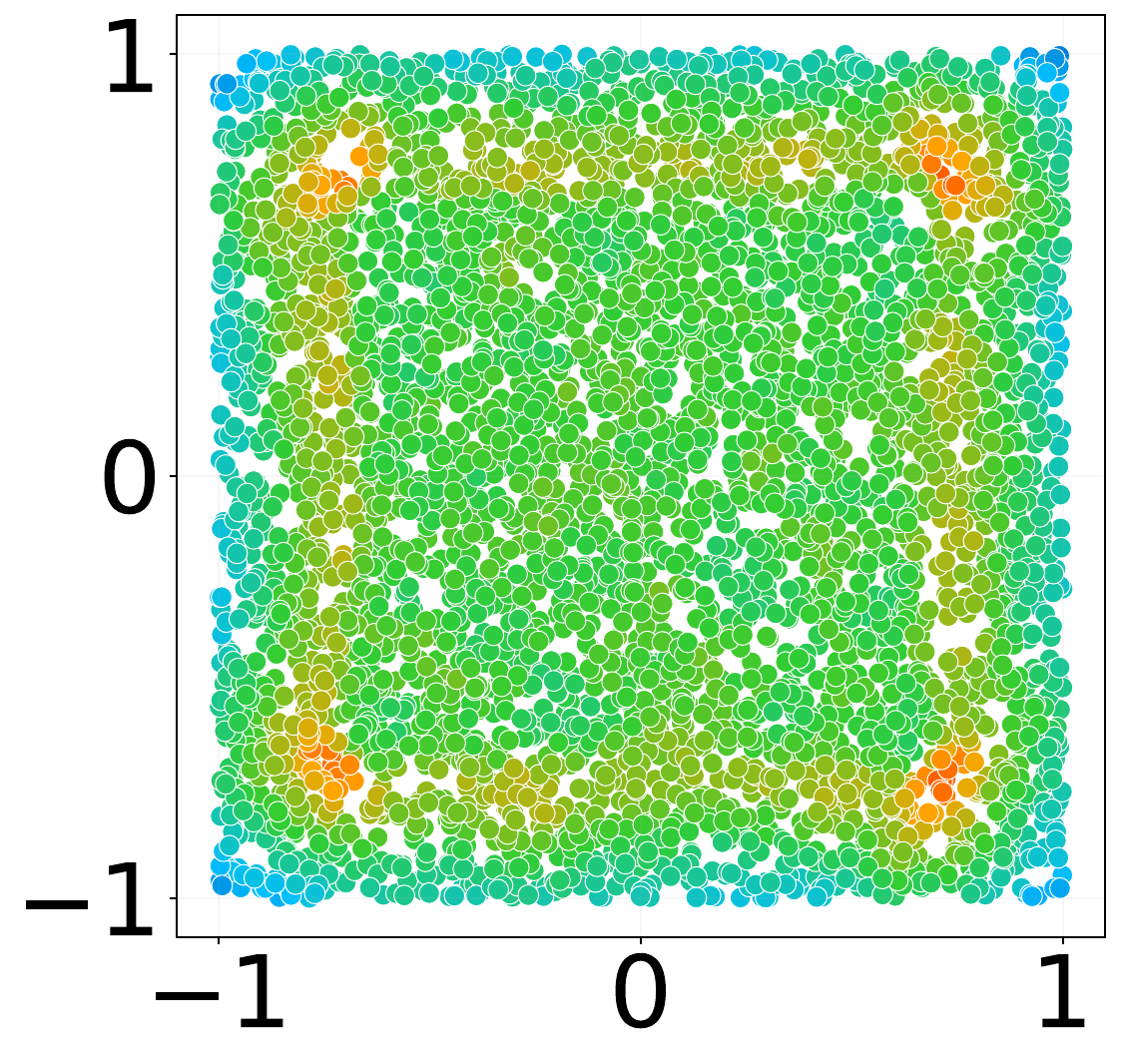}
        \label{fig:intuitive_reduction_nicdm}
      }

      \vspace{0.3em}

      \subfloat[Post ICDM]{
        \includegraphics[width=0.48\textwidth, trim=1.3cm 2cm 1.3cm 1.3cm, clip]{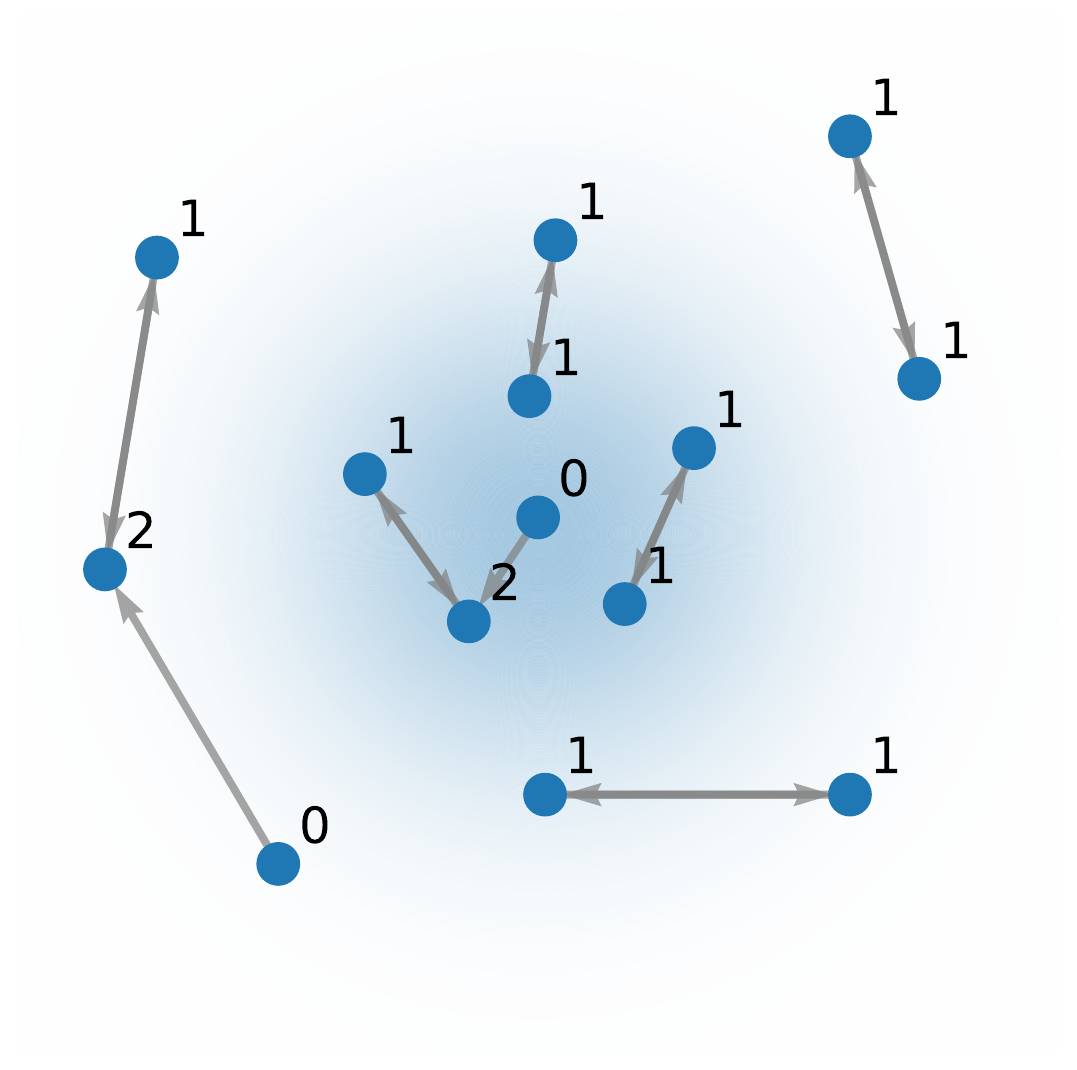}
        \hfill
        \includegraphics[width=0.48\textwidth]{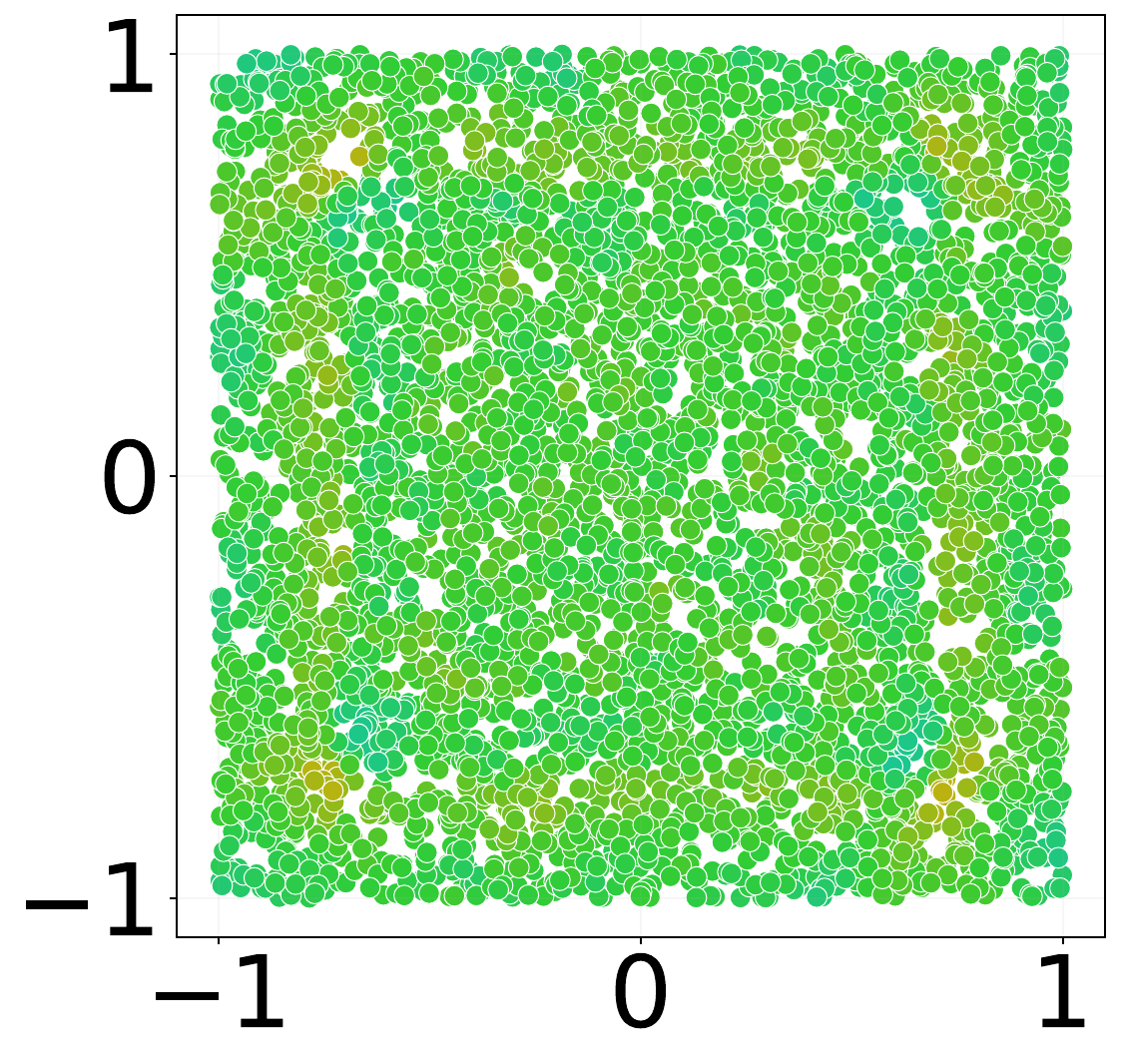}
        \label{fig:intuitive_reduction_icdm}
      }
    \end{minipage}
    \hfill
    \begin{minipage}[c]{0.17\columnwidth}
      \centering
      \includegraphics[width=\textwidth]{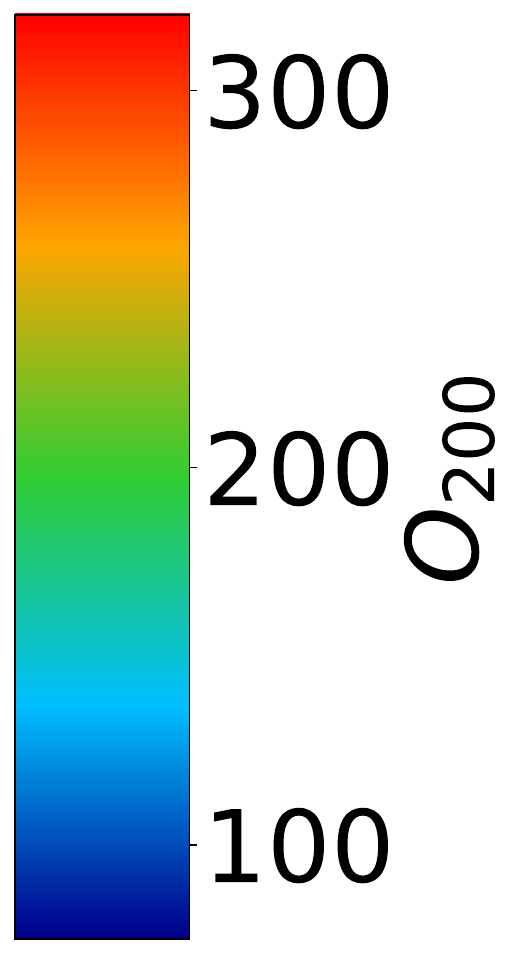}
    \end{minipage}

    \caption{\textbf{Density Gradient}: \textit{(a)} (adapted from \citet{hara2016flattening}, Fig. 2c):
    Illustrative Gaussian points (blue density) with $1$-occurrence counts; arrows indicate nearest-neighbors.
    Occurrences accumulate in denser regions, as arrows follow the gradient of the density.
    \textit{(b)}: $5000$ samples from a uniform square.
    Color indicates $200$-occurrence count. Boundaries force
    neighbors inward, creating density gradients and hubness;
    both effects intensify with increasing dimension.
    After applying \textit{(c)} NICDM and \textit{(d)} ICDM: Left, arrows show updated neighbors. The maximum $O_1$ decreases from $4$ to $3$ (NICDM) and $2$ (ICDM); the number of antihubs drops from $6$ to $5$ and $2$. Right, the occurrence distribution becomes more uniform after NICDM, and even more so after ICDM.
    }
    \label{fig:intuitive}
  \end{center}
\end{figure}

Hubness is often described as a manifestation of the curse of dimensionality
\cite{radovanovic2010hubs}, but for Gaussian data, it is closely linked to centrality. If centrality is removed by sampling uniformly on a sphere, hubness disappears entirely, even in high dimensions (see \Cref{fig:occ_vs_dist_sphere}).

\citet{low2013hubness} argue that hubness is fundamentally caused by
\emph{density gradients}. As illustrated in
\Cref{fig:intuitive_gaussian}, for a Gaussian distribution, samples tend
to have their nearest-neighbors in regions of higher density,
effectively following the gradient of the density function. Consequently,
points in denser regions accumulate more occurrences.

Boundaries also induce density gradients, since the density drops to
zero outside. \Cref{fig:intuitive_square} illustrates how
boundaries affect occurrences for a uniform distribution on a square
in $d=2$: near edges, neighbors can only be found inward.
The boundary-to-volume ratio increases
exponentially with dimension, amplifying this effect: in high dimensions,
almost all points are close to a boundary \cite{low2013hubness}.
A $d$-dimensional sphere is a finite,
boundary-less $(d-1)$-dimensional manifold. 
Sampling uniformly on it avoids hubness, even in high dimensions, 
because there is no density gradient. However, non-uniform sampling introduces 
a density gradient and thus hubness.
In a non-Gaussian setting, simply projecting points 
onto a sphere does not eliminate hubness (see \Cref{tab:hubness_measures}).

The density gradient is intrinsic to the data distribution,
so hubness cannot be eliminated simply by increasing the sample
size \cite{low2013hubness}. Furthermore, hubness
correlates with the intrinsic rather than the ambient dimension, meaning
that reducing the ambient dimension does not alleviate hubness unless information
is lost \cite{radovanovic2010hubs,flexer2015choosing}. This confirms
that hubness is fundamentally tied to the data distribution.

Importantly, hubness is not just a theoretical concern:
it impairs distance-based evaluation (see \Cref{fig:hypersphere_test})
and we observe it in practice within the very embedding spaces
used to evaluate generative models (see \Cref{sec:hubness_measures}, \Cref{tab:hubness_measures}).

\section{Overview of In-sample Hubness Reduction}
\label{sec:prior_work}

Restoring meaningful neighborhoods in high-dimensional spaces
has motivated the development of numerous hubness reduction methods.
They typically compute secondary dissimilarities between data points
to replace original distances for nearest-neighbor search.
Their performance has been compared across various scenarios in the literature
\cite{feldbauer2019comprehensive,amblard2022hubness,obraczka2022fast},
and we consider the union of their recommendations here.

Among these, Local DisSimilarity (DSL) \cite{hara2016flattening} aims to flatten the density directly, while the others are scaling methods that seek to restore symmetry in nearest-neighbor relationships by scaling distances.
By forcing symmetry, these scaling methods consistently increase nearest-neighbor classification accuracy \cite{feldbauer2019comprehensive}, indicating that they improve the semantic correctness of local neighbor relationships by reducing the influence of one-way relationships induced by hubness.
The Gaussian variant of Mutual Proximity ($\text{MP}^{\text{Gauss}}$) \cite{schnitzer2012local}
uses all pairwise distances for scaling, whereas the remaining methods are local scaling
approaches that use the
local neighborhood of each point to scale distances. These include
Local Scaling (LS) \cite{zelnik2004self}, Cross-domain Similarity Local Scaling (CSLS)
\cite{lample2018word}, and the Non-Iterative Contextual Dissimilarity Measure (NICDM)
\cite{jegou2007contextual,jegou2010contextual}, whose formulations are closely related.

Given the pairwise distances $\{d_{x_i,x_j}\}_{i,j=1}^N$ between a set
of points $\{x_i\}_{i=1}^N$, NICDM produces secondary dissimilarities as follows:
\begin{equation*}
\text{NICDM}(d_{x_i,x_j}) = d_{x_i,x_j}\frac{\bar{\mu}}{\sqrt{\mu_i \mu_j}}
\end{equation*}
where
\begin{equation*}
  \mu_i = \frac{1}{K} \sum_{k=1}^K d_{x_i, \NN_k(x_i)},
\end{equation*}
$\NN_k(x)$ is the $k$-th nearest-neighbor of $x$, and $\bar{\mu}$ is the average of all $\mu_i$.
Since $\bar{\mu}$ is a global constant, it can be ignored in practice \cite{schnitzer2012local}.
The resulting dissimilarity is low when both
$\sqrt{\frac{d_{x_i,x_j}}{\mu_i}}$ and $\sqrt{\frac{d_{x_j,x_i}}{\mu_j}}$
are small
(i.e., for reciprocal nearest-neighbors). Local scaling methods have $O(N^2)$ complexity.

Many hubness reduction methods were originally proposed in other contexts and only later adapted for hubness reduction.
The Contextual Dissimilarity Measure (CDM) was first introduced for
image retrieval to improve the symmetry of neighbor relationships
\cite{jegou2007contextual}.
Later, only the simpler Non-Iterative version (NICDM) was adapted for hubness reduction,
while the \emph{Iterative} version (ICDM) was dismissed as too computationally expensive without yielding significant improvement \cite{schnitzer2012local}.
Subsequent works followed this recommendation and used only NICDM \cite{feldbauer2019comprehensive,amblard2022hubness,obraczka2022fast}. ICDM has not been considered for hubness reduction since then.

We compared these methods for hubness reduction (see \Cref{tab:hubness_reduction}), and found that despite being overlooked, ICDM consistently achieves the best performance across all 17 tested pairs of datasets and embeddings.
\Cref{fig:intuitive_reduction_nicdm,fig:intuitive_reduction_icdm} illustrate the effects of NICDM and ICDM.
Both reduce hubness, as evidenced by the decrease in maximum $O_1$ and the number of antihubs (left),
and by the more uniform occurrence distribution (right).

The \emph{Iterative} CDM (ICDM) is obtained by repeatedly applying
the non-iterative version to progressively improve the symmetry of neighbor relationships.
Let $^{(t)}$ denote quantities at iteration $t$. The update rule is:
\begin{equation*}
    d^{(t+1)}_{x_i,x_j} =d^{(t)}_{x_i,x_j} \frac{\bar{\mu}^{(t)}}{\sqrt{\mu^{(t)}_i \mu^{(t)}_j}}
\end{equation*}
These iterations minimize the disparity among the $\mu^{(t)}_i$ values, $S^{(t)} = \sum_{i=1}^N |\mu^{(t)}_i - \bar{\mu}^{(t)}|$, converging to a fixed point where all $\mu^{(t)}_i$ are equal \cite{jegou2010contextual}.
Defining $\delta_i = \prod_{t=1}^{T} \sqrt{\frac{\bar{\mu}^{(t)}}{\mu^{(t)}_i}}$, the final result can
be expressed using the original distances: $d^{(T)}_{x_i,x_j} = d_{x_i,x_j} \delta_i \delta_j$.

In image retrieval, ICDM is first applied to a base dataset. Then, for a query point $x^q$, its nearest-neighbors in the base dataset are found using:
\begin{equation*}
  \NN_k(x^q) = \operatorname{k-argmin}_{i} d_{x^q,x_i} \delta_i
\end{equation*}
Finding the nearest base neighbors of $x^q$ does not require knowing the update term $\delta_q$ for the query point,
as it is constant across all comparisons \cite{jegou2010contextual}.
However, for out-of-sample evaluation, it is not sufficient to simply rank the neighbors of generated points.
For this, \emph{distances} to neighbors are required.
ICDM does not provide these directly, so \Cref{sec:method} presents an algorithmic solution.

To the best of our knowledge, only NICDM, not ICDM, has been considered previously for hubness reduction.

\section{Related Work: Generative Model Metrics}
\label{sec:related_work_metrics}

In this section, we review existing fidelity and coverage metrics for generative model evaluation.

Fidelity metrics usually assess whether generated samples
are realistic by measuring their proximity to real
samples. Improved Precision \cite{kynkaanniemi2019improved} computes the fraction of
generated samples within the $k$-nearest-neighbor radius of real samples: $\text{Precision} = \frac{1}{M} \sum_{j=1}^M \mathbf{1}_{x^g_j \in \cup_{i=1}^N B(x^r_i, \NND_k^r(x^r_i))}$.
Density \cite{naeem2020reliable} extends
this by counting how many real samples each generated sample is close to: $\text{Density} = \frac{1}{kM} \sum_{j=1}^M \sum_{i=1}^N \mathbf{1}_{x^g_j \in B(x^r_i, \NND_k^r(x^r_i))}$.

Conversely, coverage metrics evaluate how well generated samples cover the
real data distribution. Improved Recall \cite{kynkaanniemi2019improved}
measures the fraction of real samples within the $k$-nearest-neighbor
radius of generated samples:
$\text{Recall} = \frac{1}{N} \sum_{i=1}^N \mathbf{1}_{x^r_i \in \cup_{j=1}^M B(x^g_j, \NND_k^g(x^g_j))}$.
Coverage \cite{naeem2020reliable} instead checks if each real sample
is covered by at least one generated point within its real ball: $\text{Coverage} = \frac{1}{N} \sum_{i=1}^N \mathbf{1}_{\exists j, x^g_j \in B(x^r_i, \NND^r_k(x^r_i))}$.

Other metrics use similar constructions. For example, Precision Cover and Recall Cover \cite{cheema2023precision} count balls as covered
if at least $k' > 1$ points lie within them.
P-precision and P-recall
\cite{park2023probabilistic} use kernel density estimates instead of
hard balls.
Rather than using the union of balls as an approximate support, $\alpha$-Precision and $\beta$-Recall \cite{alaa2022faithful} use a one-class approach to estimate supports at varying levels, while Topological Precision and Recall \cite{kim2023topp} use topologically conditioned density kernels.
More closely related, Clipped Density and Clipped Coverage \cite{salvy2026enhanced}
are variants of Density and Coverage that limit the influence of individual
points to aggregate scores. In Clipped Density, for example, real ball radii are clipped to
reduce outlier effects, and the result is normalized by the real set score.

Hubness directly distorts these distance-based metrics: generated
hubs artificially cover many real points, inflating coverage scores,
while antihub regions act as blind
spots where generated samples receive zero fidelity.
This distortion severely biases the evaluation, yielding misleading
conclusions about generative model performance.

The work most closely related to ours identified a flaw in distance-based metrics
and noted that hubness is "very closely related" to their analysis,
though it is not explicitly targeted \cite{khayatkhoei2023emergent}.
Their focus is on the support constructed by Precision and Recall.
For a uniform hypersphere in high dimensions, the support
induced by Precision is biased toward the center, leading to different
outcomes for points at
the same distance, depending on whether they are inwards or outwards from
the sphere. To restore symmetry, they
propose using two alternative supports: one based on real points and one on generated
points, so that when one is biased, the other can be used. symPrecision and
symRecall are then defined as the minimum of Precision or Recall and its complement.
However, for non-unimodal distributions, both supports can be biased
simultaneously. As shown in \Cref{fig:hypersphere_test}, sym metrics fail for a bimodal
hypersphere. In contrast, GICDM addresses hubness more generally and is not tied to a specific metric.

\section{GICDM}
\label{sec:method}

Our goal is to enable reliable distance-based evaluation of
generative models in high-dimensional embedding spaces, where
hubness distorts nearest-neighbor relationships.
To address this, we identify three key desiderata for
effective hubness mitigation:

1. \textbf{Reduce hubness in the real dataset:}
In the transformed distance space, the local density of real data
should be uniform, so that nearest-neighbor relationships
are meaningful and symmetric.

2. \textbf{Preserve the relative positioning of generated points:}
Hubness mitigation should maintain the true local geometric structure between generated points and the real data manifold, removing only hubness effects.

3. \textbf{Conditional independence of generated points given the real set:} Distance-based metrics give sample-wise scores (e.g., a fidelity score to each individual generated sample).
So, the distance scaling of each generated point should
depend only on its relationship to the real set in the original space,
so that a given generated sample falls into the same real balls, receiving the same fidelity score, regardless of the other generated samples evaluated alongside it.

\subsection{Density Estimation from Nearest-Neighbors Distances}

ICDM iteratively applies a popularity penalty: it expands the space around points in relatively
high-density regions (those with small $\mu_i$, which receive large $\delta_i$), making them harder to reach,
and contracts the space around points in relatively low-density regions (i.e. large $\mu_i$, small $\delta_i$), making
them easier to reach. This process repeats until the average neighbor distance is nearly equal
for all points.

\newcommand{\densityEstimatorProp}{
  Let \begin{equation*}
    \hat{p}_{\mu, K}(x_i) \overset{\mathrm{def}}{=} \frac{1}{N V_d} \frac{1}{\mu_i^d} \left(\frac{1}{K} \sum_{k=1}^K k^{1/d}\right)^d
  \end{equation*}
  where $V_d$ is the volume of the unit ball in dimension $d$.
  Then, $\hat{p}_{\mu, K}(x_i)$ is a local density estimator.}
\begin{proposition}
  \label{prop:density_estimator}
  \densityEstimatorProp
\end{proposition}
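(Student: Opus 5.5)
The plan is to work in the idealized regime of a locally uniform density. Assume the underlying density $p$ is continuous and approximately constant, equal to $p(x_i)$, on a ball around $x_i$ large enough to contain its $K$ nearest neighbours. The aim is to show that $\hat{p}_{\mu,K}(x_i)$ is obtained by inverting the expected relationship between $\mu_i$ and $p(x_i)$. This is the same sense in which the classical $k$-NN estimator $\frac{k}{N V_d\, d_k^d}$ is a local density estimator, where $d_k = d_{x_i,\NN_k(x_i)}$.

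\textbf{Step 1: individual neighbour distances.} Take $N$ large and a locally constant density. The number of samples in the ball $B(x_i,r)$ is then approximately Poisson with mean $N p(x_i) V_d r^d$. The $k$-th neighbour distance $d_k$ therefore satisfies $N p(x_i) V_d d_k^d \approx k$, that is,
\[
d_k \approx \left(\frac{k}{N p(x_i) V_d}\right)^{1/d}.
\]
To make this rigorous, I would note that $N p V_d d_k^d$ is distributed as $\Gamma(k,1)$, with mean $k$ and relative fluctuations $O(k^{-1/2})$. I would then use the plug-in approximation $\mathbb{E}[d_k] \approx \bigl(\mathbb{E}[N p V_d d_k^d]\bigr)^{1/d}$. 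This is accurate when $d$ is large, and more generally it agrees with the exact value $\Gamma(k+1/d)/\Gamma(k)$ up to $O(1/(dk))$ corrections.

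\textbf{Step 2: average and invert.} Averaging over $k=1,\dots,K$ gives
\[
\mu_i \approx \bigl(N p(x_i) V_d\bigr)^{-1/d}\,\frac{1}{K}\sum_{k=1}^K k^{1/d}.
\]
Solving this for $p(x_i)$ gives exactly
\[
p(x_i) \approx \frac{1}{N V_d}\,\frac{1}{\mu_i^d}\left(\frac{1}{K}\sum_{k=1}^K k^{1/d}\right)^d = \hat{p}_{\mu,K}(x_i),
\]
which is the formula in \Cref{prop:density_estimator}. Consistency then follows the standard $k$-NN argument: let $K\to\infty$ and $K/N\to 0$. Because $d_k \le d_K \to 0$, the local-constancy approximation becomes exact. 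The law of large numbers applied to the normalized distances makes $\mu_i$ concentrate around its mean. Finally, by the continuous mapping theorem, $\hat p_{\mu,K}(x_i)\to p(x_i)$ in probability.

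\textbf{Main obstacle.} The hardest part is the nonlinearity: the estimator inverts the $d$-th power of an average, not an average of $d$-th powers. The fix is to show that each ratio $d_k/\mathbb{E}[d_k]$ concentrates. For $k$ large, the $\Gamma(k,1)$ concentration gives this. The small-$k$ terms contribute only $O(1/K)$ to the average, so $\mu_i/\mathbb{E}[\mu_i]\to 1$. Raising to the power $d$ (with $d$ fixed) then preserves convergence. If this proves messy, I would state the proposition as the paper likely intends: $\hat{p}_{\mu,K}$ is the density for which the expected $\mu_i$ matches its observed value under local uniformity, in the same sense as the $k$-NN density estimator.
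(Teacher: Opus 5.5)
Your proof is correct, and it reaches the result by a partly different route from the paper. The paper never models the neighbour distances. It uses the exact identity $d_{x_i,\NN_k(x_i)}=\bigl(k/(N V_d\,\hat p_{k\text{-NN}}(x_i))\bigr)^{1/d}$ for every $k\le K$ to rewrite
$\hat p_{\mu,K}(x_i)=\bigl(\sum_{k=1}^K w_{K,k}\,\hat p_{k\text{-NN}}(x_i)^{-1/d}\bigr)^{-d}$, with $w_{K,k}=k^{1/d}/\sum_{j\le K}j^{1/d}$. This is a weighted power mean of the classical $k$-NN estimators. The paper then transfers their consistency to it through the Silverman--Toeplitz theorem (positive weights summing to one, $w_{K,k}\to 0$ for fixed $k$) and continuity of $t\mapsto t^{-d}$.

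Your Steps 1--2 add something the paper does not provide. They explain the normalizing constant $\bigl(\frac1K\sum_k k^{1/d}\bigr)^d$ as the moment-matching inversion of $\mathbb{E}[\mu_i]$ under local uniformity. The paper only shows that this constant makes the weights sum to one.

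Your consistency step, by contrast, is the paper's Toeplitz argument written out by hand. Since $d_k/\bigl(k/(N p(x_i) V_d)\bigr)^{1/d}=\bigl(p(x_i)/\hat p_{k\text{-NN}}(x_i)\bigr)^{1/d}$, concentration of these ratios for large $k$ is exactly per-$k$ consistency of $\hat p_{k\text{-NN}}$. Negligibility of small $k$ is the condition $w_{K,k}\to 0$.

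Working it out directly has a real advantage. It addresses what the paper glosses over when it applies a theorem about deterministic sequences to random, $N$-dependent terms. You need convergence uniformly over $k_0\le k\le K(N)$. A union bound on the Gamma tails with $k_0\to\infty$, $k_0/K\to 0$ gives this; so does monotonicity of $k\mapsto d_k$ applied on a finite grid of ranks. You also need the first $k_0$ terms to be negligible, which your bound $d_k\le d_{k_0}$ provides. Their relative contribution is then $O(k_0/K)$, not $O(1/K)$.

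A few points to tighten:
\begin{itemize}
\item The phrase ``law of large numbers'' is not the right tool, since the $d_k$ are dependent order statistics. Your concentration argument is what actually does the work.
\item The $\Gamma(k,1)$ law is exact only under a homogeneous Poisson model. For a general continuous $p$ with $p(x_i)>0$, use instead that the ball mass $P\bigl(B(x_i,d_k)\bigr)$ is exactly $\mathrm{Beta}(k,N-k)$, together with $P\bigl(B(x_i,r)\bigr)/\bigl(p(x_i)V_d r^d\bigr)\to 1$ as $r\to 0$.
\item Your fallback formulation (expected $\mu_i$ matching the observed value) would be strictly weaker than what the paper claims. The paper claims consistency in probability as $K\to\infty$, $K/N\to 0$, so keep the consistency argument.
\end{itemize}
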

The proof is provided in \Cref{sec:proof_density_estimator}. $\hat{p}_{\mu, K}$ is closely related to the $K$-NN density estimator \cite{loftsgaarden1965nonparametric}, but uses the average $K$-NN distance
$\mu_i$ instead of the $K$-th nearest-neighbor distance.
\begin{corollary}
  At ICDM convergence, when $\forall i, |\mu_i - \bar{\mu}| < \epsilon$ for an arbitrarily small $\epsilon > 0$, the local density estimates $\hat{p}_{\mu, K}(x_i)$ are equal for all $x_i$.
  \label{cor:uniform_density}
\end{corollary}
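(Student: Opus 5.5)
The corollary follows from \Cref{prop:density_estimator} by a continuity argument. The plan is to observe that, for fixed $N$, $d$ and $K$, the estimator is a function of $\mu_i$ alone:
\begin{equation*}
  \hat{p}_{\mu, K}(x_i) = g(\mu_i), \qquad g(\mu) \overset{\mathrm{def}}{=} \frac{C_{N,d,K}}{\mu^d}, \qquad C_{N,d,K} = \frac{1}{N V_d}\left(\frac{1}{K}\sum_{k=1}^K k^{1/d}\right)^d .
\end{equation*}
The constant $C_{N,d,K}$ does not depend on $i$. Every point-specific quantity enters only through $\mu_i$, computed in the transformed distance space produced by ICDM. Equal $\mu_i$ therefore give equal estimates, and nearly equal $\mu_i$ give nearly equal estimates.

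The key steps are as follows.

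\textbf{Step 1: the exact fixed point.} If $\mu_i = \bar{\mu}$ for all $i$, then $\hat{p}_{\mu,K}(x_i) = C_{N,d,K}/\bar{\mu}^d$ for every $i$. This case is immediate.

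\textbf{Step 2: the approximate case.} Suppose instead $|\mu_i - \bar{\mu}| < \epsilon$ for all $i$. Since $\bar{\mu} > 0$ (distances are positive for distinct points), take $\epsilon < \bar{\mu}/2$. Then every $\mu_i$ lies in $[\bar{\mu}/2, 3\bar{\mu}/2]$. On this interval $g$ is Lipschitz, with constant $L = d\, C_{N,d,K} (\bar{\mu}/2)^{-d-1}$ by the mean value theorem. This gives
\begin{equation*}
  |\hat{p}_{\mu,K}(x_i) - \hat{p}_{\mu,K}(x_j)| \le L\,|\mu_i - \mu_j| < 2L\epsilon .
\end{equation*}
Equivalently, the ratio $\hat{p}_{\mu,K}(x_i)/\hat{p}_{\mu,K}(x_j) = (\mu_j/\mu_i)^d$ tends to $1$ as $\epsilon \to 0$.

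\textbf{Step 3: conclusion.} Since $\epsilon$ is arbitrarily small, the estimates coincide in the limit. They are equal in the sense intended by the statement, namely up to a discrepancy that vanishes with $\epsilon$.

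The main subtlety is interpretive rather than technical. \Cref{prop:density_estimator} must be applied to the ICDM-transformed dissimilarities $d^{(T)}$, not to the original distances. I would state explicitly that $\mu_i$ refers to the average $K$-NN distance in the current iteration's metric, which is exactly the quantity ICDM equalizes. I would also note that $d$ here is the (intrinsic) dimension used in the estimator, so the bound in Step 2 degrades with $d$. For fixed $d$ it still vanishes as $\epsilon \to 0$.
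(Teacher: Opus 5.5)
Your proposal is correct and takes essentially the same approach as the paper. The paper gives no separate proof: it treats the corollary as immediate from the fact that $\hat{p}_{\mu,K}(x_i)$ depends on $i$ only through $\mu_i$, so equal $\mu_i$ yield equal estimates. Your Lipschitz bound in Step 2, together with the caveat about its dependence on $d$, is a quantitative refinement of the ``arbitrarily small $\epsilon$'' wording that the paper leaves implicit.
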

Thus, ICDM effectively uniformizes the data density, removing density gradients and thereby reducing hubness.

Applying ICDM directly to the union of real and generated points violates all
of our desiderata: the uniformized density would be that of the combined set, not
just the real data (violating desideratum 1). Secondary dissimilarities would
be influenced by generated points, making each generated point’s evaluation
depend on the others (violating desideratum 3), and dissimilarities between
real points would be affected by generated data (violating all desiderata).

\subsection{Extending ICDM to Generated Points}

Let $\{x^r_i\}_{i=1}^{N}$ denote real data
points and $\{x^g_j\}_{j=1}^{M}$ denote generated data points. Superscripts
$^r$ and $^g$ indicate quantities associated with real
and generated points, respectively.

To uniformize the density of the real points only (desideratum 1),
we first apply ICDM to the real set, yielding scaling factors
$\{\delta^r_i\}_{i=1}^{N}$ such that secondary dissimilarities
between real points are $\text{ICDM}(x^r_i,x^r_j) = d_{x^r_i,x^r_j} \delta^r_i \delta^r_j$.

We now define secondary dissimilarities between generated and real points as $d'_{x^r_i,x^g_j} = d_{x^r_i,x^g_j} \delta^r_i \delta^g_j$, where $\delta^g_j$ is to be determined for each generated point $x^g_j$. To satisfy desiderata 2 and 3, $\delta^g_j$ must be computed independently for each generated point, using only its relationship to the real set.

For real points, considering $K$ neighbors, the average neighbor distance
is $\mu^r(x_i) = \frac{1}{K} \sum_{k=1}^K d_{x_i, \NN_k(x_i)} = \frac{1}{K} \sum_{m=1}^{K+1} D^{r,i}_m$,
where $D^{r,i}_m$ is the $m$-th smallest distance from $x_i$ to the real set,
with $D^{r,i}_1 = 0$ (distance to itself). Thus, for generated points,
the appropriate neighborhood depth in terms of order statistics is $K+1$, not $K$.

Let $^{eq}$ denote quantities at equilibrium after applying ICDM to the real
set. For all $i$, $|\mu^{r,eq}_i - \overline{\mu^{r,eq}}| < \epsilon$ for an arbitrarily small $\epsilon > 0$, i.e., the average
neighbor distance is practically equalized across real points.

Assuming that a generated sample $x^g_j$ is drawn from the same distribution as the
real samples, we require it to exhibit the same spatial properties as the real data. In
particular, its average neighbor distance must match that of the real
samples: $|\mu^{g,eq}_j - \overline{\mu^{r,eq}}| < \epsilon$. By enforcing exact equality at the theoretical equilibrium $\mu^{g,eq}_j = \overline{\mu^{r,eq}}$, we can write for all $i, j$,
\begin{align*}
  \mu^{g,eq}_j&=\frac{1}{K+1}\sum_{k=1}^{K+1} \delta^g_j \delta^r_{\NN^{r,eq}_k(x^g_j)} d_{x^g_j,\NN^{r,eq}_k(x^g_j)}\\
  &=\delta^g_j \frac{1}{K+1}\sum_{k=1}^{K+1} \delta^r_{\NN^{r,eq}_k(x^g_j)} d_{x^g_j,\NN^{r,eq}_k(x^g_j)},
\end{align*}
where $\NN^{r,eq}_k$ denotes the $k$-th nearest real neighbor
after ICDM (known using $\delta^r_i$). This yields:
\begin{equation}
  \delta^g_j=\frac{\overline{\mu^{r,eq}}}{\frac{1}{K+1}\sum_{k=1}^{K+1} \delta^r_{\NN^{r,eq}_k(x^g_j)} d_{x^g_j,\NN^{r,eq}_k(x^g_j)}}
  \label{eq:delta_g}
\end{equation}
Thus, we define the unfiltered secondary dissimilarity as
$\text{GICDM}_\text{unfiltered}(x^r_i, x^g_j) = d_{x^r_i, x^g_j} \delta^r_i \delta^g_j$,
with $\delta^g_j$ as above.

This assumes that generated points are drawn from the same distribution
as the real data. In practice, however, we do not know a priori whether a
generated point comes from the real data distribution, this is precisely
what we aim to assess.

\subsection{Preventing Overcorrections}
If a generated point is not well aligned with the real data manifold and we are outside the crossover regime (discussed in the next subsection), its distances to its real neighbors,
$d_{x^g_j,\NN^{r,eq}_k(x^g_j)}$, will not be consistent
with those observed among real points.
Consequently, the scaling factor $\delta^g_j$ computed
for this generated point will differ substantially
from the $\delta^r_i$ values of its real neighbors.

The $\delta$ values quantify the scaling applied to each point's neighborhood.
A significant discrepancy between $\delta^g_j$ and those
of its real neighbors indicates that
the generated point does not lie within the same region as the real data.

To address this, we compare $\delta^g_j$ to the average $\delta$ of its real neighbors,
$\bar{\delta}^{r|g}_j = \frac{1}{K + 1} \sum_{k=1}^{K+1} \delta^r_{\NN^{r,eq}_k(x^g_j)}$,
and filter out generated points for which $\left|\bar{\delta}^{r|g}_j - \delta^g_j\right|/\bar{\delta}^{r|g}_j$
exceeds the $q$-quantile of $\{|\bar{\delta}^{r|r}_i - \delta^r_i|/\bar{\delta}^{r|r}_i\}_{i=1}^{N}$ computed over real points,
where $\bar{\delta}^{r|r}_i = \frac{1}{K} \sum_{k=1}^{K} \delta^r_{\NN^{r,eq}_k(x^r_i)}$ and $q$ is set to $0.95$ in our experiments.
These filtered points fall outside the real data manifold. As they are not in any real ball, their fidelity score is zero and they do not contribute to coverage.

\subsection{Multi-Scale Filtering}

\begin{figure}[t]
    \begin{center}
        \includegraphics[width=0.7\columnwidth]{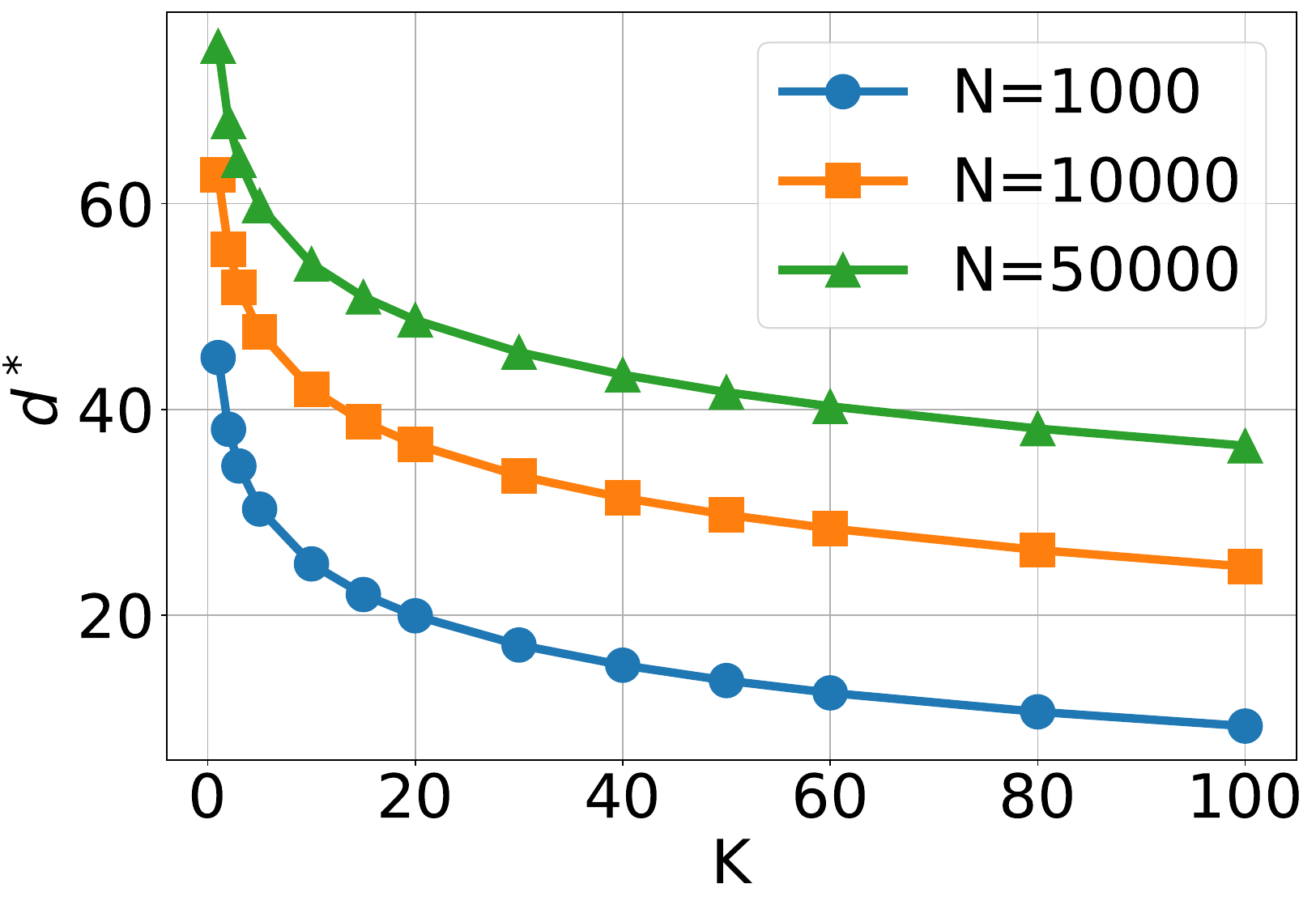}
         \caption{\textbf{Crossover dimension $d^*$}: At $d^*$ for a standard Gaussian,
         the median $K$-NN squared distance and the expected squared distance
         to the center equalize. Points are closer to each other below
         $d^*$ and closer to the center above.
         The plot shows $d^*$ vs $K$ using \Cref{proposition:crossover_dimension},
         for various sample sizes $N$.
         As $N$ increases, points become closer so $d^*$ increases; as $K$ increases, $K$-NN distances increase so $d^*$ decreases.
         Using two sufficiently different $K$ values ensures at least one avoids the crossover regime.}
        \label{fig:crossover_dimension}
    \end{center}
\end{figure}

A generated point not well aligned with the real data manifold can have similar distances to its real neighbors as those observed among real points, thus erroneously avoiding the filtering, if the local density structure falls into the \emph{crossover regime}, which depends on the density, $N$, $d$ and $K$.

To illustrate this, consider samples from a standard Gaussian.
In low dimensions, points are closer to each other than to the center,
while the reverse occurs in high dimensions.
In between is a transition regime where points are as far from
each other as from the center.
In this intermediate regime, a generated point
can be "far" from the real data (e.g., in the empty center),
yet still have a $\delta$ value similar to its real neighbors, because its distances to them
are similar to their distances to their own neighbors.
\newcommand{\crossoverDimensionProp}{
  Let $X_1,\dots,X_N$ be i.i.d. samples from $\mathcal N(0,I_d)$.
  The \emph{crossover dimension} $d^*$ such that the median squared $k$-nearest-neighbor distance equals the expected squared distance to the center, $\operatorname{median}(\NND_k(X_i)^2)=\mathbb E\|X_i\|^2$, is the solution of
  \begin{equation*}
    \int_0^\infty \mathbb{P}\left(\mathrm{Bin}(N-1,F_{\chi^2_{d^*}(\lambda=r)}({d^*}))\geq k\right) f_{\chi^2_{d^*}}(r) dr = \frac{1}{2}
  \end{equation*}
  where $f_{\chi^2_{d^*}}(r)$ is the density of the $\chi^2_{d^*}$ distribution and $F_{\chi^2_{d^*}(\lambda)}$ is the cumulative distribution function of the noncentral $\chi^2_{d^*}$ distribution with noncentrality parameter $\lambda$.
}
\begin{proposition}
  \label{proposition:crossover_dimension}
  \crossoverDimensionProp
\end{proposition}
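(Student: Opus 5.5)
The plan is to compute the distribution of the squared $k$-NN distance of a fixed sample $X_i$ by conditioning on its position, and then impose the median condition. First note that $\mathbb E\|X_i\|^2=d$, since $\|X_i\|^2\sim\chi^2_d$. So the crossover condition $\operatorname{median}(\NND_k(X_i)^2)=d$ is equivalent to
\begin{equation*}
\mathbb P\left(\NND_k(X_i)^2\le d\right)=\tfrac12 .
\end{equation*}
This holds provided the distribution function of $\NND_k(X_i)^2$ is continuous and strictly increasing near $d$, which is true because everything has a density.

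Next, condition on $X_i=x$ and write $r=\|x\|^2$. For each of the other $N-1$ points $X_j\sim\mathcal N(0,I_d)$, independent of $X_i$, the vector $X_j-x$ is distributed as $\mathcal N(-x,I_d)$. Hence $\|X_j-x\|^2$ follows a noncentral $\chi^2_d$ distribution with noncentrality $\lambda=\|x\|^2=r$. This law depends on $x$ only through $r$, by rotational invariance. Given $X_i$, the events $\{\|X_j-X_i\|^2\le d\}$ for $j\neq i$ are therefore i.i.d. Bernoulli with success probability $F_{\chi^2_d(\lambda=r)}(d)$. Their count is $\mathrm{Bin}(N-1,F_{\chi^2_d(\lambda=r)}(d))$.

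The key identity is that the $k$-th smallest of these distances is at most $\sqrt d$ exactly when at least $k$ of them are at most $\sqrt d$, that is,
\begin{equation*}
\{\NND_k(X_i)^2\le d\}=\{\#\{j\ne i:\|X_j-X_i\|^2\le d\}\ge k\}.
\end{equation*}
Taking conditional probabilities gives $\mathbb P(\mathrm{Bin}(N-1,F_{\chi^2_d(\lambda=r)}(d))\ge k)$. I then integrate over the law of $r=\|X_i\|^2$, whose density is $f_{\chi^2_d}$, using the law of total probability. This yields the displayed integral as a function of $d$, and setting it equal to $\tfrac12$ gives the defining equation for $d^*$.

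The main delicate point is not the computation but the interpretation of $d^*$ as "the" solution. The dimension must be treated as a continuous parameter, which is fine because $\chi^2_d$ and its noncentral version are defined for real $d>0$. I would state the result as a characterization: any $d^*$ satisfying the crossover condition solves the equation. Proving existence and uniqueness would require monotonicity in $d$ of the left-hand side, which I do not attempt.

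Only a heuristic check is offered for that behaviour. For small $d$ the $k$-NN distances are tiny relative to $\sqrt d$, so the integral is near $1$. For large $d$, $\|X_j-X_i\|^2$ concentrates around $2d$, so $F_{\chi^2_d(\lambda=r)}(d)\to0$ and the integral tends to $0$. Continuity of the integral then gives existence of a solution by the intermediate value theorem, which is consistent with the curves in \Cref{fig:crossover_dimension}.
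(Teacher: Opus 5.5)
Your proof is correct and follows essentially the same route as the paper's. Both condition on $\|X_i\|^2=r$, use that $\|X_j-X_i\|^2\sim\chi^2_d(\lambda=r)$, write the conditional CDF of the $k$-th order statistic as $\mathbb P(\mathrm{Bin}(N-1,F(t))\ge k)$, integrate against $f_{\chi^2_d}$, and evaluate at $t=d$. Your added remarks are careful points the paper leaves implicit: the continuity and strict monotonicity of the CDF, which make the median condition equivalent to $\mathbb P(\cdot\le d)=\tfrac12$, and the existence of a solution.
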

The proof is in \Cref{sec:proof_crossover_dimension}.
Using \Cref{proposition:crossover_dimension}, we numerically estimate $d^*$.
\Cref{fig:crossover_dimension} plots $d^*$ versus $K$ for various $N$.
As expected,
increasing $N$ brings points closer, raising $d^*$, while increasing $K$
increases $K$-NN distances, lowering $d^*$.
Crucially, $d^*$ depends on both $N$ and $K$.

While this analysis relies on the Gaussian case to quantify the crossover
regime, the underlying mechanism applies generally. For any data
distribution, the $K$-th nearest-neighbor distance monotonically increases with $K$.
Consequently, changing $K$ inherently shifts the scale of local neighborhoods
and, by extension, shifts the targeted crossover regime where
out-of-manifold distances coincidentally match local in-manifold distances.
By choosing two sufficiently different $K$ values for filtering
(e.g., $K_2 = 10 K_1$), we can ensure that at least one
is outside the problematic crossover regime.

\begin{figure}[t]
    \begin{center}
        \includegraphics[width=1\columnwidth]{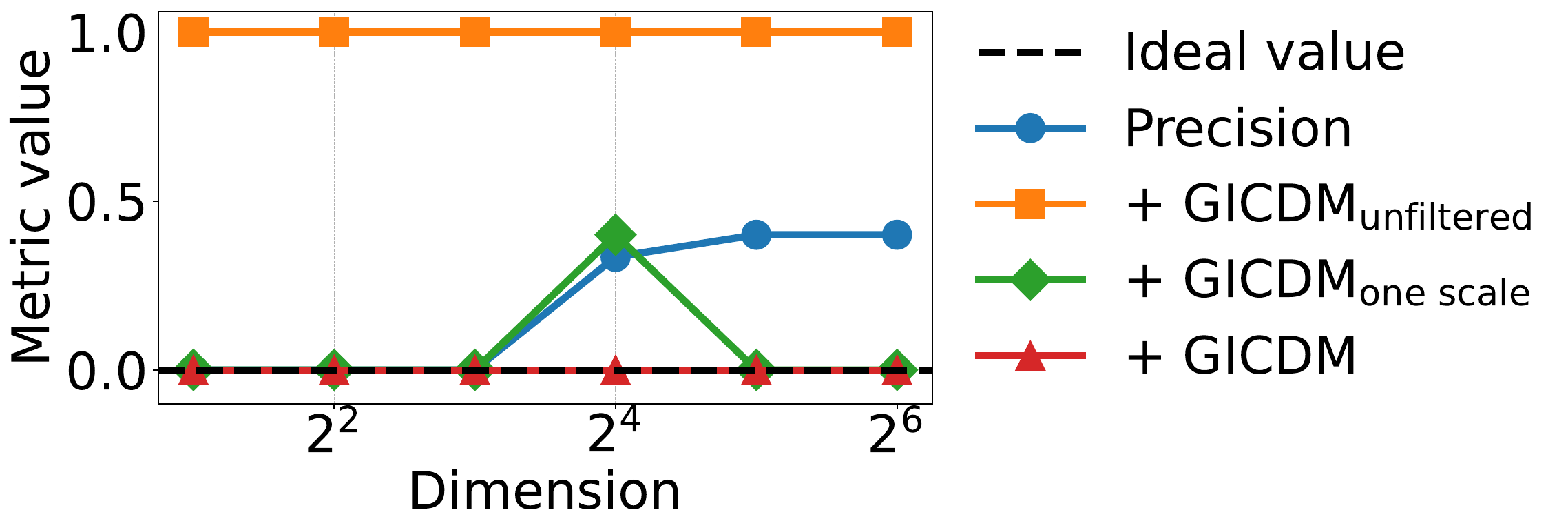}
        \caption{\textbf{Hypersphere Test ablation}: In the hypersphere scenario
            from \Cref{fig:hypersphere_test}, GICDM 
            components are incrementally added to Precision.
            Unfiltered GICDM overcorrects generated points.
            Single-scale filtering reduces this effect, but a spike remains when
            $5$-NN distances between real points and to generated points are
            similar. Finally, multi-scale filtering achieves the
            ideal result.}
        \label{fig:hypersphere_test_ablation}
    \end{center}
\end{figure}

In summary, our proposed Generative Iterative Contextual Dissimilarity
Measure (GICDM) combines the three discussed components to address our
desiderata: (1) applying ICDM to the real set to uniformize its density (desideratum 1),
(2) computing scaling factors for generated points based solely on their relationship to the real set (desiderata 2 and 3),
and (3) filtering generated points at multiple scales to prevent overcorrection (desideratum 2).
The complete algorithm is summarized in \Cref{alg:gicdm}.

\Cref{fig:hypersphere_test_ablation} demonstrates the effect
of each GICDM component in the hypersphere test from \Cref{fig:hypersphere_test},
tracking the Precision metric as components of GICDM are introduced (out-of-sample ICDM: $\text{GICDM}_\text{unfiltered}$, filtering and multi-scale filtering). 
The results indicate that all three components are required to fully resolve hubness-related issues.

GICDM is a dissimilarity rather than a distance metric
as it does not obey the triangle inequality. However,
this is not an issue for distance-based generative model
evaluation, which only requires a measure of similarity capable of ranking neighbors and
determining neighborhood membership, two operations that do not rely on the triangle inequality. This aligns with the common use of other non-metric dissimilarities,
such as cosine similarity, in feature spaces
\cite{oquab2023dinov2,DBLP:journals/corr/abs-2508-10104}.

\begin{algorithm}[t]
\caption{GICDM}
\label{alg:gicdm}
\begin{algorithmic}[1]
\REQUIRE Real points $\{x^r_i\}_{i=1}^{N}$, generated points $\{x^g_j\}_{j=1}^{M}$,
neighborhood sizes $K_1, K_2$, quantile $q$

\FOR{$K \in \{K_1, K_2\}$}
  \STATE Apply ICDM to $\{x^r_i\}_{i=1}^N$ with neighborhood size $K$ to obtain scaling factors $\{\delta^r_{i,K}\}_{i=1}^N$
  \FOR{$i = 1$ to $N$}
    \STATE $\bar{\delta}^{r|r}_{i,K} \gets \frac{1}{K} \sum_{k=1}^K \delta^r_{\NN^{r,eq}_{k}(x^r_i),K}$
    \STATE $r^{r}_{i,K} \gets \frac{|\bar{\delta}^{r|r}_{i,K} - \delta^r_{i,K}|}{\bar{\delta}^{r|r}_{i,K}}$
  \ENDFOR
  \STATE $T_K \gets$ $q$-th quantile of $\{r^{r}_{i,K}\}_{i=1}^N$
\ENDFOR
\FOR{$j = 1$ to $M$}
  \STATE $keep \gets \text{True}$
  \FOR{$K \in \{K_1, K_2\}$}
    \STATE Compute $\delta^g_{j,K}$ using Eq.~\eqref{eq:delta_g} with $\{\delta^r_{i,K}\}$
    \STATE $\bar{\delta}^{r|g}_{j,K} \gets \frac{1}{K+1} \sum_{k=1}^{K+1} \delta^r_{\NN^{r,eq}_{k}(x^g_j),K}$
    \STATE $r^{g}_{j,K} \gets \frac{|\bar{\delta}^{r|g}_{j,K} - \delta^g_{j,K}|}{\bar{\delta}^{r|g}_{j,K}}$
    \IF{$r^{g}_{j,K} > T_K$}
      \STATE $keep \gets \text{False}$
    \ENDIF
  \ENDFOR
  \FOR{$i = 1$ to $N$}
    \IF{$keep$}
      \STATE $\text{GICDM}(x^r_i, x^g_j) \gets d(x^r_i, x^g_j) \cdot \delta^r_{i,K_1} \cdot \delta^g_{j,K_1}$
    \ENDIF
  \ENDFOR
\ENDFOR
\end{algorithmic}
\end{algorithm}

\section{Experiments}
\label{sec:experiments}

\begin{table}[t]
  \caption{\textbf{Benchmark success count.}
    Passed tests count (higher is better) on the synthetic benchmark of \citet{raisa2025position},
    before and after applying GICDM.
    Test categories: \emph{Purpose} (does the score evolution match the intended test objective), \emph{Bounds} (do metrics yield the exact expected values, e.g., 1 in the ideal case), and \emph{Other} (additional checks).
    Results are shown for the best-performing fidelity and coverage metrics:
    Clipped Density and Clipped Coverage, as well as for Precision and Recall.
    }
  \label{table:position_benchmark}
  \centering
  \begin{small}
  \begin{sc}
    \begin{tabular}{l c c c}
  \toprule
  & Purpose & Bounds & Other \\
  \midrule
  Precision & 6/14 & 3/13 & \textbf{2}/3 \\
  Clipped Density & 8/14 & 8/13 & \textbf{2}/3 \\
  \ \ \ \textbf{+ GICDM (ours)} & \textbf{10}/14 & \textbf{11}/13 & \textbf{2}/3 \\
  \midrule
  Recall & 5/14 & 2/13 & 0/3 \\
  Clipped Coverage & 8/14 & 9/13 & \textbf{1}/3 \\
  \ \ \ \textbf{+ GICDM (ours)} & \textbf{10}/14 & \textbf{11}/13 & \textbf{1}/3 \\
  \bottomrule
  \end{tabular}
  \end{sc}
  \end{small}
\end{table}

\begin{table}[t]
  \caption{\textbf{Correlation with human scores.} Pearson
  correlation between metric scores and human error rates.
  Results are not significant for FFHQ or other embeddings.
  GICDM maintains or improves Clipped Density's (the best-performing metric) correlation
  with human scores, especially for DINOv3, which exhibits more
  hubness.}
  \label{tab:correlation_human}
  \centering
  \setlength{\tabcolsep}{3pt}
  \begin{small}
  \begin{sc}
    \begin{tabular}{llccc}
    \toprule
    Embed. & Metric & \makecell{CIFAR\\10} & \makecell{Image\\Net} & \makecell{LSUN\\Bedroom} \\
    \midrule
    \multirow{3}{*}{DINOv2}
      & Precision           & $-0.82$ & $0.76$ & - \\
      & Clipped Density           & $0.93$  & $0.75$ & $0.81$ \\
      & \ \ \ + GICDM   & $0.97$  & $0.80$ & $0.80$ \\
    \midrule
    \multirow{3}{*}{DINOv3}
      & Precision           & $0.86$ & $0.74$ & $0.96$ \\
      & Clipped Density           & $0.82$  & $0.67$ & $0.94$ \\
      & \ \ \ + GICDM   & $0.95$  & $0.82$ & $0.97$ \\
    \bottomrule
  \end{tabular}
  \end{sc}
  \end{small}
\end{table}

\begin{table}[t]
  \caption{\textbf{Effect of guidance.} Metric scores for DINOv2 embeddings of ImageNet-trained DiT-XL-2, with and without classifier-free guidance. Guidance is expected to increase fidelity and decrease coverage. Clipped Density increases both with and without GICDM, but Clipped Coverage decreases only with GICDM.}
  \label{tab:guidance}
  \centering
  \begin{small}
  \begin{sc}
      \begin{tabular}{l c c}
    \toprule
    Metric & DiT-XL-2 & + Guidance \\
    \midrule
    Clipped Density & $0.74$  & $1.62$ $\nearrow$ \\
    \ \ \ + GICDM   & $0.86$  & $1.15$ $\nearrow$ \\
    \midrule
    Clipped Coverage & $0.64$  & $0.83$ $\nearrow$\\
    \ \ \ + GICDM   & $0.79$  & $0.55$ $\searrow$ \\
    \bottomrule
  \end{tabular}

  \end{sc}
  \end{small}
\end{table}
All experiments were conducted on a single NVIDIA H100 GPU with 80GB of memory and use 10 iterations for ICDM (see \Cref{sec:empirical_convergence} for a convergence analysis).

For metrics that use a neighborhood size $k$, we set the GICDM
neighborhood size to $2k$. This ensures the $k$-th nearest-neighbor distance remains comparable across points after hubness
correction. While ICDM equalizes the average distance to the $K$-nearest-neighbors ($\mu_i^{r,eq}$), the $K$-th distance itself
can still vary, especially for small $K$. Using
$K=2k$ places the $k$-th neighbor near the center of the averaged
distances, improving stability.

\textbf{Synthetic Benchmark.}
We evaluated standard metrics and their GICDM-corrected versions on the extensive synthetic benchmark proposed by
\citet{raisa2025position}. This benchmark evaluates metric scores evolution across 14 \emph{Purpose} scenarios, whether metrics yield the expected
values (e.g., 1 for realistic generated data) in 13 \emph{Bounds} cases, and 3 \emph{Other} miscellaneous tests.
Testing over 10 metrics (details in
\Cref{appendix:position_benchmark}), we found that GICDM consistently
improves two Purpose tests.
\Cref{table:position_benchmark} summarizes the results for the best-performing fidelity and coverage metrics:
GICDM substantially improves performance, passing 23 criteria instead of
18 for Clipped Density, and 22 instead of 18 for Clipped Coverage.
This underlines that hubness indeed causes several metric failures
and that GICDM successfully corrects them without altering already-satisfied criteria.

\textbf{Real Data and Human Correlation.}
We further evaluated 42 datasets generated by various models on CIFAR-10 \cite{krizhevsky2009learning}, ImageNet \cite{deng2009imagenet}, LSUN Bedroom \cite{yu2015lsun}, and FFHQ \cite{kazemi2014one}, using the data shared by \citet{stein2024exposing}.
We then computed the Pearson correlation between
fidelity metrics and human error rates (where volunteers discriminated real from generated
images).
As shown in \Cref{tab:correlation_human}, GICDM consistently maintains or improves the correlation of Clipped Density (the best-performing fidelity metric) with human judgments,
particularly for DINOv3 embeddings that are heavily affected by hubness.
This alignment confirms that GICDM preserves the relative positioning of generated samples
(desideratum 2) while mitigating hubness. Since desiderata
1 and 3 are satisfied by construction, GICDM fulfills all desiderata.
See \Cref{sec:human_correlation} for further details; notably, as also demonstrated by an additional benchmark on real CIFAR10 data (\Cref{sec:real_data_benchmark}),
GICDM is only
effective when metrics are robust to real outliers,
further highlighting the importance of using robust metrics.

\textbf{Classifier-Free Guidance.}
\Cref{tab:guidance} shows the effect of classifier-free guidance on DiT-XL-2 for ImageNet
with DINOv2 embeddings. Both prior best metrics, Clipped Density and Clipped Coverage, increase with guidance,
which is counter-intuitive since guidance should improve fidelity but reduce coverage.
With GICDM, Clipped Density increases, but Clipped Coverage decreases as expected.
These results highlight how hubness can distort metrics, and how GICDM corrects it.

\textbf{Qualitative Visualization.}
\Cref{fig:intro_viz} qualitatively shows GICDM's effect on LSUN Bedroom with DINOv2 embeddings: before applying GICDM, the most frequent generated hub appears in $64$ real neighborhoods, while after applying GICDM, it appears in only $4$. These $4$ images are more semantically similar to the hub than the average $64$ were. Thus, GICDM effectively mitigates hubness and restores a more semantically meaningful neighborhood structure.

In summary, by removing hubness, GICDM consistently improves metric results,
confirming that hubness caused the observed evaluation issues and that GICDM resolves
them without removing desirable data structure. This leads to more
trustworthy evaluation of generative models and better alignment with human judgment.

\section{Conclusion}
\label{sec:conclusion}

In this work, we introduced GICDM, a hubness mitigation
method specifically designed for generative model evaluation.
GICDM leverages ICDM to uniformize the density of
the real data manifold, and computes out-of-sample scaling
factors for generated samples based solely on their relationship
to the real set. To prevent overcorrection, a multi-scale filtering
strategy discards points with inconsistent local scaling.

Extensive experiments on both synthetic and real benchmarks demonstrate that GICDM resolves failures of existing metrics,
restores reliable fidelity and coverage scores, and improves alignment with human judgment.
While applicable to any distance-based metric, a limitation is that its effectiveness
relies on the underlying metric being robust to real outliers, such as Clipped Density and Clipped Coverage.

Overall, GICDM enables more reliable and trustworthy evaluation of generative models in high dimensional spaces.

Code to reproduce the experiments and use GICDM is available at \url{https://github.com/nicolassalvy/GICDM}.

\section*{Acknowledgements}

This work benefited from state aid managed by the Agence Nationale de la Recherche under the France 2030 programme, reference ANR-22-PESN-0012 and from the European Union’s Horizon 2020 Research Infrastructures Grant EBRAIN-Health 101058516.
This work was performed using HPC resources from GENCI-IDRIS (Grant 2025-AD011014887R2).

\section*{Impact Statement}

This paper presents work whose goal is to advance the field of Generative Model Evaluation.
By improving the assessment of generative models, our work can help guide the development of more reliable models, which is important for data-scarce domains such as healthcare. At the same time, we recognize that generative models may be misused (e.g., for deepfakes).

\bibliography{Biblio}

@article{DBLP:journals/corr/KomarovDD13,
	author = {Komarov, Ivan and Dashti, Ali and D'Souza, Roshan},
	journal = {CoRR},
	title = {Fast \emph{k}-NNG construction with GPU-based quick multi-select},
	volume = {abs/1309.5478},
	year = {2013}}

@article{toeplitz1911allgemeine,
	author = {Toeplitz, Otto},
	journal = {Prace matematyczno-fizyczne},
	number = {1},
	pages = {113--119},
	publisher = {Towarzystwo Naukowe Warszawskie},
	title = {{\"U}ber allgemeine lineare Mittelbildungen.},
	volume = {22},
	year = {1911}}

@article{tucciarone1973development,
	author = {Tucciarone, John},
	journal = {Archive for history of exact sciences},
	number = {1/2},
	pages = {1--40},
	publisher = {JSTOR},
	title = {The development of the theory of summable divergent series from 1880 to 1925},
	volume = {10},
	year = {1973}}

@book{mood1950introduction,
	author = {Mood, Alexander McFarlane},
	publisher = {McGraw-hill},
	title = {Introduction to the Theory of Statistics.},
	year = {1950}}

@book{Bellman1961,
	author = {Bellman, Richard E.},
	publisher = {Princeton University Press},
	title = {Adaptive Control Processes: A Guided Tour},
	year = {1961}}

@article{loftsgaarden1965nonparametric,
	author = {Loftsgaarden, Don O and Quesenberry, Charles P},
	journal = {The Annals of Mathematical Statistics},
	number = {3},
	pages = {1049--1051},
	title = {A nonparametric estimate of a multivariate density function},
	volume = {36},
	year = {1965}}

@inproceedings{beyer1999nearest,
	author = {Beyer, Kevin and Goldstein, Jonathan and Ramakrishnan, Raghu and Shaft, Uri},
	booktitle = {International conference on database theory},
	organization = {Springer},
	pages = {217--235},
	title = {When is ``nearest neighbor'' meaningful?},
	year = {1999}}

@inproceedings{aggarwal2001surprising,
	author = {Aggarwal, Charu C and Hinneburg, Alexander and Keim, Daniel A},
	booktitle = {International conference on database theory},
	organization = {Springer},
	pages = {420--434},
	title = {On the surprising behavior of distance metrics in high dimensional space},
	year = {2001}}

@article{pachet2004improving,
	author = {Pachet, Francois and Aucouturier, Jean-Julien},
	journal = {Journal of negative results in speech and audio sciences},
	number = {1},
	pages = {1--13},
	title = {Improving timbre similarity: How high is the sky},
	volume = {1},
	year = {2004}}

@article{zelnik2004self,
	author = {Zelnik-Manor, Lihi and Perona, Pietro},
	journal = {Advances in neural information processing systems},
	title = {Self-tuning spectral clustering},
	volume = {17},
	year = {2004}}

@inproceedings{jegou2007contextual,
	author = {J{\'{e}}gou, Herve and Harzallah, Hedi and Schmid, Cordelia},
	booktitle = {2007 IEEE Conference on computer vision and pattern recognition},
	organization = {IEEE},
	pages = {1--8},
	title = {A contextual dissimilarity measure for accurate and efficient image search},
	year = {2007}}

@inproceedings{deng2009imagenet,
	author = {Deng, Jia and Dong, Wei and Socher, Richard and Li, Li-Jia and Li, Kai and Fei-Fei, Li},
	booktitle = {2009 IEEE conference on computer vision and pattern recognition},
	organization = {Ieee},
	pages = {248--255},
	title = {Imagenet: A large-scale hierarchical image database},
	year = {2009}}

@article{krizhevsky2009learning,
	author = {Krizhevsky, Alex and Hinton, Geoffrey and others},
	title = {Learning multiple layers of features from tiny images},
	year = {2009}}

@article{jegou2010contextual,
	author = {J{\'{e}}gou, Herv{\'{e}} and Schmid, Cordelia and Harzallah, Hedi and Verbeek, Jakob},
	journal = {{IEEE} Trans. Pattern Anal. Mach. Intell.},
	number = {1},
	pages = {2--11},
	title = {Accurate Image Search Using the Contextual Dissimilarity Measure},
	volume = {32},
	year = {2010}}

@article{radovanovic2010hubs,
	author = {Radovanovic, Milos and Nanopoulos, Alexandros and Ivanovic, Mirjana},
	journal = {Journal of Machine Learning Research},
	number = {sept},
	pages = {2487--2531},
	title = {Hubs in space: Popular nearest neighbors in high-dimensional data},
	volume = {11},
	year = {2010}}

@inproceedings{DBLP:conf/iccp2/TomasevBMN11,
	author = {Tomasev, Nenad and Brehar, Raluca and Mladenic, Dunja and Nedevschi, Sergiu},
	booktitle = {{IEEE} International Conference on Intelligent Computer Communication and Processing, {ICCP} 2011, Cluj-Napoca, Romania, August 25-27, 2011},
	pages = {367--374},
	publisher = {{IEEE}},
	title = {The influence of hubness on nearest-neighbor methods in object recognition},
	year = {2011}}

@article{schnitzer2012local,
	author = {Schnitzer, Dominik and Flexer, Arthur and Schedl, Markus and Widmer, Gerhard},
	journal = {The Journal of Machine Learning Research},
	number = {1},
	pages = {2871--2902},
	title = {Local and global scaling reduce hubs in space},
	volume = {13},
	year = {2012}}

@incollection{low2013hubness,
	author = {Low, Thomas and Borgelt, Christian and Stober, Sebastian and N{\"u}rnberger, Andreas},
	booktitle = {Towards Advanced Data Analysis by Combining Soft Computing and Statistics},
	pages = {267--278},
	publisher = {Springer},
	title = {The hubness phenomenon: Fact or artifact?},
	year = {2013}}

@inproceedings{kazemi2014one,
	author = {Kazemi, Vahid and Sullivan, Josephine},
	booktitle = {Proceedings of the IEEE conference on computer vision and pattern recognition},
	pages = {1867--1874},
	title = {One millisecond face alignment with an ensemble of regression trees},
	year = {2014}}

@article{yu2015lsun,
	author = {Yu, Fisher and Seff, Ari and Zhang, Yinda and Song, Shuran and Funkhouser, Thomas and Xiao, Jianxiong},
	journal = {arXiv preprint arXiv:1506.03365},
	title = {Lsun: Construction of a large-scale image dataset using deep learning with humans in the loop},
	year = {2015}}

@inproceedings{DBLP:journals/corr/SimonyanZ14a,
	author = {Simonyan, Karen and Zisserman, Andrew},
	booktitle = {3rd International Conference on Learning Representations, {ICLR} 2015, San Diego, CA, USA, May 7-9, 2015, Conference Track Proceedings},
	editor = {Bengio, Yoshua and LeCun, Yann},
	title = {Very Deep Convolutional Networks for Large-Scale Image Recognition},
	year = {2015}}

@inproceedings{hara2015reducing,
	author = {Hara, Kazuo and Suzuki, Ikumi and Kobayashi, Kei and Fukumizu, Kenji},
	booktitle = {Proceedings of the 38th international ACM SIGIR conference on research and development in information retrieval},
	pages = {815--818},
	title = {Reducing hubness: A cause of vulnerability in recommender systems},
	year = {2015}}

@article{flexer2015choosing,
	author = {Flexer, Arthur and Schnitzer, Dominik},
	journal = {Neurocomputing},
	pages = {281--287},
	title = {Choosing lp norms in high-dimensional spaces based on hub analysis},
	volume = {169},
	year = {2015}}

@inproceedings{DBLP:conf/cvpr/SzegedyVISW16,
	author = {Szegedy, Christian and Vanhoucke, Vincent and Ioffe, Sergey and Shlens, Jonathon and Wojna, Zbigniew},
	booktitle = {2016 {IEEE} Conference on Computer Vision and Pattern Recognition, {CVPR} 2016, Las Vegas, NV, USA, June 27-30, 2016},
	pages = {2818--2826},
	publisher = {{IEEE} Computer Society},
	title = {Rethinking the Inception Architecture for Computer Vision},
	year = {2016}}

@inproceedings{hara2016flattening,
	author = {Hara, Kazuo and Suzuki, Ikumi and Kobayashi, Kei and Fukumizu, Kenji and Radovanovic, Milos},
	booktitle = {Proceedings of the AAAI Conference on Artificial Intelligence},
	number = {1},
	title = {Flattening the density gradient for eliminating spatial centrality to reduce hubness},
	volume = {30},
	year = {2016}}

@inproceedings{DBLP:conf/ismir/DefferrardBVB17,
	author = {Defferrard, Micha{\"{e}}l and Benzi, Kirell and Vandergheynst, Pierre and Bresson, Xavier},
	booktitle = {Proceedings of the 18th International Society for Music Information Retrieval Conference, {ISMIR} 2017, Suzhou, China, October 23-27, 2017},
	editor = {Cunningham, Sally Jo and Duan, Zhiyao and Hu, Xiao and Turnbull, Douglas},
	pages = {316--323},
	title = {{FMA:} {A} Dataset for Music Analysis},
	year = {2017}}

@article{gulrajani2017improved,
	author = {Gulrajani, Ishaan and Ahmed, Faruk and Arjovsky, Martin and Dumoulin, Vincent and Courville, Aaron C},
	journal = {Advances in neural information processing systems},
	title = {Improved training of wasserstein gans},
	volume = {30},
	year = {2017}}

@inproceedings{odena2017conditional,
	author = {Odena, Augustus and Olah, Christopher and Shlens, Jonathon},
	booktitle = {International conference on machine learning},
	organization = {PMLR},
	pages = {2642--2651},
	title = {Conditional image synthesis with auxiliary classifier gans},
	year = {2017}}

@article{heusel2017gans,
	author = {Heusel, Martin and Ramsauer, Hubert and Unterthiner, Thomas and Nessler, Bernhard and Hochreiter, Sepp},
	journal = {Advances in neural information processing systems},
	title = {Gans trained by a two time-scale update rule converge to a local nash equilibrium},
	volume = {30},
	year = {2017}}

@inproceedings{lample2018word,
	author = {Lample, Guillaume and Conneau, Alexis and Ranzato, Marc'Aurelio and Denoyer, Ludovic and J{\'e}gou, Herv{\'e}},
	booktitle = {International conference on learning representations},
	title = {Word translation without parallel data},
	year = {2018}}

@inproceedings{feldbauer2018fast,
	author = {Feldbauer, Roman and Leodolter, Maximilian and Plant, Claudia and Flexer, Arthur},
	booktitle = {2018 IEEE International Conference on Big Knowledge (ICBK)},
	organization = {IEEE},
	pages = {358--367},
	title = {Fast approximate hubness reduction for large high-dimensional data},
	year = {2018}}

@article{sajjadi2018assessing,
	author = {Sajjadi, Mehdi SM and Bachem, Olivier and Lucic, Mario and Bousquet, Olivier and Gelly, Sylvain},
	journal = {Advances in neural information processing systems},
	title = {Assessing generative models via precision and recall},
	volume = {31},
	year = {2018}}

@book{vershynin2018high,
	author = {Vershynin, Roman},
	publisher = {Cambridge university press},
	title = {High-dimensional probability: An introduction with applications in data science},
	volume = {47},
	year = {2018}}

@inproceedings{karras2019style,
	author = {Karras, Tero and Laine, Samuli and Aila, Timo},
	booktitle = {Proceedings of the IEEE/CVF conference on computer vision and pattern recognition},
	pages = {4401--4410},
	title = {A style-based generator architecture for generative adversarial networks},
	year = {2019}}

@article{chen2019residual,
	author = {Chen, Ricky TQ and Behrmann, Jens and Duvenaud, David K and Jacobsen, J{\"o}rn-Henrik},
	journal = {Advances in Neural Information Processing Systems},
	title = {Residual flows for invertible generative modeling},
	volume = {32},
	year = {2019}}

@inproceedings{turner2019metropolis,
	author = {Turner, Ryan and Hung, Jane and Frank, Eric and Saatchi, Yunus and Yosinski, Jason},
	booktitle = {International Conference on Machine Learning},
	organization = {PMLR},
	pages = {6345--6353},
	title = {Metropolis-hastings generative adversarial networks},
	year = {2019}}

@article{wu2019logan,
	author = {Wu, Yan and Donahue, Jeff and Balduzzi, David and Simonyan, Karen and Lillicrap, Timothy},
	journal = {arXiv preprint arXiv:1912.00953},
	title = {Logan: Latent optimisation for generative adversarial networks},
	year = {2019}}

@inproceedings{brock2018large,
	author = {Brock, Andrew and Donahue, Jeff and Simonyan, Karen},
	booktitle = {7th International Conference on Learning Representations, {ICLR} 2019, New Orleans, LA, USA, May 6-9, 2019},
	title = {Large scale GAN training for high fidelity natural image synthesis},
	year = {2019}}

@article{feldbauer2019comprehensive,
	author = {Feldbauer, Roman and Flexer, Arthur},
	journal = {Knowledge and Information Systems},
	number = {1},
	pages = {137--166},
	title = {A comprehensive empirical comparison of hubness reduction in high-dimensional spaces},
	volume = {59},
	year = {2019}}

@article{kynkaanniemi2019improved,
	author = {Kynk{\"a}{\"a}nniemi, Tuomas and Karras, Tero and Laine, Samuli and Lehtinen, Jaakko and Aila, Timo},
	journal = {Advances in neural information processing systems},
	title = {Improved precision and recall metric for assessing generative models},
	volume = {32},
	year = {2019}}

@article{karras2020training,
	author = {Karras, Tero and Aittala, Miika and Hellsten, Janne and Laine, Samuli and Lehtinen, Jaakko and Aila, Timo},
	journal = {Advances in neural information processing systems},
	pages = {12104--12114},
	title = {Training generative adversarial networks with limited data},
	volume = {33},
	year = {2020}}

@article{vahdat2020nvae,
	author = {Vahdat, Arash and Kautz, Jan},
	journal = {Advances in neural information processing systems},
	pages = {19667--19679},
	title = {NVAE: A deep hierarchical variational autoencoder},
	volume = {33},
	year = {2020}}

@article{ho2020denoising,
	author = {Ho, Jonathan and Jain, Ajay and Abbeel, Pieter},
	journal = {Advances in neural information processing systems},
	pages = {6840--6851},
	title = {Denoising diffusion probabilistic models},
	volume = {33},
	year = {2020}}

@inproceedings{naeem2020reliable,
	author = {Naeem, Muhammad Ferjad and Oh, Seong Joon and Uh, Youngjung and Choi, Yunjey and Yoo, Jaejun},
	booktitle = {International Conference on Machine Learning},
	organization = {PMLR},
	pages = {7176--7185},
	title = {Reliable fidelity and diversity metrics for generative models},
	year = {2020}}

@article{yang2021data,
	author = {Yang, Ceyuan and Shen, Yujun and Xu, Yinghao and Zhou, Bolei},
	journal = {Advances in Neural Information Processing Systems},
	pages = {9378--9390},
	title = {Data-efficient instance generation from instance discrimination},
	volume = {34},
	year = {2021}}

@article{sauer2021projected,
	author = {Sauer, Axel and Chitta, Kashyap and M{\"u}ller, Jens and Geiger, Andreas},
	journal = {Advances in Neural Information Processing Systems},
	pages = {17480--17492},
	title = {Projected gans converge faster},
	volume = {34},
	year = {2021}}

@article{dhariwal2021diffusion,
	author = {Dhariwal, Prafulla and Nichol, Alexander},
	journal = {Advances in neural information processing systems},
	pages = {8780--8794},
	title = {Diffusion models beat gans on image synthesis},
	volume = {34},
	year = {2021}}

@article{vahdat2021score,
	author = {Vahdat, Arash and Kreis, Karsten and Kautz, Jan},
	journal = {Advances in neural information processing systems},
	pages = {11287--11302},
	title = {Score-based generative modeling in latent space},
	volume = {34},
	year = {2021}}

@article{kang2021rebooting,
	author = {Kang, Minguk and Shim, Woohyeon and Cho, Minsu and Park, Jaesik},
	journal = {Advances in neural information processing systems},
	pages = {23505--23518},
	title = {Rebooting acgan: Auxiliary classifier gans with stable training},
	volume = {34},
	year = {2021}}

@inproceedings{nichol2021improved,
	author = {Nichol, Alexander Quinn and Dhariwal, Prafulla},
	booktitle = {International conference on machine learning},
	organization = {PMLR},
	pages = {8162--8171},
	title = {Improved denoising diffusion probabilistic models},
	year = {2021}}

@inproceedings{zhang2022styleswin,
	author = {Zhang, Bowen and Gu, Shuyang and Zhang, Bo and Bao, Jianmin and Chen, Dong and Wen, Fang and Wang, Yong and Guo, Baining},
	booktitle = {Proceedings of the IEEE/CVF conference on computer vision and pattern recognition},
	pages = {11304--11314},
	title = {Styleswin: Transformer-based gan for high-resolution image generation},
	year = {2022}}

@inproceedings{walton2022stylenat,
	author = {Walton, Steven and Hassani, Ali and Xu, Xingqian and Wang, Zhangyang and Shi, Humphrey},
	booktitle = {Proceedings of the Computer Vision and Pattern Recognition Conference (CVPR) Workshops},
	title = {Efficient Image Generation with Variadic Attention Heads},
	year = {2025}}

@article{hazami2022efficientvdvae,
	author = {Hazami, Louay and Mama, Rayhane and Thurairatnam, Ragavan},
	journal = {arXiv preprint arXiv:2203.13751},
	title = {Efficient-vdvae: Less is more},
	year = {2022}}

@inproceedings{bond2022unleashing,
	author = {Bond-Taylor, Sam and Hessey, Peter and Sasaki, Hiroshi and Breckon, Toby P and Willcocks, Chris G},
	booktitle = {European Conference on Computer Vision},
	organization = {Springer},
	pages = {170--188},
	title = {Unleashing transformers: Parallel token prediction with discrete absorbing diffusion for fast high-resolution image generation from vector-quantized codes},
	year = {2022}}

@inproceedings{lee2022autoregressive,
	author = {Lee, Doyup and Kim, Chiheon and Kim, Saehoon and Cho, Minsu and Han, Wook-Shin},
	booktitle = {Proceedings of the IEEE/CVF Conference on Computer Vision and Pattern Recognition},
	pages = {11523--11532},
	title = {Autoregressive image generation using residual quantization},
	year = {2022}}

@inproceedings{chang2022maskgit,
	author = {Chang, Huiwen and Zhang, Han and Jiang, Lu and Liu, Ce and Freeman, William T},
	booktitle = {Proceedings of the IEEE/CVF conference on computer vision and pattern recognition},
	pages = {11315--11325},
	title = {Maskgit: Masked generative image transformer},
	year = {2022}}

@inproceedings{rombach2022high,
	author = {Rombach, Robin and Blattmann, Andreas and Lorenz, Dominik and Esser, Patrick and Ommer, Bj{\"o}rn},
	booktitle = {Proceedings of the IEEE/CVF conference on computer vision and pattern recognition},
	pages = {10684--10695},
	title = {High-resolution image synthesis with latent diffusion models},
	year = {2022}}

@inproceedings{sauer2022stylegan,
	author = {Sauer, Axel and Schwarz, Katja and Geiger, Andreas},
	booktitle = {ACM SIGGRAPH 2022 conference proceedings},
	pages = {1--10},
	title = {Stylegan-xl: Scaling stylegan to large diverse datasets},
	year = {2022}}

@inproceedings{pinaya2022brain,
	author = {Pinaya, Walter HL and Tudosiu, Petru-Daniel and Dafflon, Jessica and Da Costa, Pedro F and Fernandez, Virginia and Nachev, Parashkev and Ourselin, Sebastien and Cardoso, M Jorge},
	booktitle = {MICCAI Workshop on Deep Generative Models},
	organization = {Springer},
	pages = {117--126},
	title = {Brain imaging generation with latent diffusion models},
	year = {2022}}

@article{amblard2022hubness,
	author = {Amblard, Elise and Bac, Jonathan and Chervov, Alexander and Soumelis, Vassili and Zinovyev, Andrei},
	journal = {Bioinformatics},
	number = {4},
	pages = {1045--1051},
	title = {Hubness reduction improves clustering and trajectory inference in single-cell transcriptomic data},
	volume = {38},
	year = {2022}}

@article{obraczka2022fast,
	author = {Obraczka, Daniel and Rahm, Erhard},
	journal = {SN Computer Science},
	number = {6},
	pages = {501},
	title = {Fast hubness-reduced nearest neighbor search for entity alignment in knowledge graphs},
	volume = {3},
	year = {2022}}

@inproceedings{alaa2022faithful,
	author = {Alaa, Ahmed and Van Breugel, Boris and Saveliev, Evgeny S and van der Schaar, Mihaela},
	booktitle = {International Conference on Machine Learning},
	organization = {PMLR},
	pages = {290--306},
	title = {How faithful is your synthetic data? sample-level metrics for evaluating and auditing generative models},
	year = {2022}}

@inproceedings{wang2022diffusion,
	author = {Wang, Zhendong and Zheng, Huangjie and He, Pengcheng and Chen, Weizhu and Zhou, Mingyuan},
	booktitle = {The Eleventh International Conference on Learning Representations, {ICLR} 2023, Kigali, Rwanda, May 1-5, 2023},
	title = {Diffusion-gan: Training gans with diffusion},
	year = {2023}}

@article{song2023consistency,
	author = {Song, Yang and Dhariwal, Prafulla and Chen, Mark and Sutskever, Ilya},
	journal = {International Conference on Machine Learning},
	title = {Consistency models},
	year = {2023}}

@inproceedings{kang2023scaling,
	author = {Kang, Minguk and Zhu, Jun-Yan and Zhang, Richard and Park, Jaesik and Shechtman, Eli and Paris, Sylvain and Park, Taesung},
	booktitle = {Proceedings of the IEEE/CVF conference on computer vision and pattern recognition},
	pages = {10124--10134},
	title = {Scaling up gans for text-to-image synthesis},
	year = {2023}}

@inproceedings{peebles2023scalable,
	author = {Peebles, William and Xie, Saining},
	booktitle = {Proceedings of the IEEE/CVF international conference on computer vision},
	pages = {4195--4205},
	title = {Scalable diffusion models with transformers},
	year = {2023}}

@inproceedings{xu2023pfgm++,
	author = {Xu, Yilun and Liu, Ziming and Tian, Yonglong and Tong, Shangyuan and Tegmark, Max and Jaakkola, Tommi},
	booktitle = {International Conference on Machine Learning},
	organization = {PMLR},
	pages = {38566--38591},
	title = {Pfgm++: Unlocking the potential of physics-inspired generative models},
	year = {2023}}

@article{kang2023studiogan,
	author = {Kang, Minguk and Shin, Joonghyuk and Park, Jaesik},
	journal = {IEEE Transactions on Pattern Analysis and Machine Intelligence},
	number = {12},
	pages = {15725--15742},
	title = {StudioGAN: a taxonomy and benchmark of GANs for image synthesis},
	volume = {45},
	year = {2023}}

@inproceedings{DBLP:conf/icassp/ElizaldeDIW23,
	author = {Elizalde, Benjamin and Deshmukh, Soham and Ismail, Mahmoud Al and Wang, Huaming},
	booktitle = {{IEEE} International Conference on Acoustics, Speech and Signal Processing {ICASSP} 2023, Rhodes Island, Greece, June 4-10, 2023},
	pages = {1--5},
	publisher = {{IEEE}},
	title = {{CLAP} Learning Audio Concepts from Natural Language Supervision},
	year = {2023}}

@article{oquab2023dinov2,
	author = {Oquab, Maxime and Darcet, Timoth{\'e}e and Moutakanni, Th{\'e}o and Vo, Huy and Szafraniec, Marc and Khalidov, Vasil and Fernandez, Pierre and Haziza, Daniel and Massa, Francisco and El-Nouby, Alaaeldin and others},
	journal = {arXiv preprint arXiv:2304.07193},
	title = {Dinov2: Learning robust visual features without supervision},
	year = {2023}}

@article{kim2023topp,
	author = {Kim, Pum Jun and Jang, Yoojin and Kim, Jisu and Yoo, Jaejun},
	journal = {Advances in Neural Information Processing Systems},
	pages = {7831--7866},
	title = {Topp\&r: Robust support estimation approach for evaluating fidelity and diversity in generative models},
	volume = {36},
	year = {2023}}

@inproceedings{park2023probabilistic,
	author = {Park, Dogyun and Kim, Suhyun},
	booktitle = {Proceedings of the IEEE/CVF international conference on computer vision},
	pages = {20099--20109},
	title = {Probabilistic precision and recall towards reliable evaluation of generative models},
	year = {2023}}

@inproceedings{cheema2023precision,
	author = {Cheema, Fasil and Urner, Ruth},
	booktitle = {International Conference on Artificial Intelligence and Statistics},
	organization = {PMLR},
	pages = {6571--6594},
	title = {Precision recall cover: A method for assessing generative models},
	year = {2023}}

@inproceedings{khayatkhoei2023emergent,
	author = {Khayatkhoei, Mahyar and AbdAlmageed, Wael},
	booktitle = {International Conference on Machine Learning},
	organization = {PMLR},
	pages = {16326--16343},
	title = {Emergent asymmetry of precision and recall for measuring fidelity and diversity of generative models in high dimensions},
	year = {2023}}

@article{stein2024exposing,
	author = {Stein, George and Cresswell, Jesse and Hosseinzadeh, Rasa and Sui, Yi and Ross, Brendan and Villecroze, Valentin and Liu, Zhaoyan and Caterini, Anthony L and Taylor, Eric and Loaiza-Ganem, Gabriel},
	journal = {Advances in Neural Information Processing Systems},
	title = {Exposing flaws of generative model evaluation metrics and their unfair treatment of diffusion models},
	volume = {36},
	year = {2023}}

@article{bluethgen2024vision,
	author = {Bluethgen, Christian and Chambon, Pierre and Delbrouck, Jean-Benoit and van der Sluijs, Rogier and Po{\l}acin, Ma{\l}gorzata and Zambrano Chaves, Juan Manuel and Abraham, Tanishq Mathew and Purohit, Shivanshu and Langlotz, Curtis P and Chaudhari, Akshay S},
	journal = {Nature Biomedical Engineering},
	pages = {1--13},
	title = {A vision--language foundation model for the generation of realistic chest X-ray images},
	year = {2024}}

@article{koetzier2024generating,
	author = {Koetzier, Lennart R and Wu, Jie and Mastrodicasa, Domenico and Lutz, Aline and Chung, Matthew and Koszek, W Adam and Pratap, Jayanth and Chaudhari, Akshay S and Rajpurkar, Pranav and Lungren, Matthew P and others},
	journal = {Radiology},
	number = {3},
	pages = {e232471},
	title = {Generating synthetic data for medical imaging},
	volume = {312},
	year = {2024}}

@article{DBLP:journals/corr/abs-2508-10104,
	author = {Sim{\'{e}}oni, Oriane and Vo, Huy V. and Seitzer, Maximilian and Baldassarre, Federico and Oquab, Maxime and Jose, Cijo and Khalidov, Vasil and Szafraniec, Marc and Yi, Seung Eun and Ramamonjisoa, Micha{\"{e}}l and Massa, Francisco and Haziza, Daniel and Wehrstedt, Luca and Wang, Jianyuan and Darcet, Timoth{\'{e}}e and Moutakanni, Th{\'{e}}o and Sentana, Leonel and Roberts, Claire and Vedaldi, Andrea and Tolan, Jamie and Brandt, John and Couprie, Camille and Mairal, Julien and J{\'{e}}gou, Herv{\'{e}} and Labatut, Patrick and Bojanowski, Piotr},
	journal = {arXiv preprint arXiv:2508.10104},
	title = {DINOv3},
	year = {2025}}

@inproceedings{raisa2025position,
	author = {R{\"a}is{\"a}, Ossi and van Breugel, Boris and van der Schaar, Mihaela},
	booktitle = {Forty-second International Conference on Machine Learning Position Paper Track},
	title = {Position: All Current Generative Fidelity and Diversity Metrics are Flawed},
	year = {2025}}

@inproceedings{salvy2026enhanced,
	author = {Salvy, Nicolas and Talbot, Hugues and Thirion, Bertrand},
	booktitle = {The Fourteenth International Conference on Learning Representations},
	title = {Enhanced Generative Model Evaluation with Clipped Density and Coverage},
	year = {2026}}
\bibliographystyle{icml2026}

\newpage
\appendix
\onecolumn

\section{Hubness Measures}
\label{sec:hubness_measures}

Hubness skews the distribution of $k$-occurrences, $O_k$.
To quantify hubness, \citet{low2013hubness} proposed examining the right tail of the distribution by measuring the largest $k$-occurrence values
and averaging them for robustness:
\begin{equation*}
    h_1^k(q) = \frac{1}{k |\mathcal{O}_k(q)|} \sum_{x \in \mathcal{O}_k(q)} O_k(x),
\end{equation*}
where $\mathcal{O}_k(q)$ is the set of the $\lfloor qn \rfloor$ points with the highest $k$-occurrences,
and typically $q=0.01$ is used for a small proportion.
$h_1^k(q)$ measures, on average, how much more often the top $q$-fraction of points appear among $k$-nearest-neighbors compared to the average point.

On the opposite side of the distribution, \citet{flexer2015choosing} proposed measuring the proportion of antihubs,
$A^k$, defined as points that never appear in the $k$-nearest-neighbors of any other point:
\begin{equation*}
    A^k = \frac{|\{x \in D \mid O_k(x) = 0\}|}{|D|}.
\end{equation*}

\Cref{fig:hubness_evolution} shows that both measures indicate increasing hubness for standard Gaussian data as dimensionality grows.

\begin{figure}[b!]
  \begin{center}
    \includegraphics[width=0.55\textwidth]{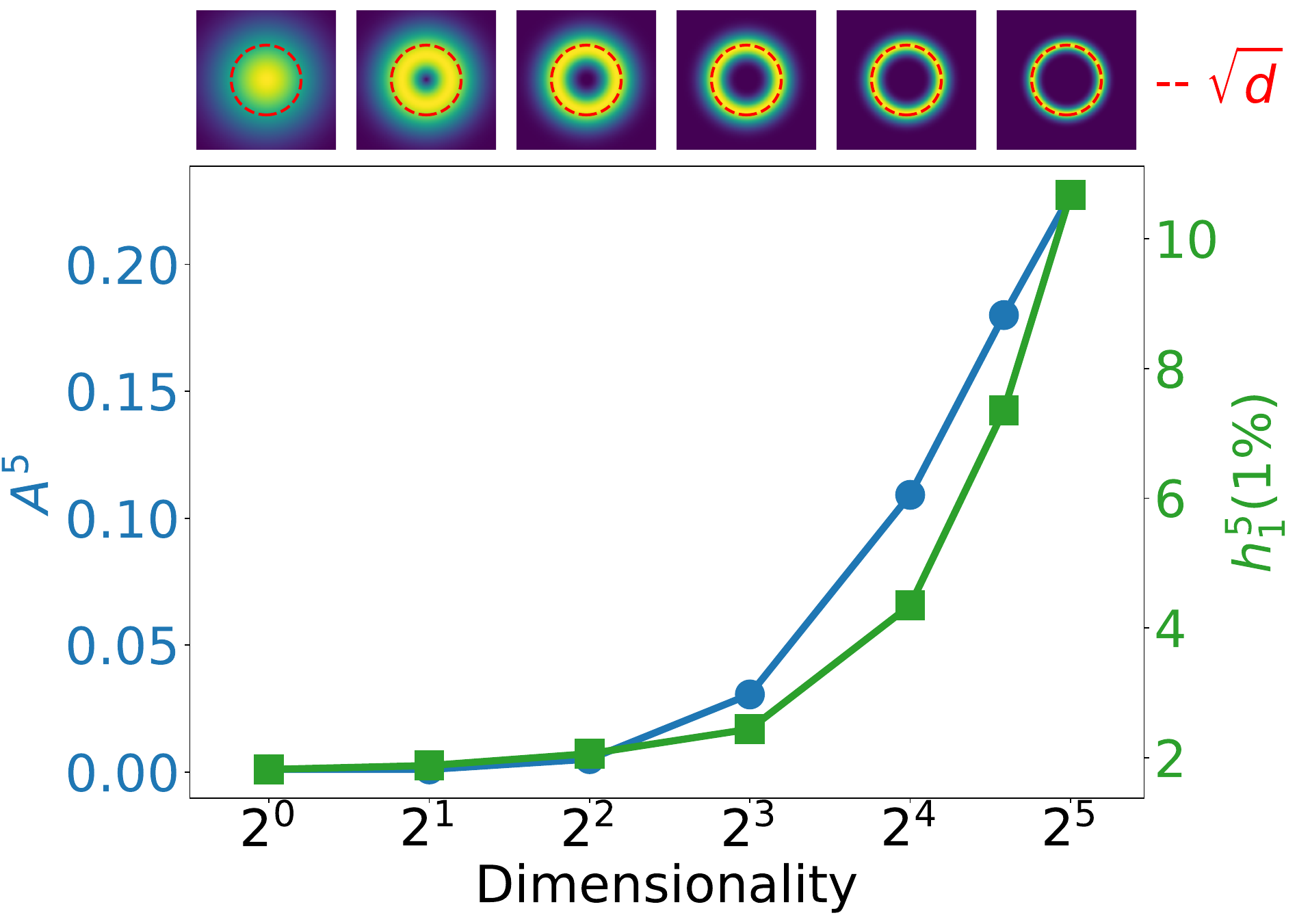}
    \caption{\textbf{Hubness Evolution} for standard Gaussian data ($N=20000$) as dimensionality increases.
    \textit{Top}: Intuitive 2D visualization of a Gaussian using a wrapped chi-squared distribution with $d$ degrees of freedom (lighter colors indicate higher density). As the dimension increases, samples concentrate on a thin spherical shell.
    \textit{Bottom}: The y-axes are not shared. Both hubness measures ($A^5$ and $h_1^5(1\%)$) indicate that hubness increases with dimensionality.
    }
    \label{fig:hubness_evolution}
  \end{center}
\end{figure}

\subsection{Hubness in Common Embedding Spaces}

\Cref{tab:hubness_measures} reports results for Cifar10 \cite{krizhevsky2009learning}, FFHQ \cite{kazemi2014one},
ImageNet \cite{deng2009imagenet}, and LSUN Bedroom \cite{yu2015lsun} datasets embedded with
VGG16 \cite{DBLP:journals/corr/SimonyanZ14a}, Inceptionv3 \cite{DBLP:conf/cvpr/SzegedyVISW16},
DINOv2 \cite{oquab2023dinov2}, and DINOv3 \cite{DBLP:journals/corr/abs-2508-10104} image encoders,
as well as the Free Music Archive \cite{DBLP:conf/ismir/DefferrardBVB17} dataset
embedded with the CLAP audio encoder \cite{DBLP:conf/icassp/ElizaldeDIW23}.
All exhibit hubness.
DINOv2, which has the lowest dimensionality (1024) among the image encoders (VGG16: 4096, Inceptionv3: 2048, DINOv3: 4096),
is the least affected, but still shows hubness.

For the least affected case, ImageNet with DINOv2 embeddings, the top 1\% most frequent points appear more than 4 times as often as the average point among the 5-nearest-neighbors of other points, and 10\% of points are antihubs, never appearing in any 5-nearest-neighbors list.
This hubness affects the reliability of nearest-neighbor relationships, which are crucial for distance-based generative model evaluation metrics.

\begin{table}[t]
  \caption{\textbf{Hubness measures for various datasets and embedding spaces}.
  $A^5$ is the proportion of antihubs ($O_5(x) = 0$), and $h_1^5(1\%)$ quantifies how
  much more frequently the top 1\% of points appear among $k$-nearest-neighbors compared to the average.
  In the absence of hubness, the $k$-occurrence distribution would be nearly symmetric
  around its mean
  (as in \Cref{fig:low_dim_occurrences}), yielding $h_1^5(1\%)$ values slightly above $2$
  and $A^5$ values below $0.01$.
  All reported values are at least twice as large, indicating pronounced hubness.}
  \label{tab:hubness_measures}
  \begin{center}
    \begin{small}
      \begin{sc}
        \begin{tabular}{lcrrr}
        \toprule
        Embedding & Dimension & Dataset & $h_1^5(1\%)$ & $A^5$ \\
        \midrule
        \multirow{4}{*}{VGG16} & \multirow{4}{*}{4096} & Cifar10 & 10.4 & 0.21 \\
         & & FFHQ & 9.8 & 0.23 \\
         & & ImageNet & 12.6 & 0.20 \\
         & & LSUN Bedroom & 16.6 & 0.31 \\
        \midrule
        \multirow{4}{*}{Inceptionv3} & \multirow{4}{*}{2048} & Cifar10 & 9.0 & 0.22 \\
         & & FFHQ & 8.3 & 0.20 \\
         & & ImageNet & 5.7 & 0.20 \\
         & & LSUN Bedroom & 12.9 & 0.28 \\
        \midrule
        \multirow{4}{*}{DINOv2} & \multirow{4}{*}{1024} & Cifar10 & 6.5 & 0.14 \\
         & & FFHQ & 5.8 & 0.12 \\
         & & ImageNet & 4.4 & 0.10 \\
         & & LSUN Bedroom & 5.8 & 0.13 \\
        \midrule
        \multirow{4}{*}{DINOv3} & \multirow{4}{*}{4096} & Cifar10 & 9.0 & 0.24 \\
         & & FFHQ & 10.1 & 0.26 \\
         & & ImageNet & 8.0 & 0.20 \\
         & & LSUN Bedroom & 9.0 & 0.23 \\
        \midrule
        CLAP & 1024 & Free Music Archive & 4.9 & 0.09 \\
        \bottomrule
        \end{tabular}
      \end{sc}
    \end{small}
  \end{center}
  \vskip -0.1in
\end{table}

\subsection{Hubness Reduction Comparison}
\label{sec:hubness_reduction_comparison}

\Cref{tab:hubness_reduction} compares various hubness reduction methods on the datasets and embeddings from \Cref{tab:hubness_measures}, showing that ICDM constently outperforms other methods. See
\Cref{sec:hubness_reduction_full_results} for full results.

\begin{table}[h]
  \caption{\textbf{Hubness reduction comparison.} For each method, we report its rank and value for $A^5$ and $h_1^5(1\%)$, averaged over five datasets and embeddings from \Cref{tab:hubness_measures}.
  "Sphere" is projection onto the unit sphere. Lower values are better. ICDM is best in all $17$ cases. For methods needing a neighborhood size $K$, we took the $K$ in $\{5,10,20,50\}$ with lowest $h_1^5(1\%)$. With $h_1^5(1\%)$ below $2$ and $A^5$ near $0$, ICDM eliminates hubness effectively.
  }
  \label{tab:hubness_reduction}
  \begin{center}
    \begin{small}
      \begin{sc}
        \begin{tabular}{lrrrr}
        \toprule
          \multirow{2}{*}{Method} & \multicolumn{2}{c}{$h_1^5(1\%)$} & \multicolumn{2}{c}{$A^5$} \\
          \cmidrule(lr){2-3} \cmidrule(lr){4-5}
           & Rank & Value & Rank & Value \\
        \midrule
        Original & 7.8 & 8.8 & 7.9 & 0.20 \\
        Sphere & 6.9 & 6.0 & 7.1 & 0.12 \\
        $\text{MP}^\text{Gauss}$ & 5.7 & 4.5 & 5.9 & 0.08 \\
        LS & 5.2 & 4.1 & 5.1 & 0.05 \\
        CSLS & 3.7 & 3.2 & 4.1 & 0.04 \\
        DSL & 3.2 & 3.3 & 2.0 & 0.01 \\
        NICDM & 2.6 & 3.1 & 3.0 & 0.03 \\
        ICDM & 1.0 & 1.7 & 1.0 & 0.00 \\
        \bottomrule
        \end{tabular}
      \end{sc}
    \end{small}
  \end{center}
  \vskip -0.1in
\end{table}

\newpage
\section{Proofs}
\subsection{Proof of \Cref{prop:density_estimator}}
\label{sec:proof_density_estimator}

\textbf{\Cref{prop:density_estimator}.} \textit{\densityEstimatorProp}

\begin{proof}
  Let $\hat{p}_{\text{K-NN}}$ denote the $K$-nearest-neighbors density estimator, by definition \cite{loftsgaarden1965nonparametric}:
\begin{equation*}
  \hat{p}_{\text{K-NN}}(x_i) = \frac{K}{N V_d (\NND_K(x_i))^d}
\end{equation*}
where $\NND_K(x_i)$ is the distance from $x_i$ to its $K$-th nearest-neighbor, $N$ is the number of samples, and $V_d$ is the volume of the unit ball in $d$ dimensions. We have:
\begin{align*}
  \NND_k(x_i) &= \left(\frac{k}{N V_d \hat{p}_{\text{K-NN}}(x_i)}\right)^{1/d}\\
  \mu_i &= \frac{1}{K} \sum_{k=1}^K \left(\frac{k}{N V_d \hat{p}_{\text{k-NN}}(x_i)}\right)^{1/d}\\
  \mu_i &= \frac{1}{(N V_d)^{1/d}} \frac{1}{K} \sum_{k=1}^K k^{1/d} \hat{p}_{\text{k-NN}}(x_i)^{-1/d}
\end{align*}
Substituting this expression for $\mu_i$ into the definition of $\hat{p}_{\mu, K}(x_i)$ yields:
\begin{align*}
  \hat{p}_{\mu, K}(x_i) &= \frac{1}{N V_d} \left(\frac{1}{(N V_d)^{1/d}} \frac{1}{K} \sum_{k=1}^K k^{1/d} \hat{p}_{\text{k-NN}}(x_i)^{-1/d}\right)^{-d} \left(\frac{1}{K} \sum_{k=1}^K k^{1/d}\right)^d\\
  &= \left(\frac{\sum_{k=1}^K k^{1/d}}{\sum_{k=1}^K k^{1/d} \hat{p}_{\text{k-NN}}(x_i)^{-1/d}}\right)^{d}\\
  &= \left(\sum_{k=1}^K \underbrace{\frac{k^{1/d}}{\sum_{j=1}^K j^{1/d}}}_{w_{K, k}} \hat{p}_{\text{k-NN}}(x_i)^{-1/d}\right)^{-d}
\end{align*}

The weights $w_{K, k}$ satisfy the conditions of a regular summability method \cite{toeplitz1911allgemeine,tucciarone1973development}:
\begin{itemize}
  \item $w_{K, k} \xrightarrow{K \to \infty} 0$ for any fixed $k \in \mathbb{N}$, 
  \item are positive and $\sum_{k=1}^K w_{K,k} = 1$, so $\sum_{k=1}^K |w_{K,k}| = \sum_{k=1}^K w_{K,k} = 1$, and the absolute sum of the weights is bounded by a constant independent of $K$.
  \item $\sum_{k=1}^K w_{K,k}=1$, so $\sum_{k=1}^K w_{K,k} \xrightarrow{K \to \infty} 1$.
\end{itemize}
With $k(N)$ such that $k(N) \xrightarrow{N \rightarrow \infty} \infty$, and $k(N)/N \xrightarrow{N \rightarrow \infty} 0$, $\hat{p}_{\text{k-NN}}(x_i) \xrightarrow{P} p(x_i)$.

By the Silverman-Toeplitz theorem \cite{toeplitz1911allgemeine,tucciarone1973development}, which guarantees that regular summability methods preserve limits, we have $\sum_{k=1}^K \frac{k^{1/d}}{\sum_{j=1}^K j^{1/d}} \hat{p}_{\text{k-NN}}(x_i)^{-1/d} \xrightarrow{P} p(x_i)^{-1/d}$ as $K \to \infty$ and $K/N \to 0$.

Finally, because the power function is continuous, $\hat{p}_{\mu, K}(x_i) \xrightarrow{P} p(x_i)$.
\end{proof}

\subsection{Proof of \Cref{proposition:crossover_dimension}}
\label{sec:proof_crossover_dimension}

\textbf{\Cref{proposition:crossover_dimension}.} \textit{\crossoverDimensionProp}

\begin{proof}
  Let $X_1,\dots,X_N$ be i.i.d. samples from $\mathcal N(0,I_d)$. The squared distance to the center satisfies $\|X_1\|^2 \sim \chi^2_d$, so $\mathbb E\|X_1\|^2=d$.

  For a fixed $X_1$, $\forall j = 2, \dots, N$, $X_j - X_1 \mid X_1 \sim \mathcal{N}(-X_1, I_d)$.
  Conditionally on $\|X_1\|^2 = r$, the random variables $\{\|X_j - X_1\|^2\}_{j=2}^N$ are i.i.d., and we have $\|X_j - X_1\|^2 \mid \|X_1\|^2 = r \sim \chi^2_d(\lambda = r)$, a noncentral chi-square distribution with $d$ degrees of freedom and noncentrality parameter $\lambda$, with cumulative distribution function $F_{\chi^2_d(\lambda=r)}$.

  $\NND_k(X_1)^2$ is the $k$-th order statistic of these $N-1$ variables. Its c.d.f. is such that:
  \begin{equation*}
    \mathbb{P}\left(\NND_k(X_1)^2\leq t \mid \|X_1\|^2 = r\right) = \mathbb{P}\left(\mathrm{Bin}(N-1,F_{\chi^2_d(\lambda=r)}(t))\geq k\right)
  \end{equation*} (see e.g. \citet{mood1950introduction} section VI 5.1).

  So, after integration to remove the conditioning on $\|X_1\|^2 = r$, we have:
  \begin{equation*}
    \mathbb{P}\left(\NND_k(X_1)^2\leq t\right) = \int_0^\infty \mathbb{P}\left(\mathrm{Bin}(N-1,F_{\chi^2_d(\lambda=r)}(t))\geq k\right) f_{\chi^2_d}(r) dr
  \end{equation*}
  By definition, the median of $\NND_k(X_1)^2$ is the value $t$ for which $\Pr\left(\NND_k(X_1)^2\leq t\right) = \frac{1}{2}$. When $t=d$, this gives the equation in the proposition.
\end{proof}

\newpage
\section{Empirical Convergence of ICDM}
\label{sec:empirical_convergence}

GICDM relies on ICDM uniformizing the density of the real data manifold, i.e., $|\mu_i^r - \bar{\mu}^r| < \epsilon$ for all $i$ for a small $\epsilon$ after a finite number of iterations. In this section, we empirically validate this pointwise convergence across various datasets and embeddings.

We used 16 combinations of embedders (DINOv2, DINOv3, Inceptionv3, VGG16) and datasets (CIFAR-10, ImageNet, FFHQ, LSUN-bedroom) and monitored the relative difference between individual distance averages $\mu_i^r$ and their overall mean $\bar{\mu}^r$, using $K=10$ and $K=100$ (the two values used when evaluating metrics with $k=5$).

The results, shown in \Cref{fig:icdm_convergence}, demonstrate fast pointwise convergence.
Across all tested scenarios, the relative deviation values $\frac{|\mu_i^r - \bar{\mu}^r|}{\bar{\mu}^r}$ fall within the following ranges:
\begin{itemize}
    \item $[0.84, 1.51]$ after 1 iteration (equivalent to NICDM, the non-iterative version),
    \item $[0.985, 1.022]$ after 5 iterations,
    \item $[0.9989, 1.0017]$ after 10 iterations (the stopping criterion used in our experiments),
    \item $[0.99983, 1.00018]$ after 15 iterations,
    \item $[0.999966, 1.000036]$ after 20 iterations.
\end{itemize}

After 10 iterations, the maximum deviation of any individual $\mu_i^r$ from the mean is less than $0.17\%$ across all 16 diverse scenarios. Consequently, the pointwise convergence assumption holds robustly in practice, and stopping at $10$ ensures convergence for GICDM.

\begin{figure}[h]
  \begin{center}
    \includegraphics[width=0.6\textwidth]{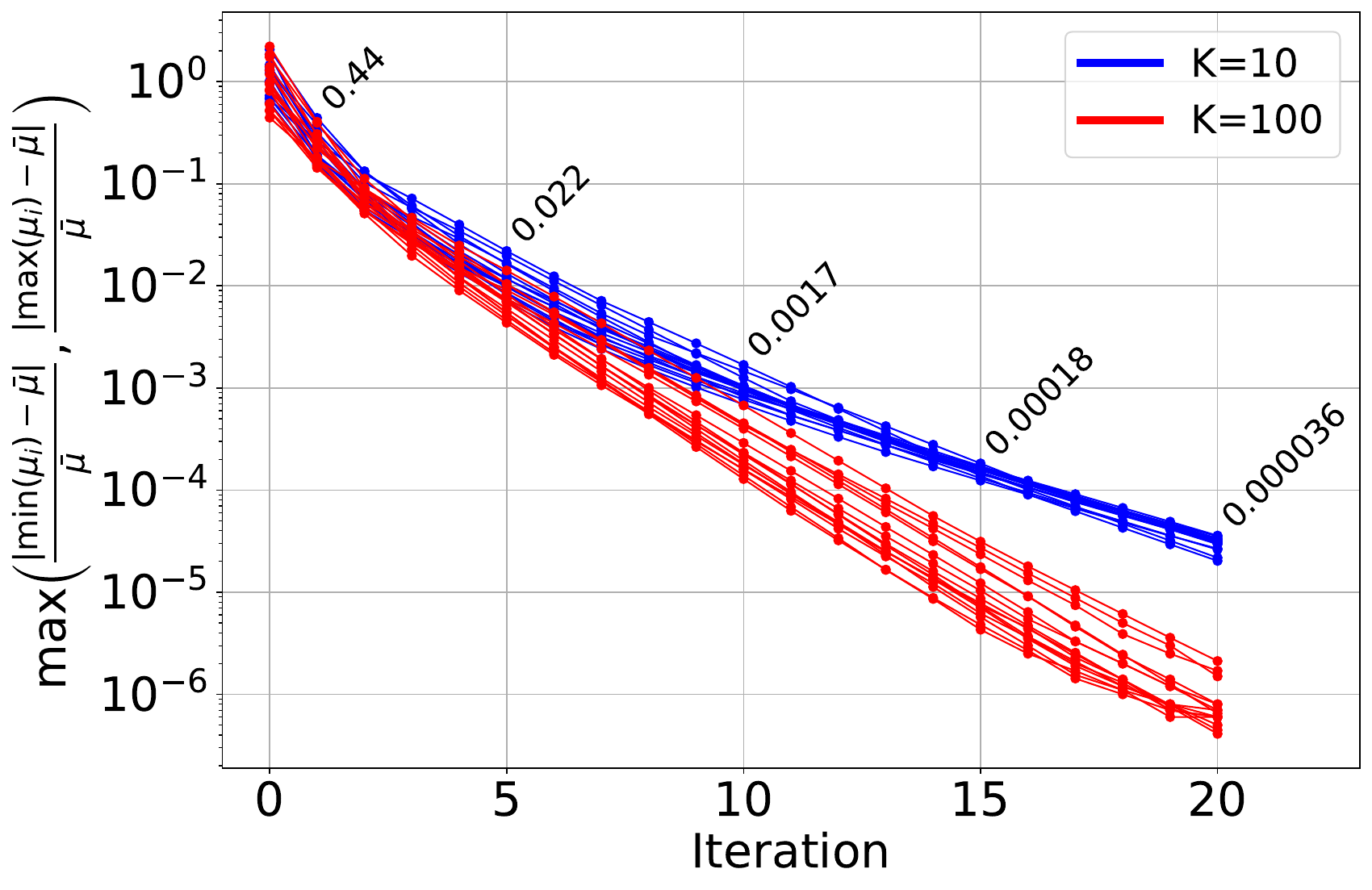}
    \caption{\textbf{Empirical convergence of ICDM.} Maximum relative deviation of individual neighborhood distance averages $\mu_i^r$ from their overall mean $\bar{\mu}^r$ across iterations, for 16 combinations of datasets and embeddings, with $K=10$ and $K=100$ (the two values used when evaluating metrics with $k=5$). Values from the above results at iterations 1, 5, 10, 15 and 20 show the biggest relative difference. After 10 iterations, the maximum deviation is less than $0.17\%$ across all scenarios, confirming fast pointwise convergence.}
    \label{fig:icdm_convergence}
  \end{center}
\end{figure}

\newpage
\section{Synthetic Benchmark}
\label{appendix:position_benchmark}

\begin{table}[b!]
  \caption{\textbf{Fidelity metrics benchmark with and without GICDM}. Tests are grouped by desiderata type: \emph{Purpose} checks whether the curve shape of the evaluated metric matches the intended test objective, while \emph{Bounds} evaluates whether the metric yields the expected values (e.g., 1 for perfect generated data). For each metric, T indicates a passed test, and \color{red}F \color{black} indicates failure. Adding GICDM improves Purpose results on several tests, such as "Gaussian Std. Deviation Difference" and "Hypersphere Surface", where the real data is fixed and the generated data varies in radius (variance for a Gaussian or radius for a sphere). GICDM improves Bounds results for the normalized metric Clipped Density.}
  \label{tab:pos_fidelity}
  \centering
  \begin{small}
  \begin{sc}
  \begin{tabular}{ll c| c|c c|c c|c c|c c|c c}
  \toprule
  Desiderata & Sanity Check & \makebox[6pt]{\rotatebox{90}{$\alpha$-Precision}} & \makebox[6pt]{\rotatebox{90}{Precision Cover}} & \makebox[6pt]{\rotatebox{90}{Precision}} & \makebox[6pt]{\rotatebox{90}{+ GICDM}} & \makebox[6pt]{\rotatebox{90}{Density}} & \makebox[6pt]{\rotatebox{90}{+ GICDM}} & \makebox[6pt]{\rotatebox{90}{symPrecision}} & \makebox[6pt]{\rotatebox{90}{+ GICDM}} & \makebox[6pt]{\rotatebox{90}{P-precision}} & \makebox[6pt]{\rotatebox{90}{+ GICDM}} & \makebox[6pt]{\rotatebox{90}{Clipped Density}} & \makebox[6pt]{\rotatebox{90}{+ GICDM}} \\
  \midrule
  \multirow[c]{14}{*}{Purpose} & Discrete Num. vs. Continuous Num. & \color{red} F & \color{red} F & \color{red} F & \color{red} F & \color{red} F & \color{red} F & \color{red} F & \color{red} F & \color{red} F & \color{red} F & \color{red} F & \color{red} F \\
  & Gaussian Mean Difference  & T & T & T & T & T & T & T & T & T & T & T & T \\
  & Gaussian Mean Difference + Outlier  & T & T & \color{red} F & T & T & T & \color{red} F & T & T & T & T & T \\
  & Gaussian Mean Difference + Pareto & T & T & T & T & T & T & T & T & T & T & T & T \\
  & Gaussian Std. Deviation Difference & \color{red} F & \color{red} F & \color{red} F & T & \color{red} F & T & \color{red} F & \color{red} F & \color{red} F & T & \color{red} F & T \\
  & Hypercube, Varying Sample Size & \color{red} F & \color{red} F & \color{red} F & \color{red} F & \color{red} F & \color{red} F & \color{red} F & \color{red} F & \color{red} F & \color{red} F & \color{red} F & \color{red} F \\
  & Hypercube, Varying Syn. Size & \color{red} F & \color{red} F & \color{red} F & \color{red} F & \color{red} F & \color{red} F & \color{red} F & \color{red} F & \color{red} F & \color{red} F & \color{red} F & \color{red} F \\
  & Hypersphere Surface & T & \color{red} F & \color{red} F & T & \color{red} F & T & T & T & \color{red} F & T & \color{red} F & T \\
  & Mode Collapse  & T & \color{red} F & \color{red} F & T & T & T & T & T & T & T & T & T \\
  & Mode Dropping + Invention & \color{red} F & \color{red} F & T & T & T & T & \color{red} F & \color{red} F & T & T & T & T \\
  & One Disjoint Dim. + Many Identical Dim.  & \color{red} F & \color{red} F & \color{red} F & \color{red} F & \color{red} F & \color{red} F & \color{red} F & \color{red} F & \color{red} F & \color{red} F & \color{red} F & \color{red} F \\
  & Sequential Mode Dropping  & \color{red} F & \color{red} F & T & T & T & T & \color{red} F & \color{red} F & T & T & T & T \\
  & Simultaneous Mode Dropping  & \color{red} F & \color{red} F & T & T & T & T & \color{red} F & \color{red} F & T & T & T & T \\
  & Sphere vs. Torus  & T & \color{red} F & T & T & T & T & T & T & T & T & T & T \\
  \midrule
  Hyperparam. & Hypercube, Varying Syn. Size  & T & \color{red} F & T & T & T & T & \color{red} F & \color{red} F & T & T & T & T \\
  \midrule
  Data & Hypercube, Varying Sample Size  & \color{red} F & \color{red} F & \color{red} F & \color{red} F & \color{red} F & \color{red} F & \color{red} F & \color{red} F & \color{red} F & \color{red} F & \color{red} F & \color{red} F \\
  \midrule
  \multirow[c]{13}{*}{Bounds} & Discrete Num. vs. Continuous Num.  & \color{red} F & \color{red} F & \color{red} F & \color{red} F & \color{red} F & \color{red} F & \color{red} F & \color{red} F & \color{red} F & \color{red} F & \color{red} F & \color{red} F \\
  & Gaussian Mean Difference & \color{red} F & T & \color{red} F & \color{red} F & T & \color{red} F & \color{red} F & \color{red} F & \color{red} F & \color{red} F & T & T \\
  & Gaussian Mean Difference + Outlier & \color{red} F & T & \color{red} F & \color{red} F & \color{red} F & \color{red} F & \color{red} F & \color{red} F & \color{red} F & \color{red} F & \color{red} F & T \\
  & Gaussian Mean Difference + Pareto & T & T & \color{red} F & \color{red} F & T & \color{red} F & T & \color{red} F & T & \color{red} F & T & T \\
  & Gaussian Std. Deviation Difference  & \color{red} F & \color{red} F & \color{red} F & \color{red} F & \color{red} F & \color{red} F & \color{red} F & \color{red} F & \color{red} F & \color{red} F & \color{red} F & T \\
  & Hypersphere Surface & \color{red} F & \color{red} F & \color{red} F & \color{red} F & \color{red} F & \color{red} F & T & \color{red} F & \color{red} F & \color{red} F & \color{red} F & T \\
  & Mode Collapse & \color{red} F & T & \color{red} F & \color{red} F & T & \color{red} F & \color{red} F & \color{red} F & \color{red} F & \color{red} F & T & T \\
  & Mode Dropping + Invention  & \color{red} F & \color{red} F & T & \color{red} F & T & \color{red} F & \color{red} F & \color{red} F & \color{red} F & \color{red} F & T & T \\
  & One Disjoint Dim. + Many Identical Dim. & \color{red} F & \color{red} F & \color{red} F & \color{red} F & \color{red} F & \color{red} F & \color{red} F & \color{red} F & \color{red} F & \color{red} F & \color{red} F & \color{red} F \\
  & Scaling One Dimension & T & T & T & T & T & T & T & T & T & T & T & T \\
  & Sequential Mode Dropping  & \color{red} F & \color{red} F & \color{red} F & \color{red} F & T & \color{red} F & \color{red} F & \color{red} F & \color{red} F & \color{red} F & T & T \\
  & Simultaneous Mode Dropping & \color{red} F & \color{red} F & \color{red} F & \color{red} F & T & \color{red} F & \color{red} F & \color{red} F & \color{red} F & \color{red} F & T & T \\
  & Sphere vs. Torus & \color{red} F & T & T & T & T & T & T & T & T & T & T & T \\
  \midrule
  Invariance & Scaling One Dimension & T & T & T & T & T & T & T & T & T & T & T & T \\
  \midrule
  \bottomrule
  \end{tabular}
  \end{sc}
  \end{small}
\end{table}

\begin{table}[t!]
    \caption{\textbf{Coverage metrics benchmark with and without GICDM}. This table, analogous to \Cref{tab:pos_fidelity}, reports results for coverage metrics. T indicates a passed test, while \color{red}F \color{black} indicates failure. For coverage metrics, L and H denote that the metric correctly identifies low (L) or high (H) coverage scenarios; either outcome is considered a pass. As with fidelity metrics, GICDM improves Purpose results on several tests, such as "Gaussian Std. Deviation Difference" and "Hypersphere Surface." GICDM also improves Bounds results for the normalized metric Clipped Coverage.}
    \label{tab:pos_coverage}
    \centering
    \begin{small}
    \begin{sc}
\begin{tabular}{ll c | c|c c|c c|c c|c c|c c}
  \toprule
  Desiderata & Sanity Check & \makebox[6pt]{\rotatebox{90}{$\beta$-Recall}} & \makebox[6pt]{\rotatebox{90}{Recall Cover}} & \makebox[6pt]{\rotatebox{90}{Recall}} & \makebox[6pt]{\rotatebox{90}{+ GICDM}} & \makebox[6pt]{\rotatebox{90}{Coverage}} & \makebox[6pt]{\rotatebox{90}{+ GICDM}} & \makebox[6pt]{\rotatebox{90}{symRecall}} & \makebox[6pt]{\rotatebox{90}{+ GICDM}} & \makebox[6pt]{\rotatebox{90}{P-recall}} & \makebox[6pt]{\rotatebox{90}{+ GICDM}} & \makebox[6pt]{\rotatebox{90}{Clipped Coverage}} & \makebox[6pt]{\rotatebox{90}{+ GICDM}} \\
  \midrule
  \multirow[c]{14}{*}{Purpose} & Discrete Num. vs. Continuous Num. & \color{red} F & \color{red} F & \color{red} F & \color{red} F & \color{red} F & \color{red} F & \color{red} F & \color{red} F & \color{red} F & \color{red} F & \color{red} F & \color{red} F \\
  & Gaussian Mean Difference  & T & T & T & T & T & T & T & T & T & T & T & T \\
  & Gaussian Mean Difference + Outlier    & T & T & \color{red} F & T & T & T & T & T & T & T & T & T \\
  & Gaussian Mean Difference + Pareto & T & T & T & T & T & T & T & T & T & T & T & T \\
  & Gaussian Std. Deviation Difference    & L & \color{red} F & \color{red} F & H & \color{red} F & L & L & L & \color{red} F & H & \color{red} F & L \\
  & Hypercube, Varying Sample Size    & \color{red} F & \color{red} F & \color{red} F & \color{red} F & \color{red} F & \color{red} F & \color{red} F & \color{red} F & \color{red} F & \color{red} F & \color{red} F & \color{red} F \\
  & Hypercube, Varying Syn. Size    & \color{red} F & \color{red} F & \color{red} F & \color{red} F & \color{red} F & \color{red} F & \color{red} F & \color{red} F & \color{red} F & \color{red} F & \color{red} F & \color{red} F \\
  & Hypersphere Surface    & T & \color{red} F & \color{red} F & T & \color{red} F & T & T & T & \color{red} F & T & \color{red} F & T \\
  & Mode Collapse   & \color{red} F & L & \color{red} F & \color{red} F & \color{red} F & \color{red} F & \color{red} F & \color{red} F & \color{red} F & \color{red} F & L & L \\
  & Mode Dropping + Invention    & \color{red} F & \color{red} F & H & H & \color{red} F & \color{red} F & \color{red} F & H & H & H & L & L \\
  & One Disjoint Dim. + Many Identical Dim.    & \color{red} F & \color{red} F & \color{red} F & \color{red} F & \color{red} F & \color{red} F & \color{red} F & \color{red} F & \color{red} F & \color{red} F & \color{red} F & \color{red} F \\
  & Sequential Mode Dropping    & \color{red} F & T & T & T & T & T & T & T & T & T & T & T \\
  & Simultaneous Mode Dropping    & \color{red} F & T & T & T & T & T & T & T & T & T & T & T \\
  & Sphere vs. Torus    & T & \color{red} F & \color{red} F & T & \color{red} F & T & \color{red} F & T & \color{red} F & T & T & T \\
  \midrule
  Hyperparam. & Hypercube, Varying Syn. Size    & \color{red} F & \color{red} F & \color{red} F & \color{red} F & \color{red} F & \color{red} F & \color{red} F & \color{red} F & \color{red} F & \color{red} F & \color{red} F & \color{red} F \\
  \midrule
  Data & Hypercube, Varying Sample Size   & \color{red} F & \color{red} F & \color{red} F & \color{red} F & \color{red} F & \color{red} F & \color{red} F & \color{red} F & \color{red} F & \color{red} F & \color{red} F & \color{red} F \\
  \midrule
  \multirow[c]{13}{*}{Bounds} & Discrete Num. vs. Continuous Num. & \color{red} F & \color{red} F & \color{red} F & \color{red} F & \color{red} F & \color{red} F & \color{red} F & \color{red} F & \color{red} F & \color{red} F & \color{red} F & \color{red} F \\
  & Gaussian Mean Difference   & \color{red} F & T & \color{red} F & \color{red} F & T & \color{red} F & \color{red} F & \color{red} F & \color{red} F & \color{red} F & T & T \\
  & Gaussian Mean Difference + Outlier    & \color{red} F & T & \color{red} F & \color{red} F & T & \color{red} F & \color{red} F & \color{red} F & \color{red} F & \color{red} F & T & T \\
  & Gaussian Mean Difference + Pareto & \color{red} F & T & \color{red} F & \color{red} F & T & \color{red} F & T & \color{red} F & T & \color{red} F & T & T \\
  & Gaussian Std. Deviation Difference    & \color{red} F & \color{red} F & \color{red} F & \color{red} F & \color{red} F & \color{red} F & \color{red} F & \color{red} F & \color{red} F & \color{red} F & \color{red} F & L \\
  & Hypersphere Surface    & \color{red} F & \color{red} F & \color{red} F & \color{red} F & \color{red} F & \color{red} F & T & \color{red} F & \color{red} F & \color{red} F & \color{red} F & T \\
  & Mode Collapse    & \color{red} F & T & \color{red} F & \color{red} F & T & \color{red} F & \color{red} F & \color{red} F & \color{red} F & \color{red} F & T & T \\
  & Mode Dropping + Invention    & \color{red} F & T & T & \color{red} F & T & \color{red} F & T & \color{red} F & \color{red} F & \color{red} F & T & T \\
  & One Disjoint Dim. + Many Identical Dim.    & T & \color{red} F & \color{red} F & \color{red} F & \color{red} F & \color{red} F & \color{red} F & \color{red} F & \color{red} F & \color{red} F & \color{red} F & \color{red} F \\
  & Scaling One Dimension    & T & T & \color{red} F & T & T & T & T & T & T & T & T & T \\
  & Sequential Mode Dropping    & \color{red} F & T & \color{red} F & \color{red} F & T & \color{red} F & \color{red} F & \color{red} F & \color{red} F & \color{red} F & T & T \\
  & Simultaneous Mode Dropping   & \color{red} F & T & \color{red} F & \color{red} F & T & \color{red} F & \color{red} F & \color{red} F & \color{red} F & \color{red} F & T & T \\
  & Sphere vs. Torus    & T & \color{red} F & T & T & \color{red} F & \color{red} F & \color{red} F & T & T & T & T & T \\
  \midrule
  Invariance & Scaling One Dimension    & T & T & \color{red} F & T & T & T & T & T & T & T & T & T \\
  \midrule
  \bottomrule
  \end{tabular}
  \end{sc}
  \end{small}
\end{table}

\subsection{Setup}

\citet{raisa2025position} recently proposed a synthetic benchmark to evaluate fidelity and coverage metrics for generative models, from which they concluded that no existing fidelity or coverage metric was fully satisfactory. In this section, we evaluate the impact of GICDM on this benchmark.

We apply several changes to the benchmark to tackle issues pointed out recently \cite{salvy2026enhanced}:
\begin{itemize}
  \item We used \texttt{export PYTHONHASHSEED=42} to ensure reproducibility across runs.
  \item For tests with constant sample size, we used $10000$ samples instead of $1000$ to remove instability issues. To mitigate partially the increased computational cost, we used $10$ or $11$ values per tests instead of $20$ or $51$. For Precision Cover and Recall Cover, we used the implementation from \citet{salvy2026enhanced} for faster computation.
  \item We corrected the success criterion for coverage metrics in the "Mode Dropping + Invention" test: "L" instead of failure for a decrease in coverage as invented modes are added.
\end{itemize}

We made one additional change to the success criteria, specifically for the "Gaussian Std. Deviation Difference" test. In this test, the real data is sampled from a standard Gaussian distribution, while the generated data is sampled from a Gaussian with the same mean but varying standard deviation. The test is conducted in dimensions $1$, $8$, and $64$.

In the original benchmark, the success criterion for fidelity metrics was to observe a high fidelity score for low standard deviation values (up to $1$), followed by a decrease in fidelity as the standard deviation of the generated data increased beyond $1$. This is intuitive in low dimensions, where a generated point from a distribution with lower variance than the real data will fall into denser regions of the real data distribution, resulting in high fidelity.

However, in high dimensions, Gaussian distributions resemble spheres with an empty center. As a result, a generated point from a Gaussian with lower standard deviation than the real data will be located in the empty center region, far from any real points, and should therefore have low fidelity. Consequently, we modified the success criterion for this test in dimension $64$ to match that of the "Hypersphere Surface" test, which is the analogous test using spheres instead of Gaussians. For simplicity, we removed the criterion for dimension $8$, as the ideal behavior in this intermediate dimension is not clear.

\subsection{Results}
We ran the benchmark on standard metrics as well as on distance-based metrics with GICDM correction. Results are shown in \Cref{tab:pos_fidelity,tab:pos_coverage}. T indicates a passed test, while \color{red}F \color{black} indicates failure. For coverage metrics, the benchmark distinguishes between high (H, support-based) diversity metrics and low (L, density-based) coverage metrics. Either is considered a success as long as the metric consistently exhibits one behavior: L or H.

GICDM consistently improves Purpose results (shape of the curve) on both the "Gaussian Std. Deviation Difference" and "Hypersphere Surface" tests.

The hypersphere test was already passed by symmetric metrics without GICDM, as these metrics were specifically designed for this scenario. However, symPrecision fails both with and without GICDM on the Gaussian std test, as it exhibits the behavior expected for spheres in both high and low dimensions (i.e., low fidelity for low std values, even in dimension 1). The symmetric approach does not work as soon as there is more than one mode, as shown in \Cref{fig:hypersphere_test}.

For the Bounds criteria, which evaluate whether metrics yield the exact expected values, we observe clear improvements for Clipped metrics. These metrics are normalized, either relative to the score of the real set or via a closed-form formula, and GICDM preserves this normalization effectively. In contrast, other metrics exhibit less robust normalization after GICDM correction, sometimes resulting in ideal values slightly below 1 (e.g., around 0.9). This suggests that normalization by the real set score, as implemented in Clipped Density, could also be advantageous for other metrics after applying GICDM.

\subsection{Remaining Failures}

Even for the top metrics with GICDM, some failures remain. We discuss these below.

\textit{Discrete Num. vs. Continuous Num.}: This test aims to detect when generated data is discrete rather than continuous, or vice versa. None of the metrics is designed to detect this failure mode.

\textit{One Disjoint Dim. + Many Identical Dim.}: The real data is sampled from a standard Gaussian distribution in $d$ dimensions, while the generated data is sampled from a Gaussian that matches the real one in $d-1$ dimensions but has variance $6$ times larger in one dimension. As noted by \citet{raisa2025position}, for metrics based on Euclidean distances, this difference is averaged out as $d$ increases.

\textit{Hypercube tests: Varying Sample Size and Varying Syn. Size}: In this test from \citet{cheema2023precision}, the real and synthetic distributions are uniform on $d$-dimensional hypercubes of side $1$, with overlapping volume $0.2$, and the sample sizes vary. To maintain a fixed overlapping volume of $0.2$, the distance $h$ between the corners of the two hypercubes must satisfy $(1 - h)^{d}=0.2$, i.e., $h = 1 - 0.2^{1/d}$. Thus, as observed by \citet{raisa2025position}, the distance between points in the real and synthetic hypercubes decreases as the dimension increases.

These failures arise from limitations of Euclidean distances. The selection of embedding space and distance metric is critical to representing data in a way that captures its underlying structure. For example, if variations along one dimension are semantically more important than others, using Euclidean distance without adjustment may yield suboptimal results.

\newpage
\section{Evaluation on Real Datasets}

\begin{figure*}[b!]
    \centering
    \includegraphics[width=0.95\textwidth]{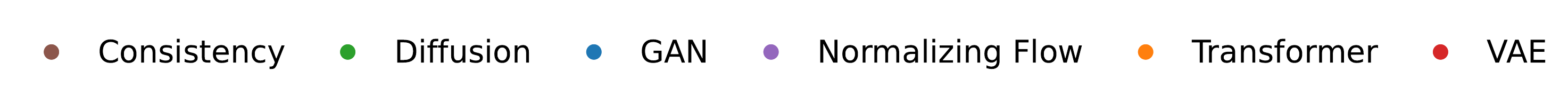}
    \vspace{-0.1cm}

    \centering
    \begin{subfigure}[b]{0.1\textwidth}
    \end{subfigure}
    \hfill
    \begin{subfigure}[b]{0.2\textwidth}
        \centering
        {CIFAR-10}\\
    \end{subfigure}
    \hfill
    \begin{subfigure}[b]{0.2\textwidth}
        \centering
        {ImageNet}\\
    \end{subfigure}
    \hfill
    \begin{subfigure}[b]{0.2\textwidth}
        \centering
        {LSUN Bedroom}\\
    \end{subfigure}
    \hfill
    \begin{subfigure}[b]{0.2\textwidth}
        \centering
        {FFHQ}\\
    \end{subfigure}

    \centering
    \begin{subfigure}[b]{0.01\textwidth}
    \end{subfigure}
    \hfill
    \begin{subfigure}[b]{0.234\textwidth}
        \centering
        \includegraphics[width=1.0\textwidth]{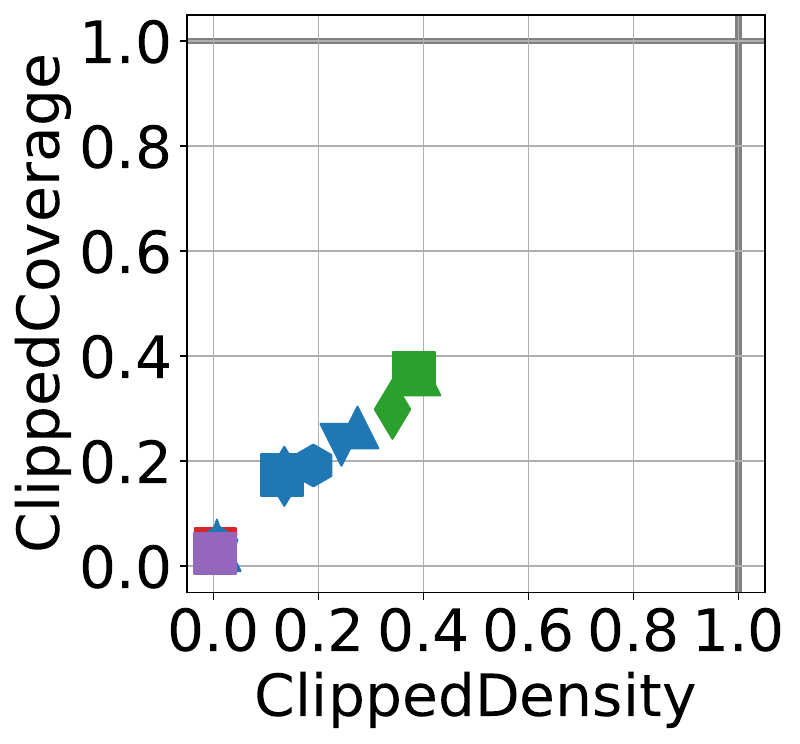}
    \end{subfigure}
    \hfill
    \begin{subfigure}[b]{0.223\textwidth}
        \centering
        \includegraphics[width=\textwidth]{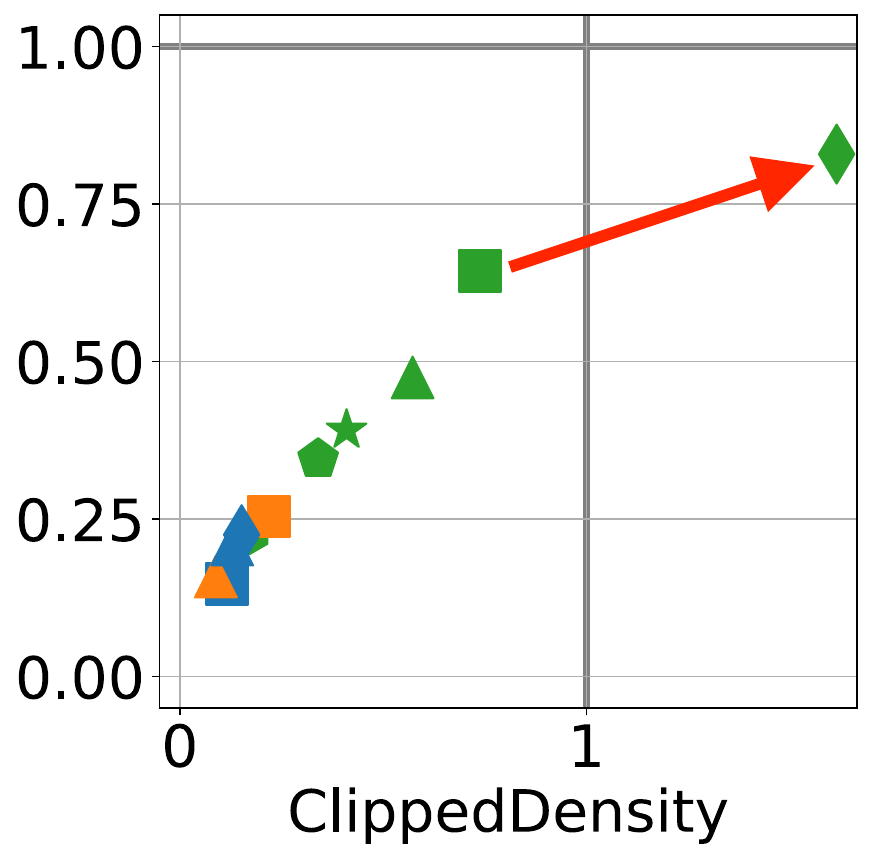}
    \end{subfigure}
    \hfill
    \begin{subfigure}[b]{0.217\textwidth}
        \centering
        \includegraphics[width=\textwidth]{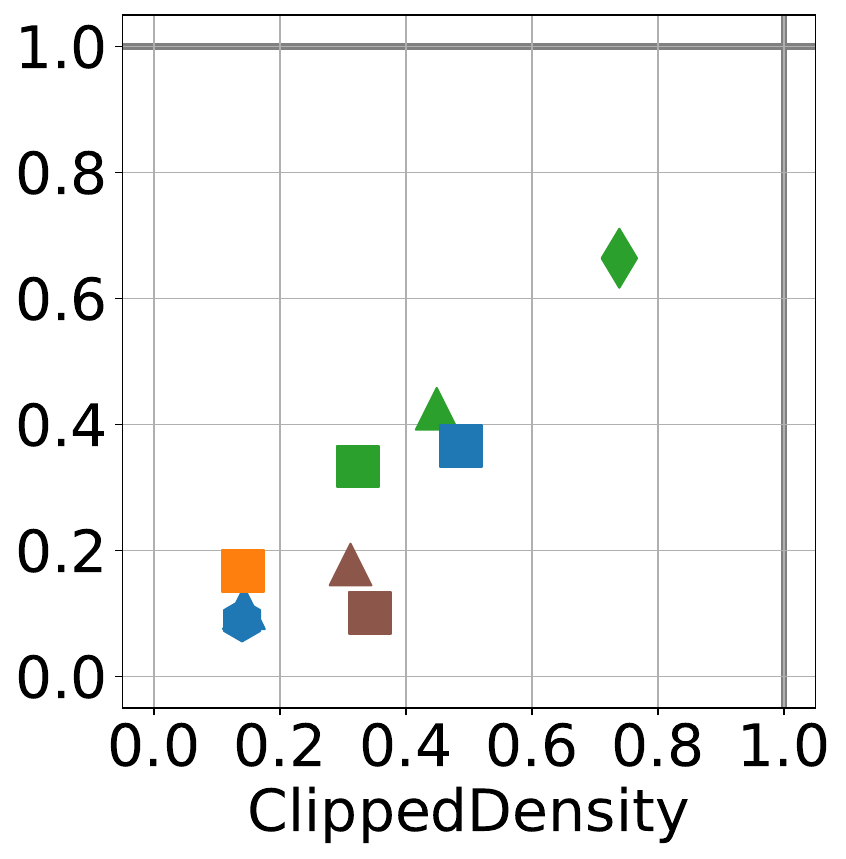}
    \end{subfigure}
    \hfill
    \begin{subfigure}[b]{0.218\textwidth}
        \centering
        \includegraphics[width=\textwidth]{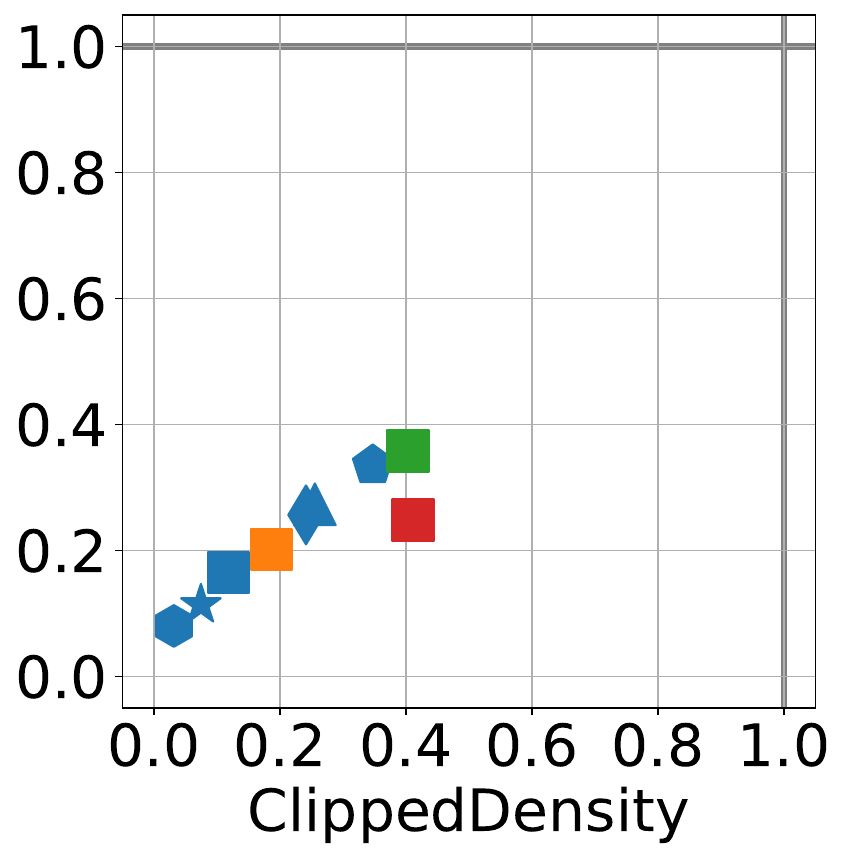}
    \end{subfigure}
        
    \centering
    \begin{subfigure}[b]{0.25\textwidth}
        \centering
        \includegraphics[width=\textwidth]{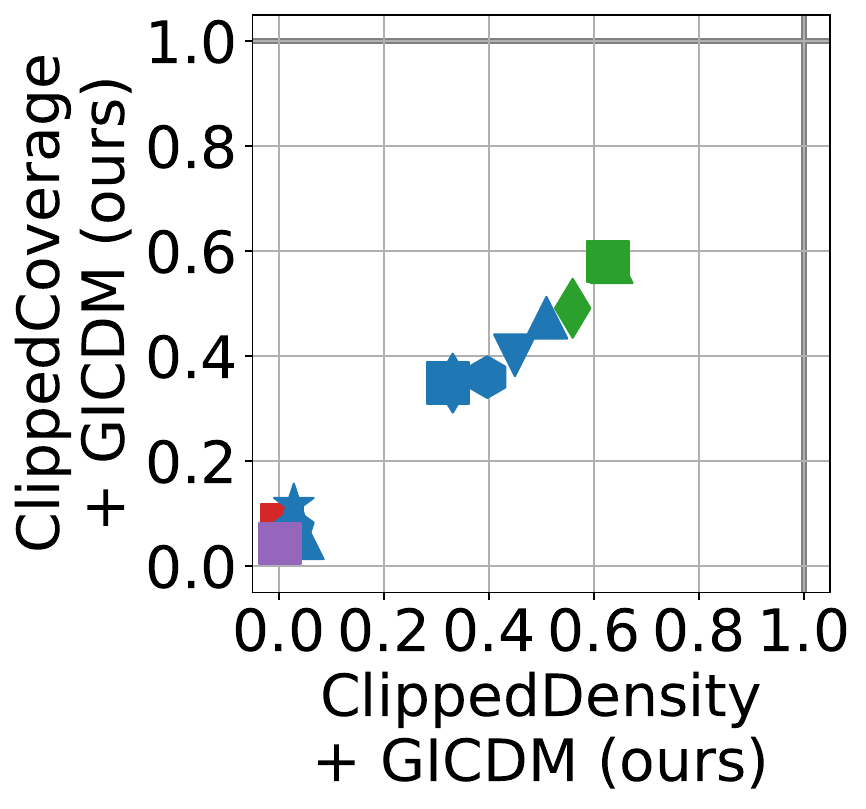}
    \end{subfigure}
    \hfill
    \begin{subfigure}[b]{0.23\textwidth}
        \centering
        \includegraphics[width=\textwidth]{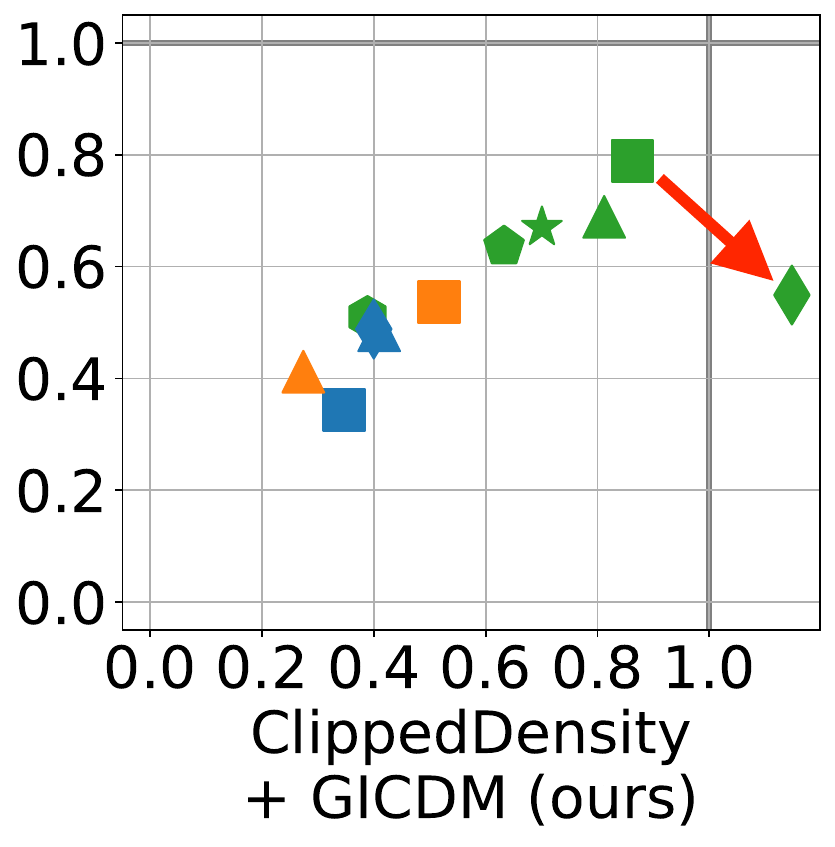}
    \end{subfigure}
    \hfill
    \begin{subfigure}[b]{0.213\textwidth}
        \centering
        \includegraphics[width=\textwidth]{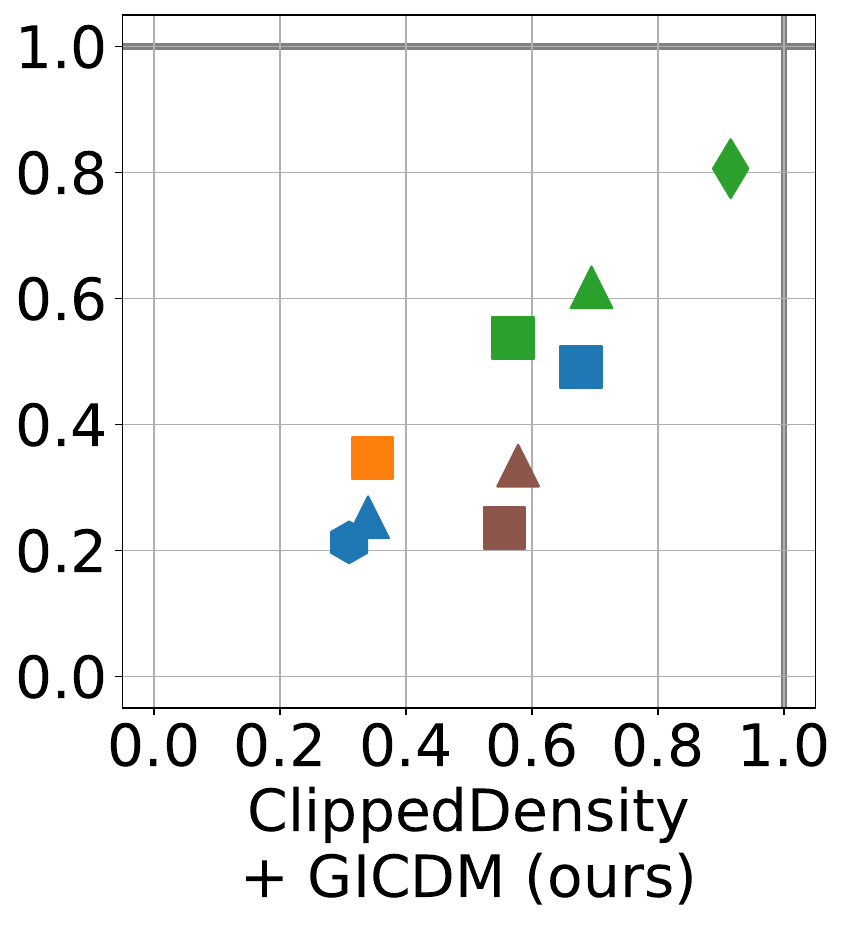}
    \end{subfigure}
    \hfill
    \begin{subfigure}[b]{0.214\textwidth}
        \centering
        \includegraphics[width=\textwidth]{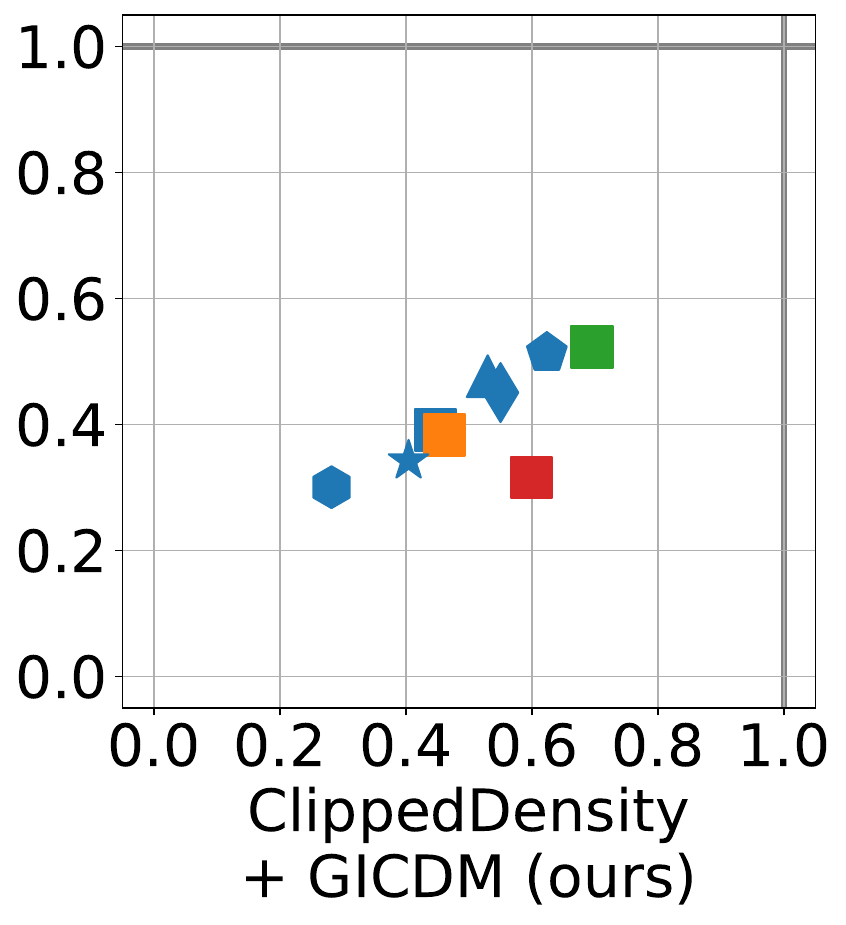}
    \end{subfigure}

    \begin{subfigure}[t]{0.01\textwidth}
    \end{subfigure}
    \hfill
    \begin{subfigure}[t]{0.229\textwidth}
        \centering
        \includegraphics[width=0.8\textwidth]{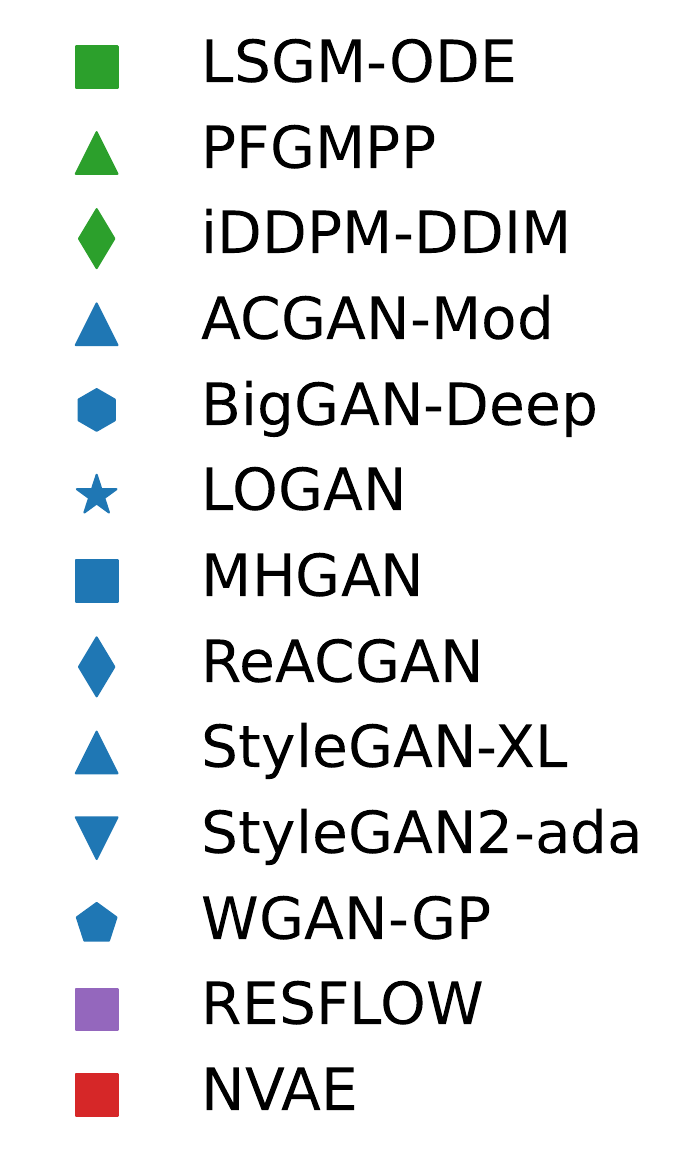}
    \end{subfigure}
    \hfill
    \begin{subfigure}[t]{0.24\textwidth}
       \centering
       \raisebox{15pt}[0pt][0pt]{\includegraphics[width=0.82\textwidth, trim={0 0 0 15}, clip]{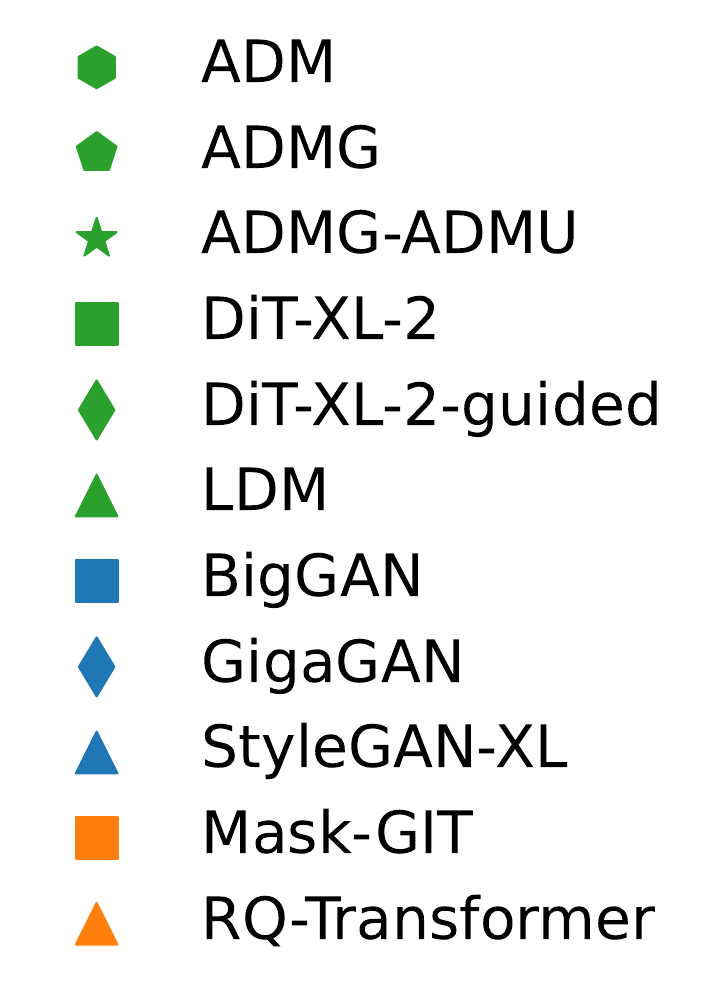}}
    \end{subfigure}   
    \hfill
    \begin{subfigure}[t]{0.24\textwidth}
        \centering
        \raisebox{29pt}[0pt][0pt]{\includegraphics[width=0.87\textwidth, trim={0 0 0 30}, clip]{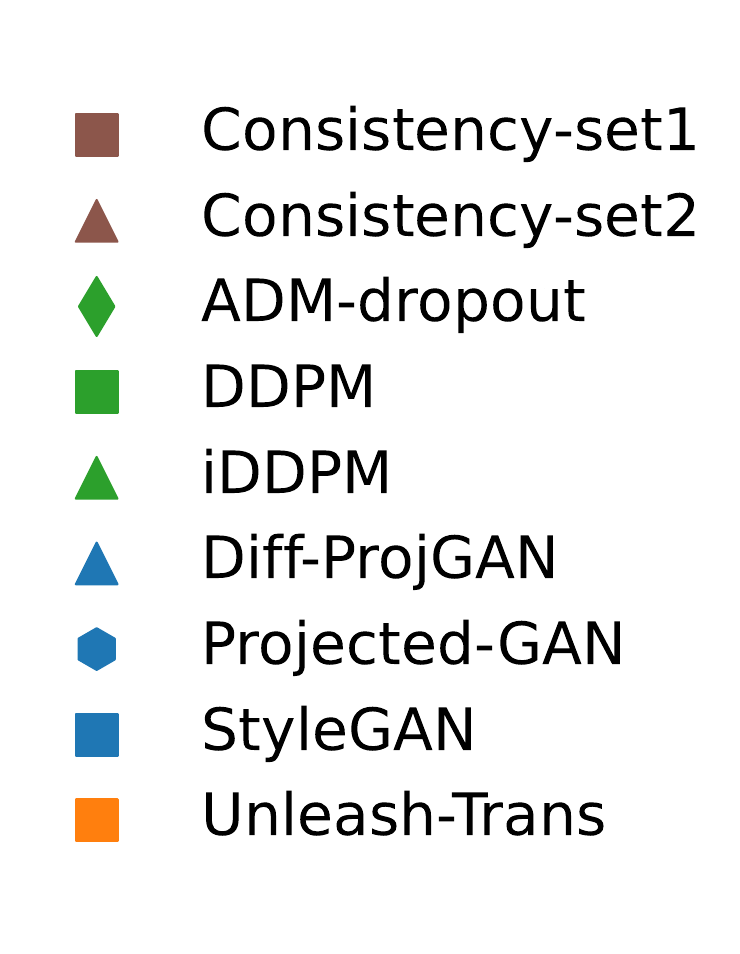}}
    \end{subfigure}
    \hfill
    \begin{subfigure}[t]{0.24\textwidth}
        \centering
        \raisebox{30pt}[0pt][0pt]{\includegraphics[width=0.80\textwidth, trim={0 0 0 30}, clip]{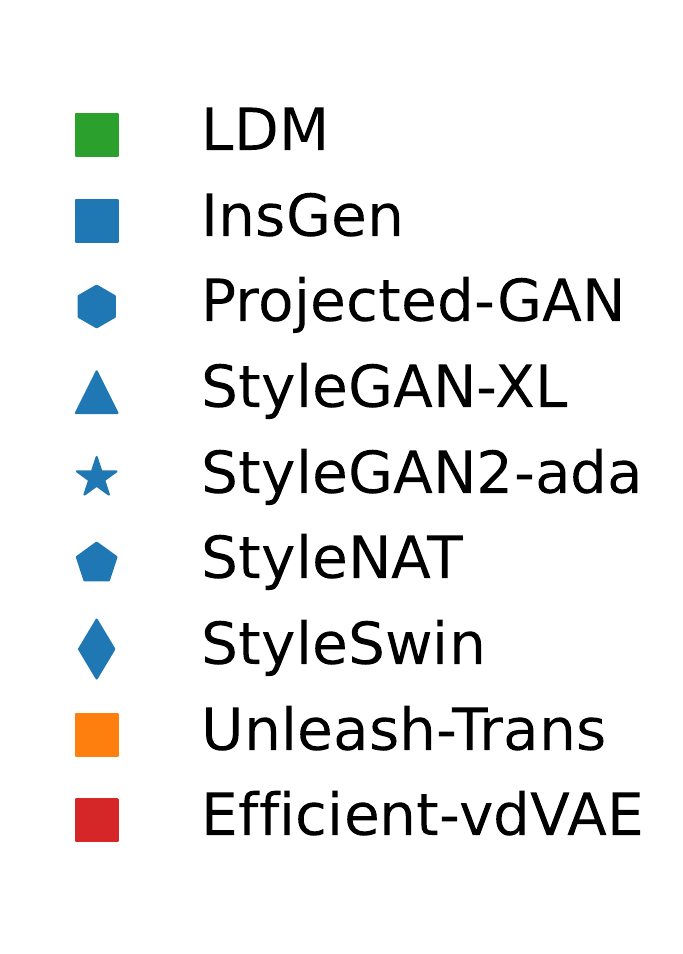}}
    \end{subfigure} 
    \caption{\textbf{Fidelity vs Coverage in DINOv2 embeddings}: Clipped Density vs Clipped Coverage (top row) and Clipped Density + GICDM vs Clipped Coverage + GICDM (bottom row) for CIFAR-10, ImageNet, LSUN Bedroom, and FFHQ datasets. Each point represents a generative model. For DiT-XL-2 on ImageNet, in the top row, classifier-free guidance increases both fidelity and coverage, whereas with GICDM (bottom row), it increases fidelity but decreases coverage, as expected (red arrows).
    }
    \label{fig:dino_v2}
\end{figure*}

\begin{figure*}[t!]
    \centering
    \includegraphics[width=0.95\textwidth]{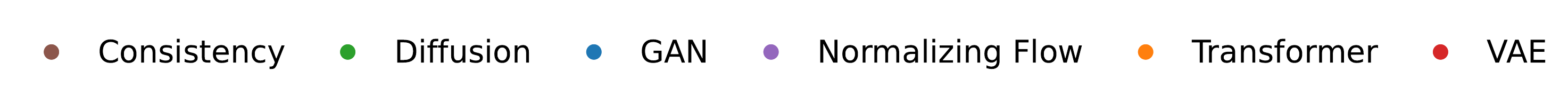}
    \vspace{-0.1cm}

    \centering
    \begin{subfigure}[b]{0.1\textwidth}
    \end{subfigure}
    \hfill
    \begin{subfigure}[b]{0.2\textwidth}
        \centering
        {CIFAR-10}\\
    \end{subfigure}
    \hfill
    \begin{subfigure}[b]{0.2\textwidth}
        \centering
        {ImageNet}\\
    \end{subfigure}
    \hfill
    \begin{subfigure}[b]{0.2\textwidth}
        \centering
        {LSUN Bedroom}\\
    \end{subfigure}
    \hfill
    \begin{subfigure}[b]{0.2\textwidth}
        \centering
        {FFHQ}\\
    \end{subfigure}
        
    \centering
    \begin{subfigure}[b]{0.01\textwidth}
    \end{subfigure}
    \hfill
    \begin{subfigure}[b]{0.234\textwidth}
        \centering
        \includegraphics[width=1.0\textwidth]{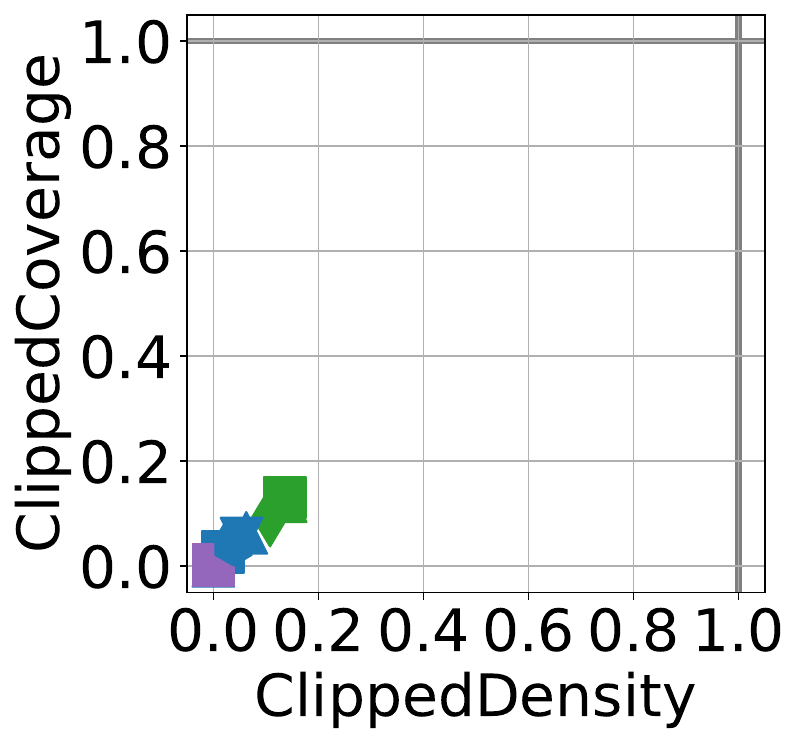}
    \end{subfigure}
    \hfill
    \begin{subfigure}[b]{0.214\textwidth}
        \centering
        \includegraphics[width=\textwidth]{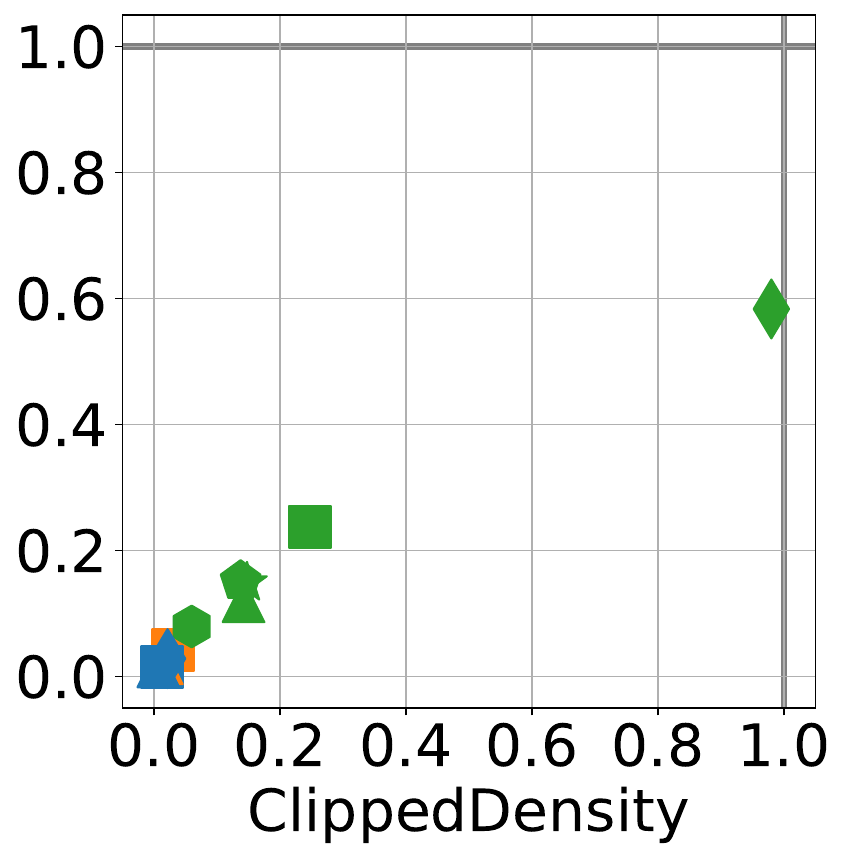}
    \end{subfigure}
    \hfill
    \begin{subfigure}[b]{0.215\textwidth}
        \centering
        \includegraphics[width=\textwidth]{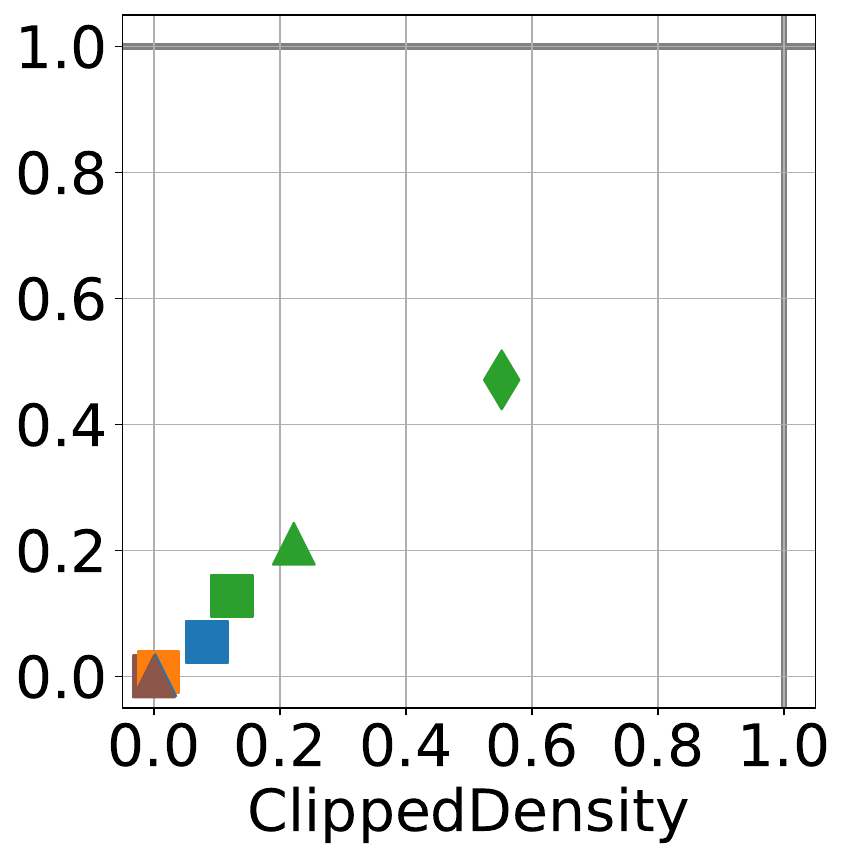}
    \end{subfigure}
    \hfill
    \begin{subfigure}[b]{0.218\textwidth}
        \centering
        \includegraphics[width=\textwidth]{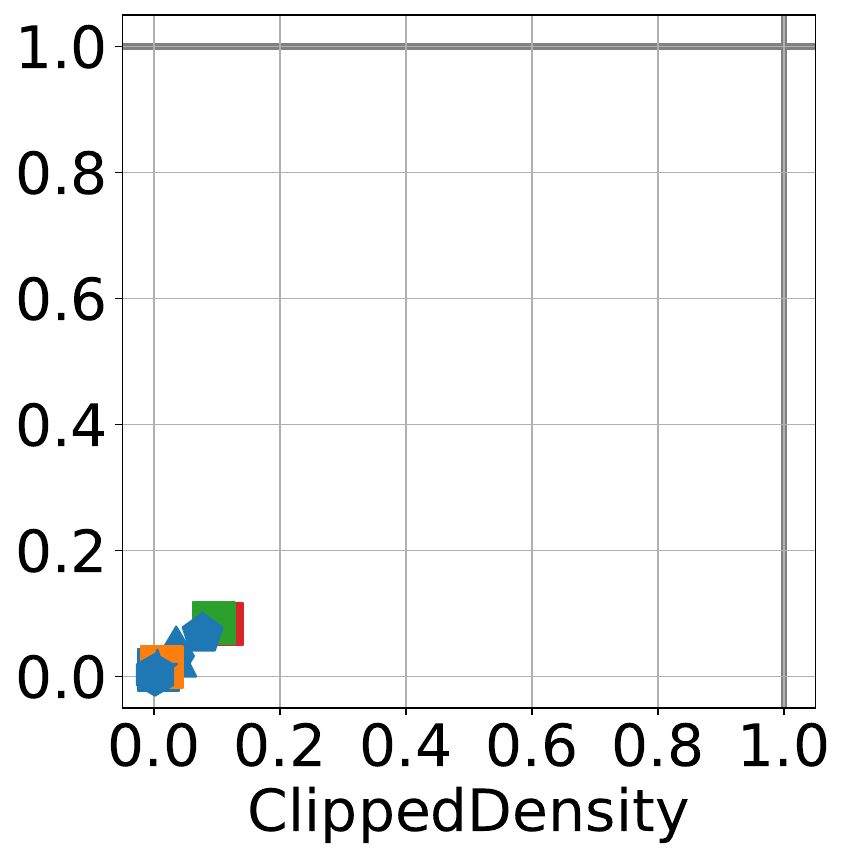}
    \end{subfigure}
        
    \centering
    \begin{subfigure}[b]{0.25\textwidth}
        \centering
        \includegraphics[width=\textwidth]{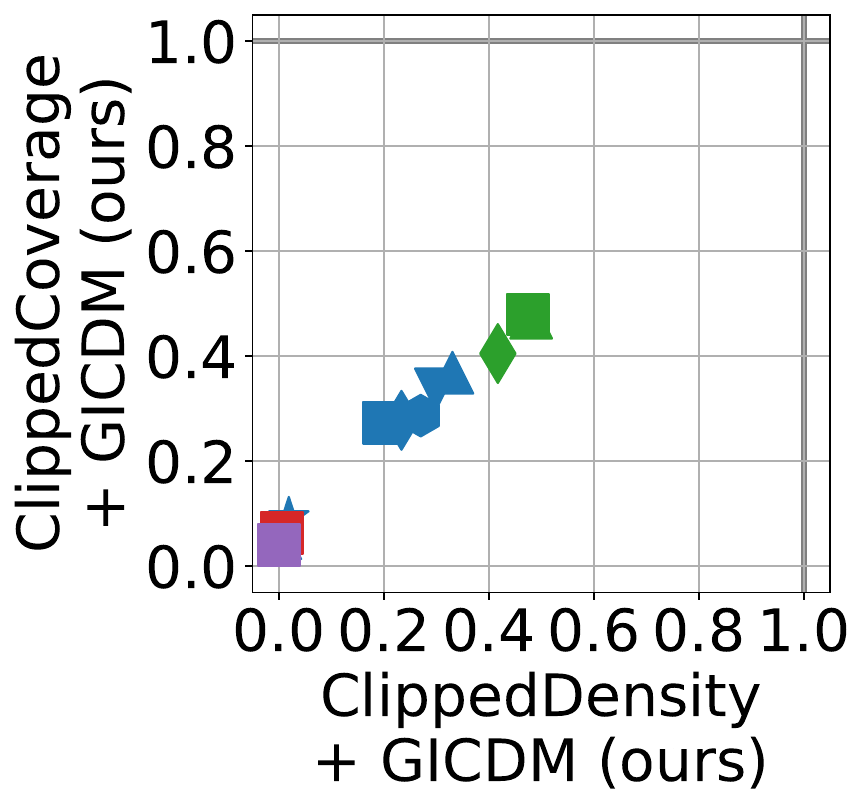}
    \end{subfigure}
    \hfill
    \begin{subfigure}[b]{0.213\textwidth}
        \centering
        \includegraphics[width=\textwidth]{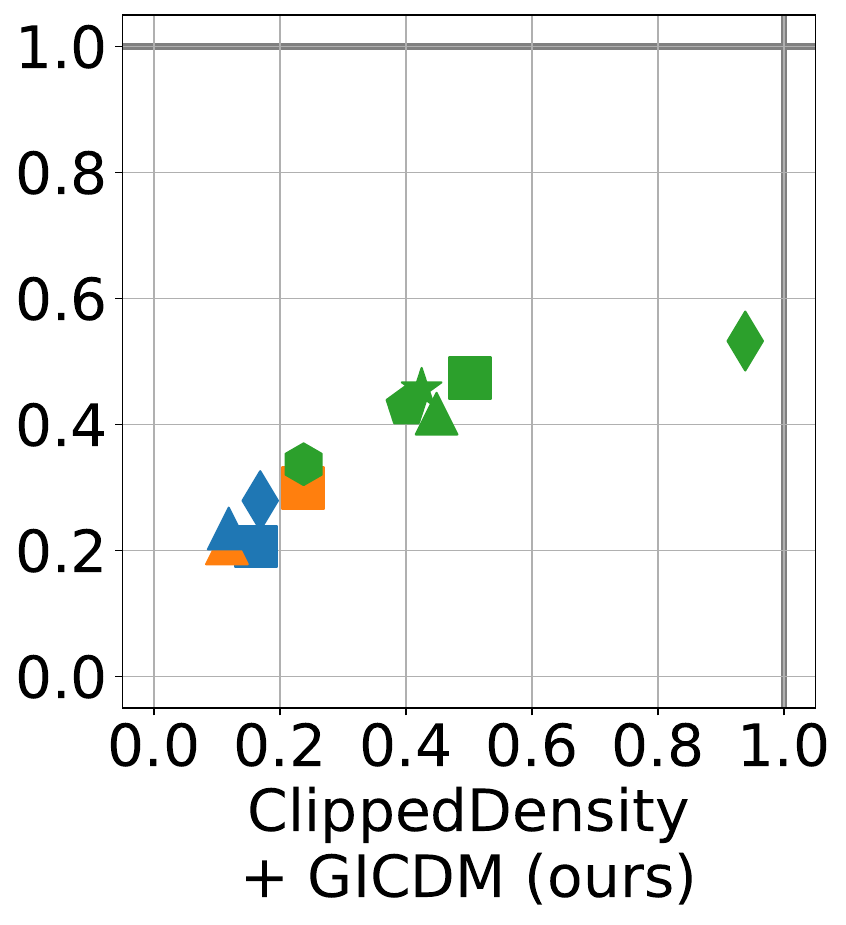}
    \end{subfigure}
    \hfill
    \begin{subfigure}[b]{0.213\textwidth}
        \centering
        \includegraphics[width=\textwidth]{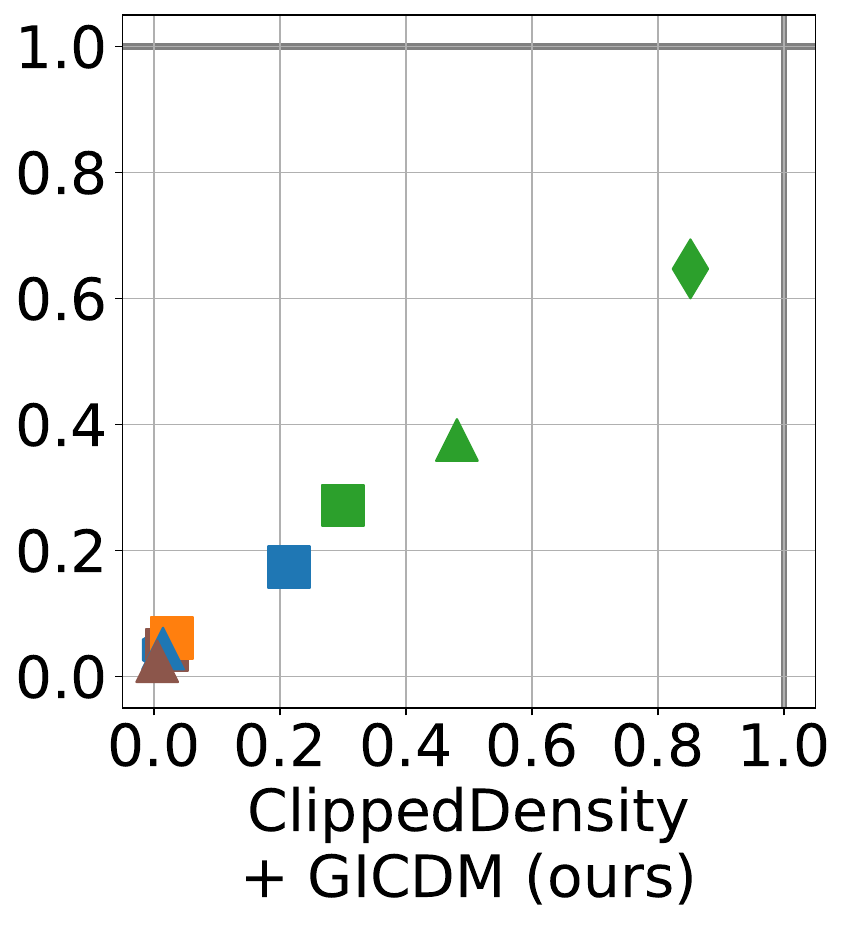}
    \end{subfigure}
    \hfill
    \begin{subfigure}[b]{0.214\textwidth}
        \centering
        \includegraphics[width=\textwidth]{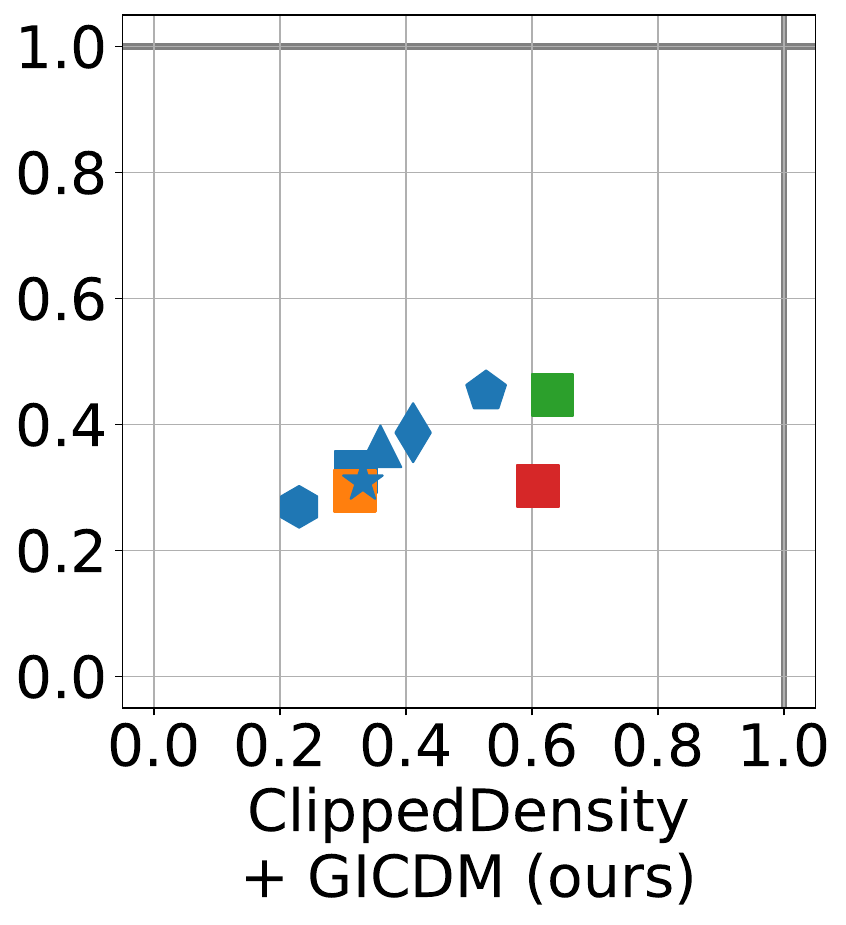}
    \end{subfigure}

    \begin{subfigure}[t]{0.01\textwidth}
    \end{subfigure}
    \hfill
    \begin{subfigure}[t]{0.229\textwidth}
        \centering
        \includegraphics[width=0.8\textwidth]{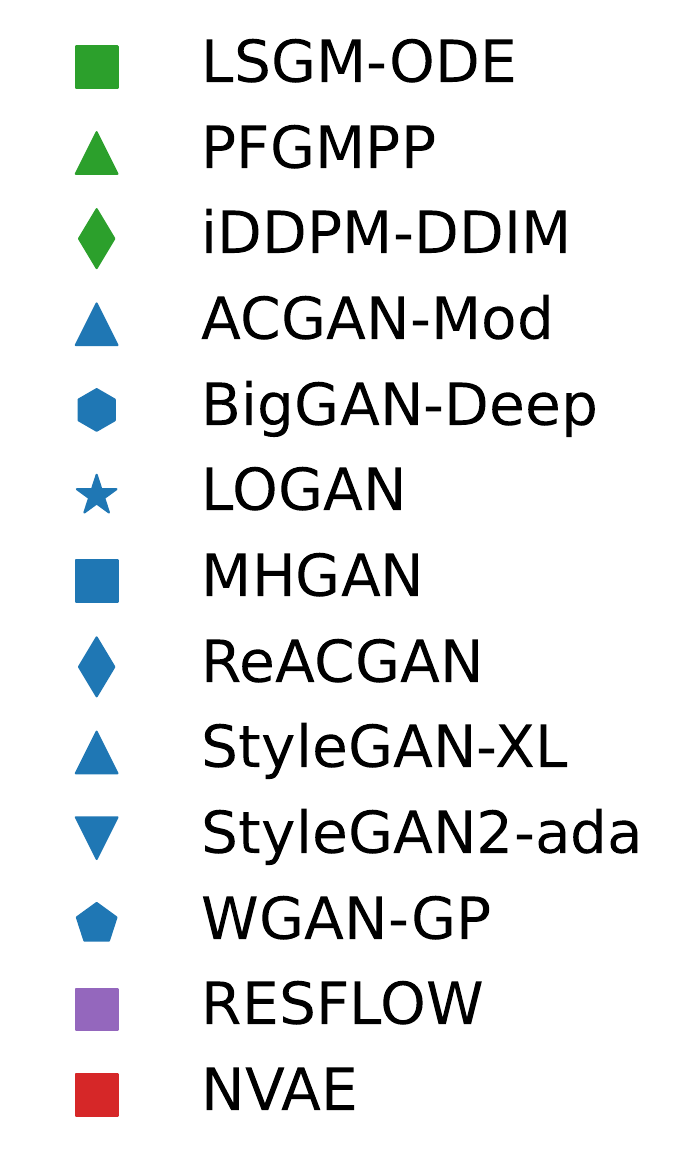}
    \end{subfigure}
    \hfill
    \begin{subfigure}[t]{0.24\textwidth}
       \centering
       \raisebox{15pt}[0pt][0pt]{\includegraphics[width=0.82\textwidth, trim={0 0 0 15}, clip]{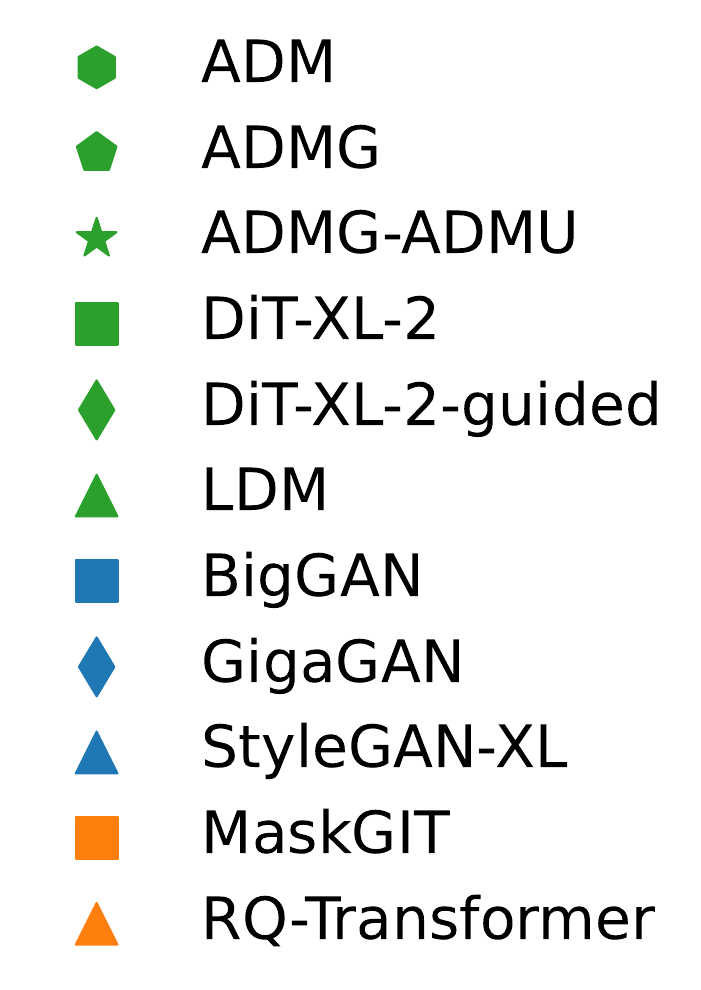}}
    \end{subfigure}   
    \hfill
    \begin{subfigure}[t]{0.24\textwidth}
        \centering
        \raisebox{29pt}[0pt][0pt]{\includegraphics[width=0.87\textwidth, trim={0 0 0 30}, clip]{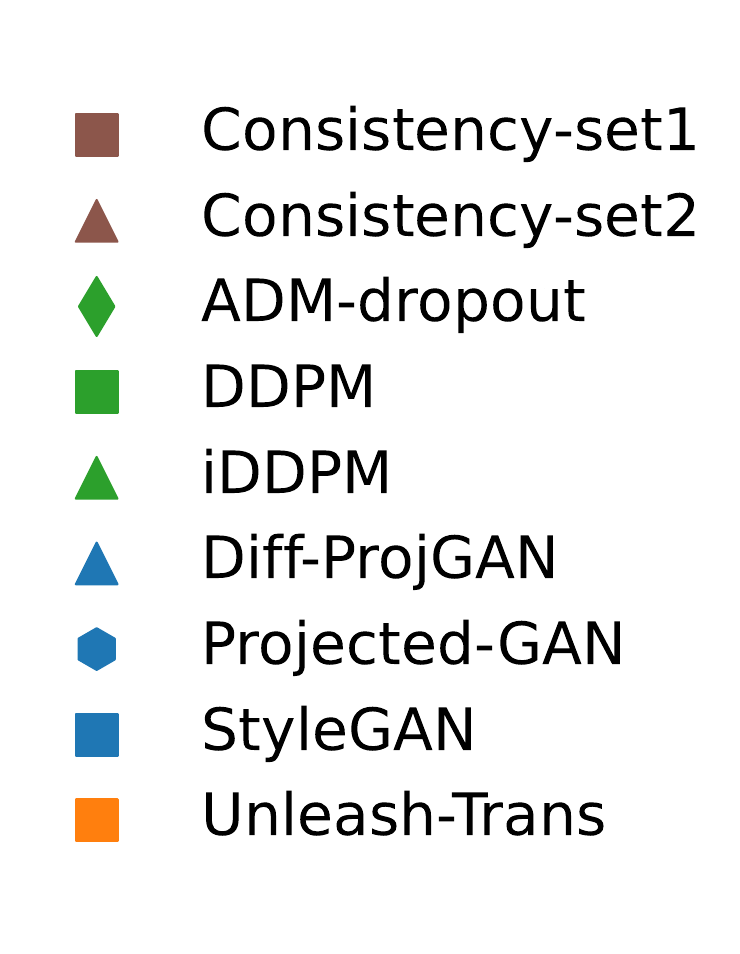}}
    \end{subfigure}
    \hfill
    \begin{subfigure}[t]{0.24\textwidth}
        \centering
        \raisebox{30pt}[0pt][0pt]{\includegraphics[width=0.80\textwidth, trim={0 0 0 30}, clip]{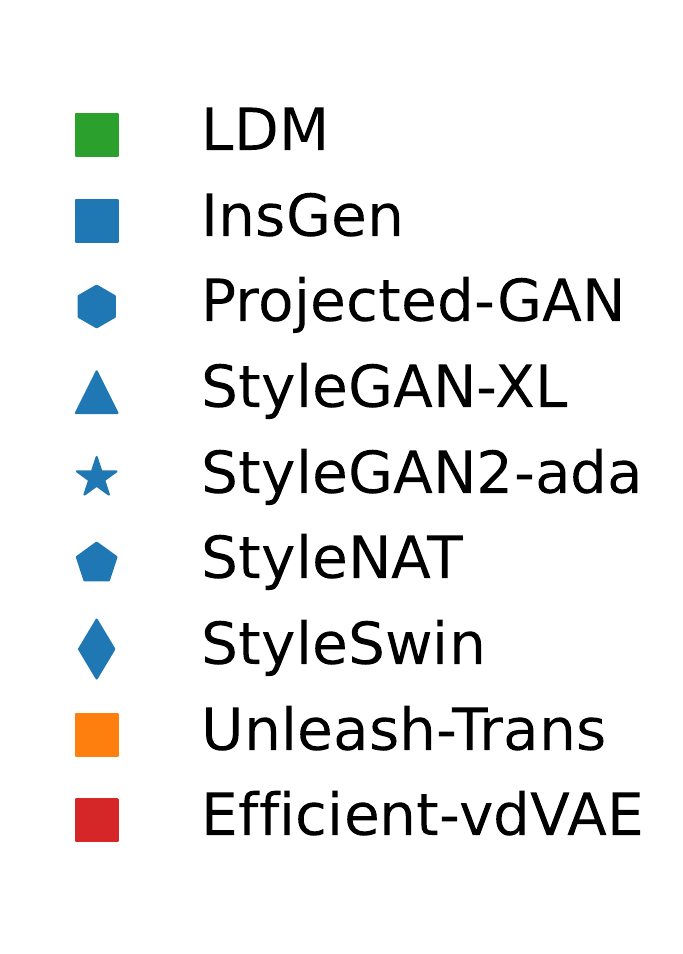}}
    \end{subfigure} 
    \caption{\textbf{Fidelity vs Coverage in DINOv3 embeddings}: Same setup as \Cref{fig:dino_v2}, but using DINOv3 embeddings. GICDM tends to increase the spread of points.}
    \label{fig:dino_v3}
\end{figure*}

\subsection{Data and Setup}

We evaluated metrics on generated datasets publicly released by \citet{stein2024exposing}, using $50000$ real and $50000$ generated samples for each evaluation, balanced across classes when applicable.

For CIFAR-10, generated data includes samples from: LSGM-ODE \cite{vahdat2021score}, PFGM++ (PFGMPP) \cite{xu2023pfgm++}, iDDPM-DDIM \cite{nichol2021improved}, StudioGAN models \cite{kang2023studiogan} (ACGAN-Mod \cite{odena2017conditional}, BigGAN \cite{brock2018large}, LOGAN \cite{wu2019logan}, MHGAN \cite{turner2019metropolis}, ReACGAN \cite{kang2021rebooting}, WGAN-GP \cite{gulrajani2017improved}), StyleGAN-XL \cite{sauer2022stylegan}, StyleGAN2-ada \cite{karras2020training}, RESFLOW \cite{chen2019residual}, and NVAE \cite{vahdat2020nvae}.

For ImageNet, $256\times256$ rescaled images were provided by \citet{stein2024exposing}, generated by: ADM \cite{dhariwal2021diffusion}, ADMG \cite{dhariwal2021diffusion}, ADMG-ADMU \cite{dhariwal2021diffusion}, BigGAN \cite{brock2018large}, DiT-XL-2 \cite{peebles2023scalable}, DiT-XL-2-guided \cite{peebles2023scalable}, LDM \cite{rombach2022high}, GigaGAN \cite{kang2023scaling}, StyleGAN-XL \cite{sauer2022stylegan}, Mask-GIT \cite{chang2022maskgit}, and RQ-Transformer \cite{lee2022autoregressive}.

For LSUN Bedroom \cite{yu2015lsun}, generated data includes: ADM-dropout \cite{dhariwal2021diffusion}, DDPM \cite{ho2020denoising}, iDDPM \cite{nichol2021improved}, StyleGAN \cite{karras2019style}, Diffusion-Projected GAN \cite{wang2022diffusion}, Projected GAN \cite{sauer2021projected}, Unleashing Transformers \cite{bond2022unleashing}, and two Consistency sets \cite{song2023consistency}.

For FFHQ \cite{kazemi2014one}, $256 \times 256$ downsampled images were provided by \citet{stein2024exposing}, generated by: LDM \cite{rombach2022high}, InsGen \cite{yang2021data}, Projected-GAN \cite{sauer2021projected}, StyleGAN-XL \cite{sauer2022stylegan}, StyleGAN2-ada \cite{karras2020training}, StyleNAT \cite{walton2022stylenat}, StyleSwin \cite{zhang2022styleswin}, Unleashing Transformers \cite{bond2022unleashing}, and Efficient-vdVAE \cite{hazami2022efficientvdvae}.

For consistency with prior work, we set $k=5$ for all metrics,
with default parameters elsewhere.

\subsection{Results}

\Cref{fig:dino_v2,fig:dino_v3} present fidelity versus coverage plots using Clipped Density and Clipped Coverage, both with and without GICDM, for DINOv2 and DINOv3 embeddings, respectively.

In both embeddings, applying GICDM tends to increase the spread of points. This effect is especially pronounced in DINOv3 embeddings, which are more affected by hubness.

For ImageNet with DINOv2 embeddings, adding guidance to DiT-XL-2 increases both fidelity and coverage in the Clipped measures without GICDM, which is counter-intuitive. In contrast, with GICDM, adding guidance increases fidelity but decreases coverage, as expected.

\subsection{Correlation with human scores}
\label{sec:human_correlation}

\begin{figure*}[b!]
    \centering
    \includegraphics[width=0.95\textwidth]{figures/fig6/DINOv2/FFHQ256/legend_categories.pdf}
    \vspace{-0.1cm}

    \centering
    \begin{subfigure}[b]{0.01\textwidth}
    \end{subfigure}
    \hfill
    \begin{subfigure}[b]{0.24\textwidth}
        \centering
        {CIFAR-10}\\
    \end{subfigure}
    \hfill
    \begin{subfigure}[b]{0.24\textwidth}
        \centering
        {ImageNet}\\
    \end{subfigure}
    \hfill
    \begin{subfigure}[b]{0.24\textwidth}
        \centering
        {LSUN Bedroom}\\
    \end{subfigure}
    \hfill
    \begin{subfigure}[b]{0.24\textwidth}
        \centering
        {FFHQ}\\
    \end{subfigure}

    \centering
    \begin{subfigure}[b]{0.23\textwidth}
        \centering
        \includegraphics[width=1.0\textwidth]{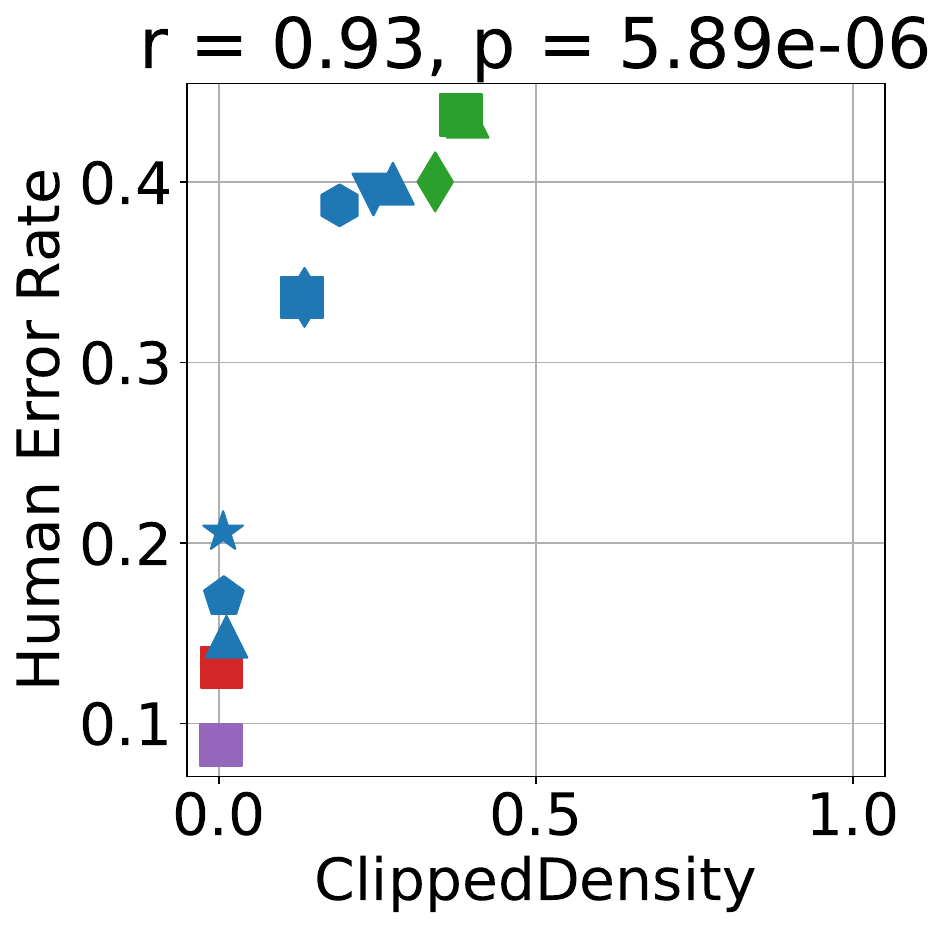}
    \end{subfigure}
    \hfill
    \begin{subfigure}[b]{0.24\textwidth}
        \centering
        \includegraphics[width=\textwidth]{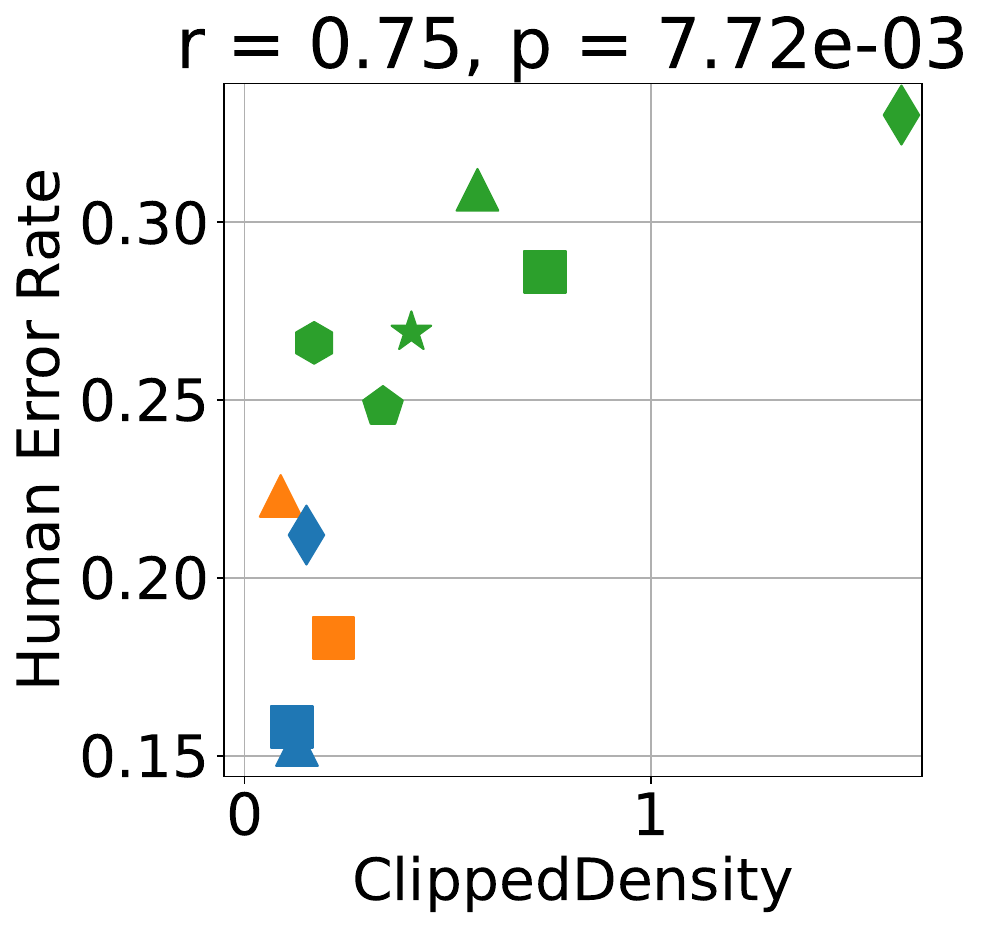}
    \end{subfigure}
    \hfill
    \begin{subfigure}[b]{0.231\textwidth}
        \centering
        \includegraphics[width=\textwidth]{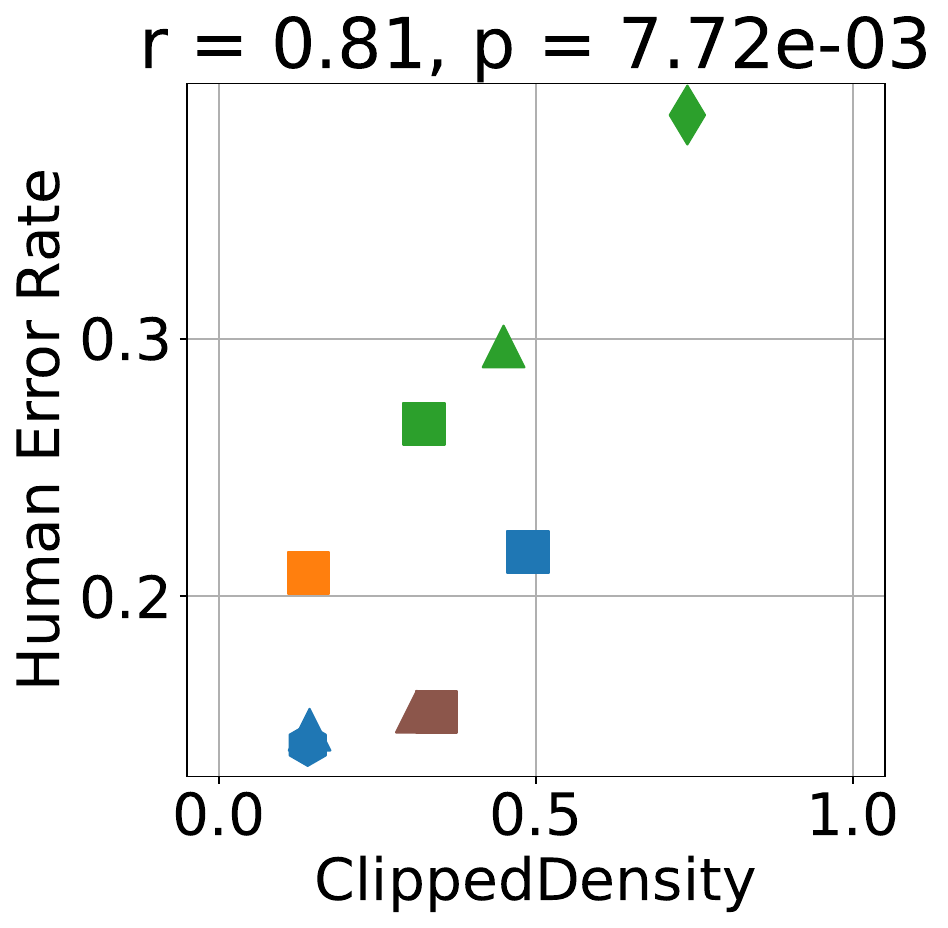}
    \end{subfigure}
    \hfill
    \begin{subfigure}[b]{0.243\textwidth}
        \centering
        \includegraphics[width=\textwidth]{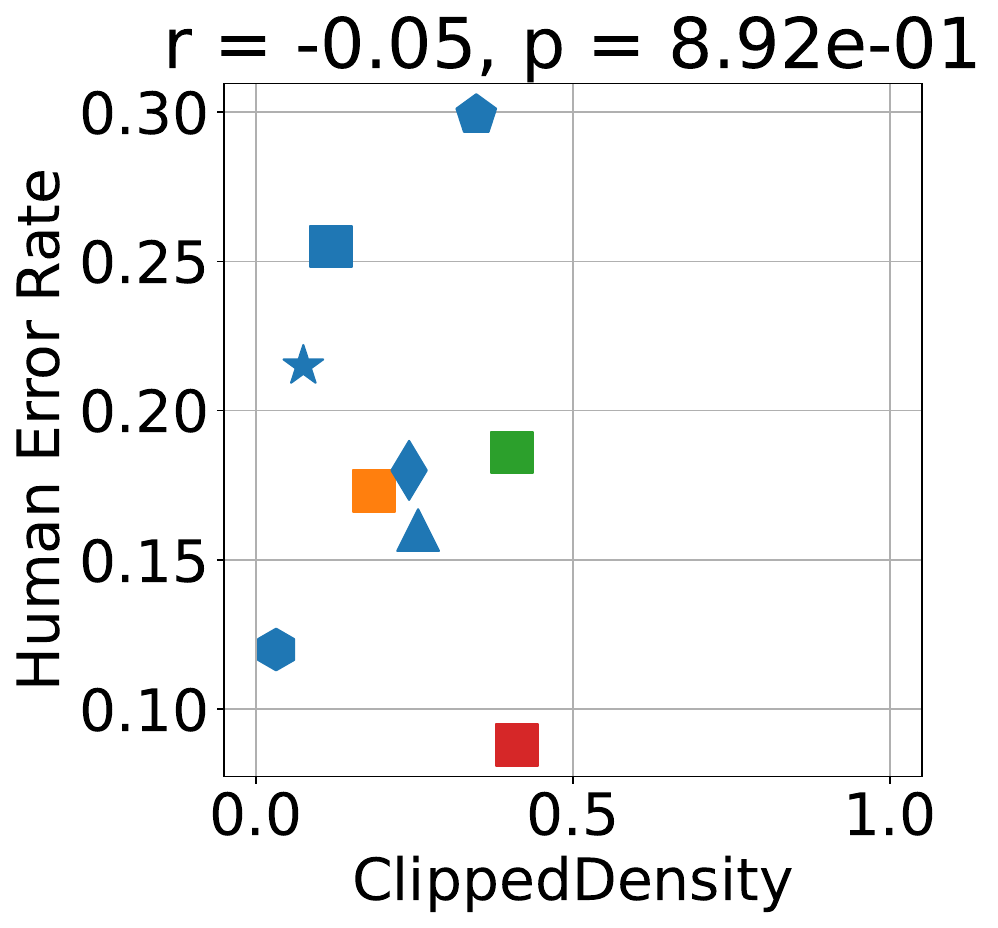}
    \end{subfigure}
        
    \centering
    \begin{subfigure}[b]{0.23\textwidth}
        \centering
        \includegraphics[width=\textwidth]{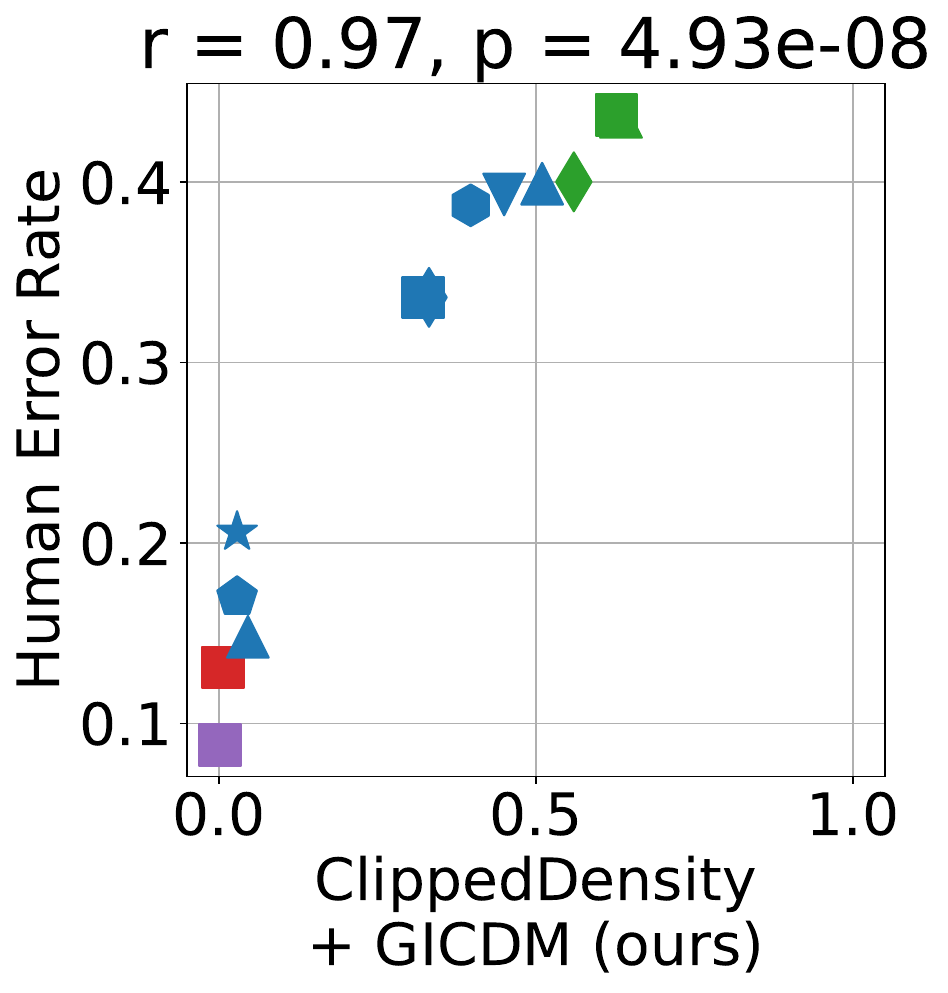}
    \end{subfigure}
    \hfill
    \begin{subfigure}[b]{0.239\textwidth}
        \centering
        \includegraphics[width=\textwidth]{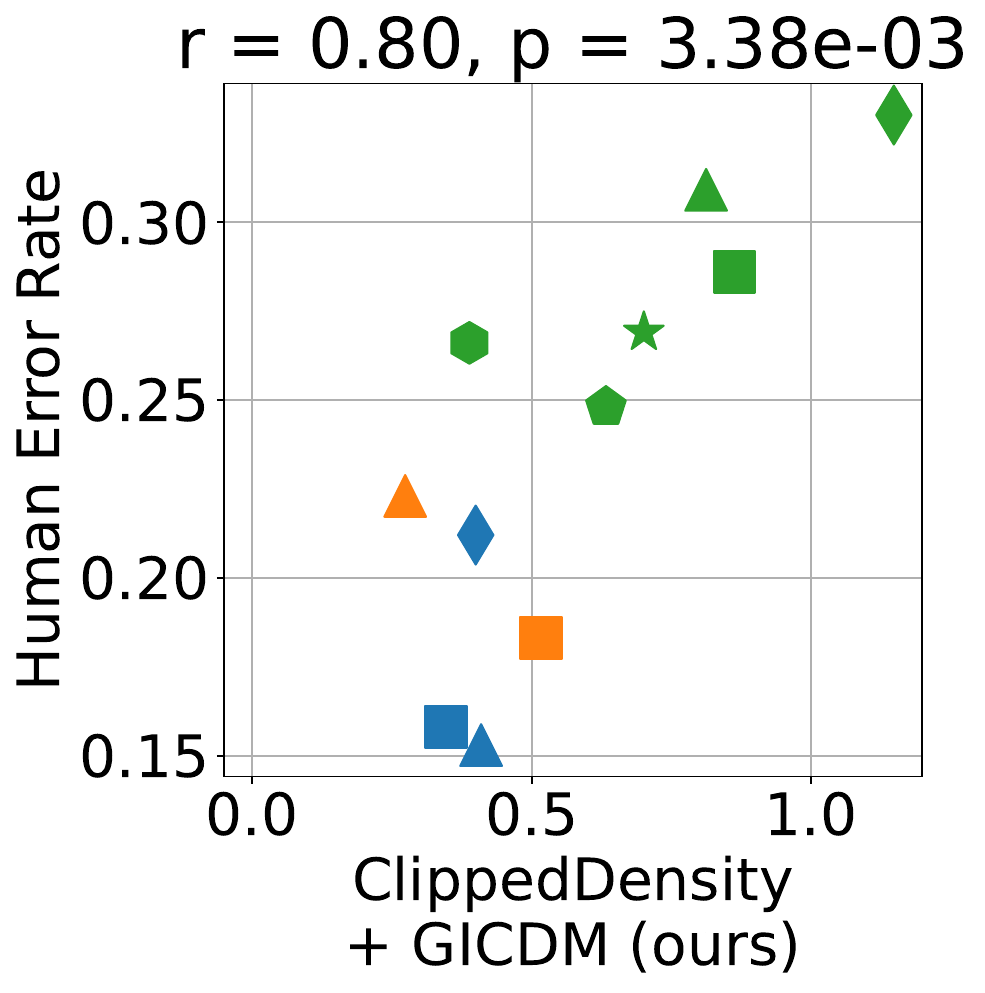}
    \end{subfigure}
    \hfill
    \begin{subfigure}[b]{0.23\textwidth}
        \centering
        \includegraphics[width=\textwidth]{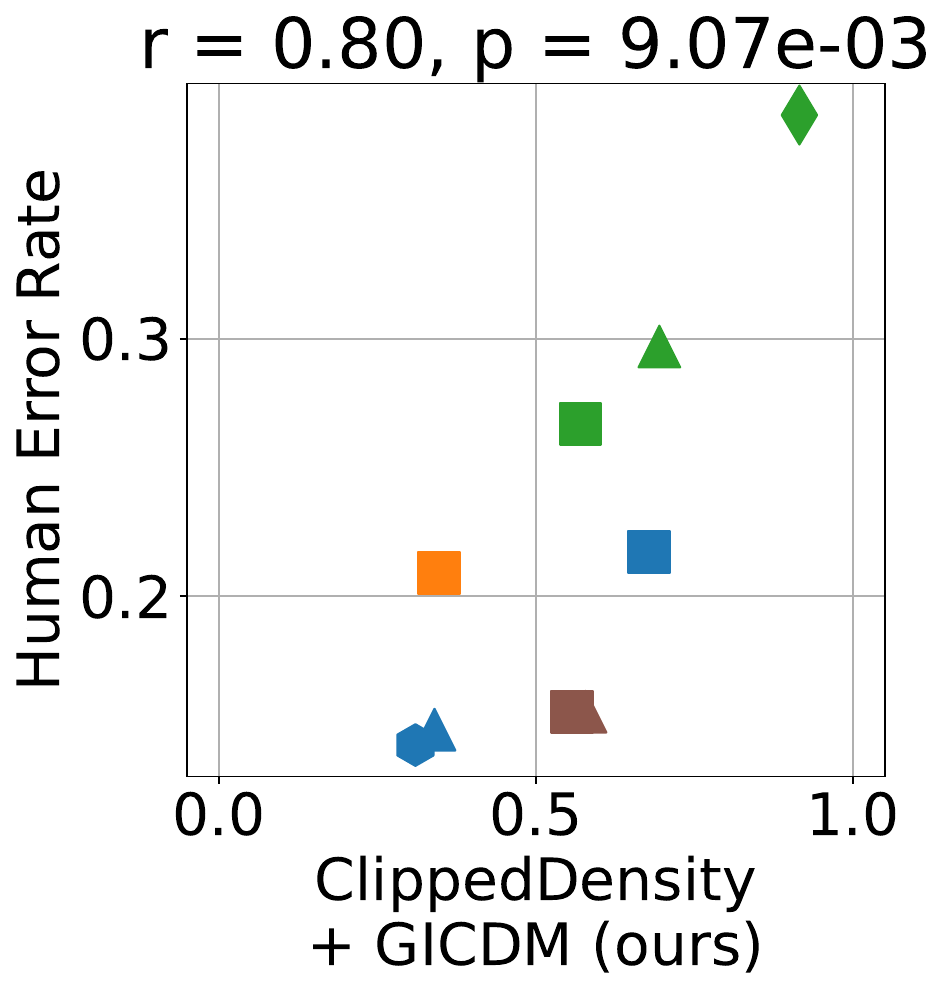}
    \end{subfigure}
    \hfill
    \begin{subfigure}[b]{0.239\textwidth}
        \centering
        \includegraphics[width=\textwidth]{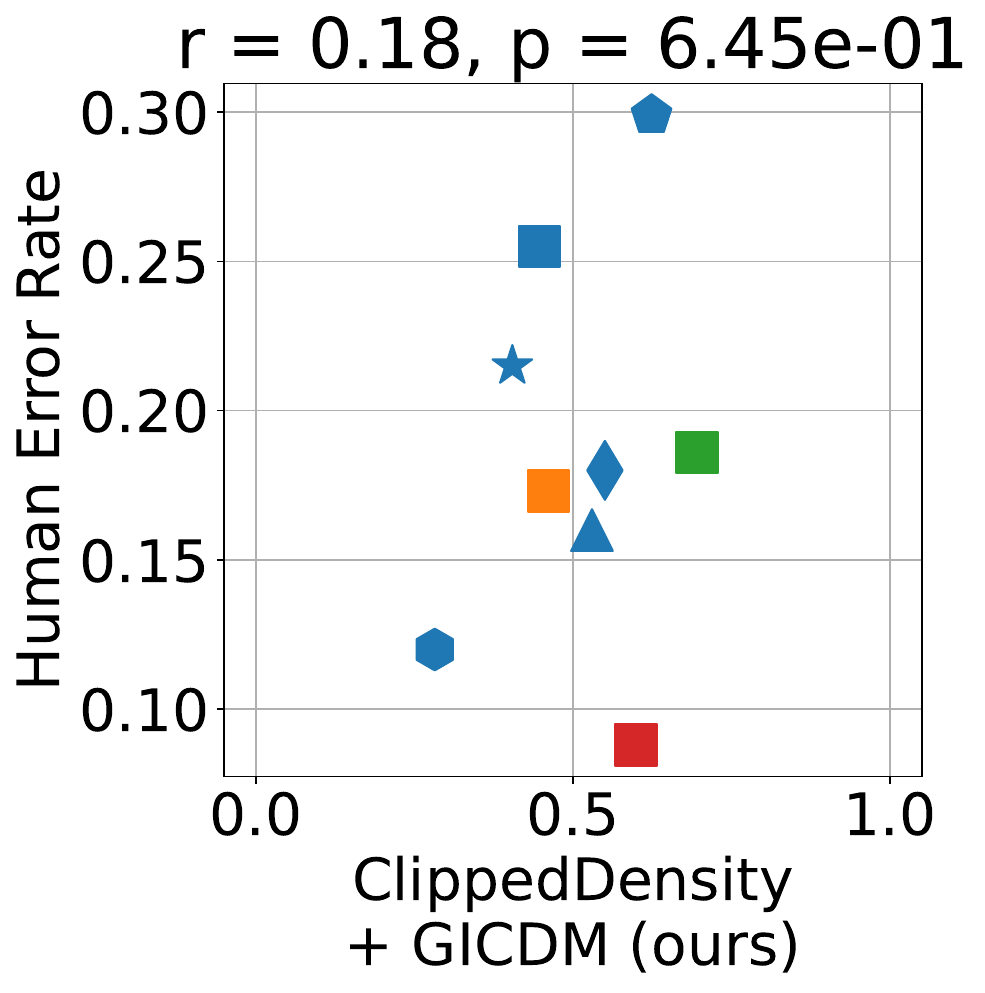}
    \end{subfigure}

    \begin{subfigure}[t]{0.01\textwidth}
    \end{subfigure}
    \hfill
    \begin{subfigure}[t]{0.229\textwidth}
        \centering
        \includegraphics[width=0.8\textwidth]{figures/fig6/DINOv2/Cifar10/legend_models.pdf}
    \end{subfigure}
    \hfill
    \begin{subfigure}[t]{0.24\textwidth}
       \centering
       \raisebox{15pt}[0pt][0pt]{\includegraphics[width=0.82\textwidth, trim={0 0 0 15}, clip]{figures/fig6/DINOv2/ImageNet256/legend_models.pdf}}
    \end{subfigure}   
    \hfill
    \begin{subfigure}[t]{0.24\textwidth}
        \centering
        \raisebox{29pt}[0pt][0pt]{\includegraphics[width=0.87\textwidth, trim={0 0 0 30}, clip]{figures/fig6/DINOv2/LSUN256/legend_models.pdf}}
    \end{subfigure}
    \hfill
    \begin{subfigure}[t]{0.24\textwidth}
        \centering
        \raisebox{30pt}[0pt][0pt]{\includegraphics[width=0.80\textwidth, trim={0 0 0 30}, clip]{figures/fig6/DINOv2/FFHQ256/legend_models.pdf}}
    \end{subfigure} 
    \caption{\textbf{Fidelity vs Human error rates in DINOv2 embeddings}: Human were tasked with discriminating real from generated images. Top row: Clipped Density vs human error rates. Bottom row: Clipped Density + GICDM vs human error rates. Each plot reports the Pearson correlation coefficient $r$ and its $p$-value. Results are not significant for FFHQ, consistent with what was reported for DINOv2 previously \cite{stein2024exposing}. For other datasets, correlation remains stable or improves slightly with GICDM.}
    \label{fig:dino_v2_human}
  \end{figure*}

\begin{figure*}[t!]
    \centering
    \includegraphics[width=0.95\textwidth]{figures/fig6/DINOv3/FFHQ256/legend_categories.pdf}
    \vspace{-0.1cm}

    \centering
    \begin{subfigure}[b]{0.01\textwidth}
    \end{subfigure}
    \hfill
    \begin{subfigure}[b]{0.24\textwidth}
        \centering
        {CIFAR-10}\\
    \end{subfigure}
    \hfill
    \begin{subfigure}[b]{0.24\textwidth}
        \centering
        {ImageNet}\\
    \end{subfigure}
    \hfill
    \begin{subfigure}[b]{0.24\textwidth}
        \centering
        {LSUN Bedroom}\\
    \end{subfigure}
    \hfill
    \begin{subfigure}[b]{0.24\textwidth}
        \centering
        {FFHQ}\\
    \end{subfigure}

    \centering
    \begin{subfigure}[b]{0.23\textwidth}
        \centering
        \includegraphics[width=1.0\textwidth]{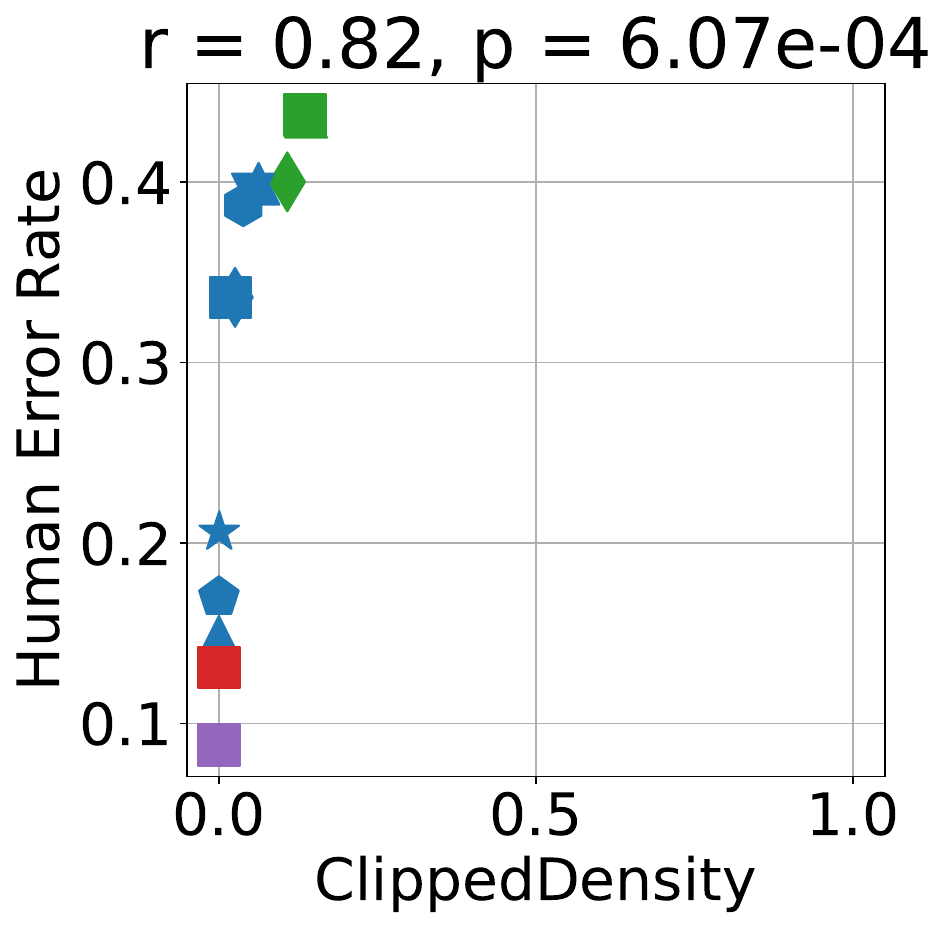}
    \end{subfigure}
    \hfill
    \begin{subfigure}[b]{0.239\textwidth}
        \centering
        \includegraphics[width=\textwidth]{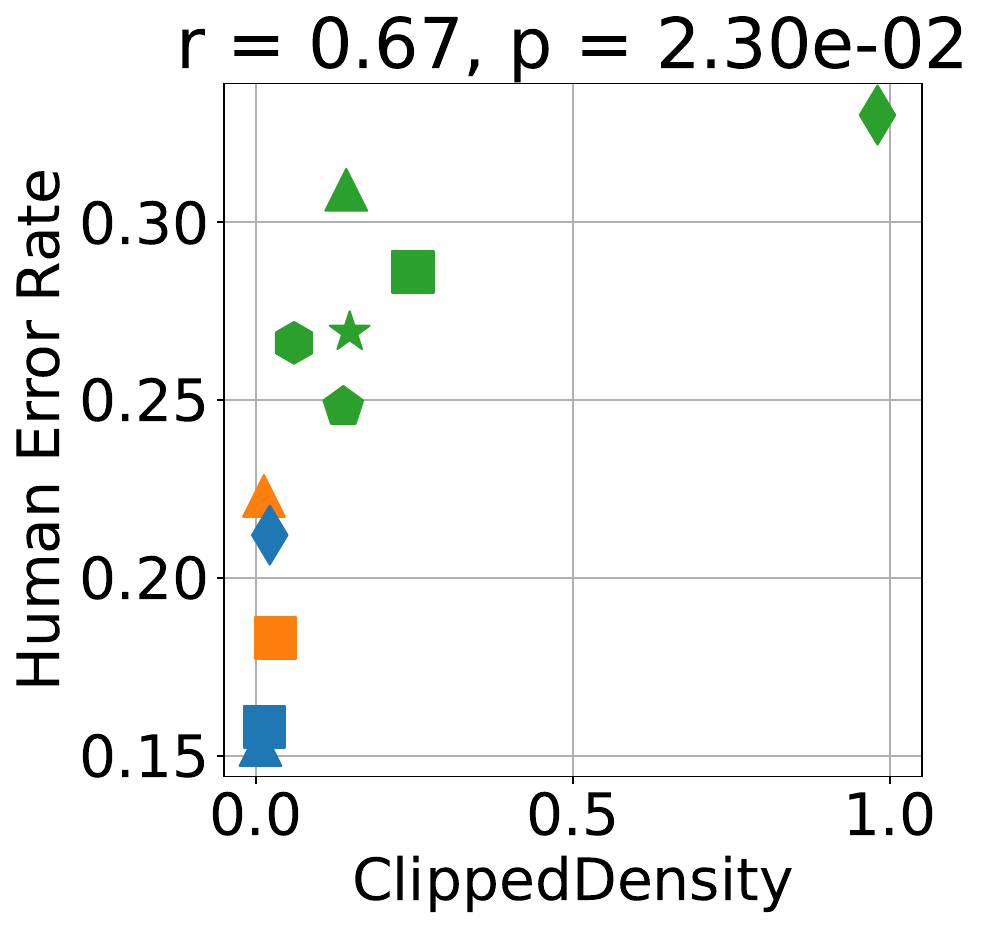}
    \end{subfigure}
    \hfill
    \begin{subfigure}[b]{0.23\textwidth}
        \centering
        \includegraphics[width=\textwidth]{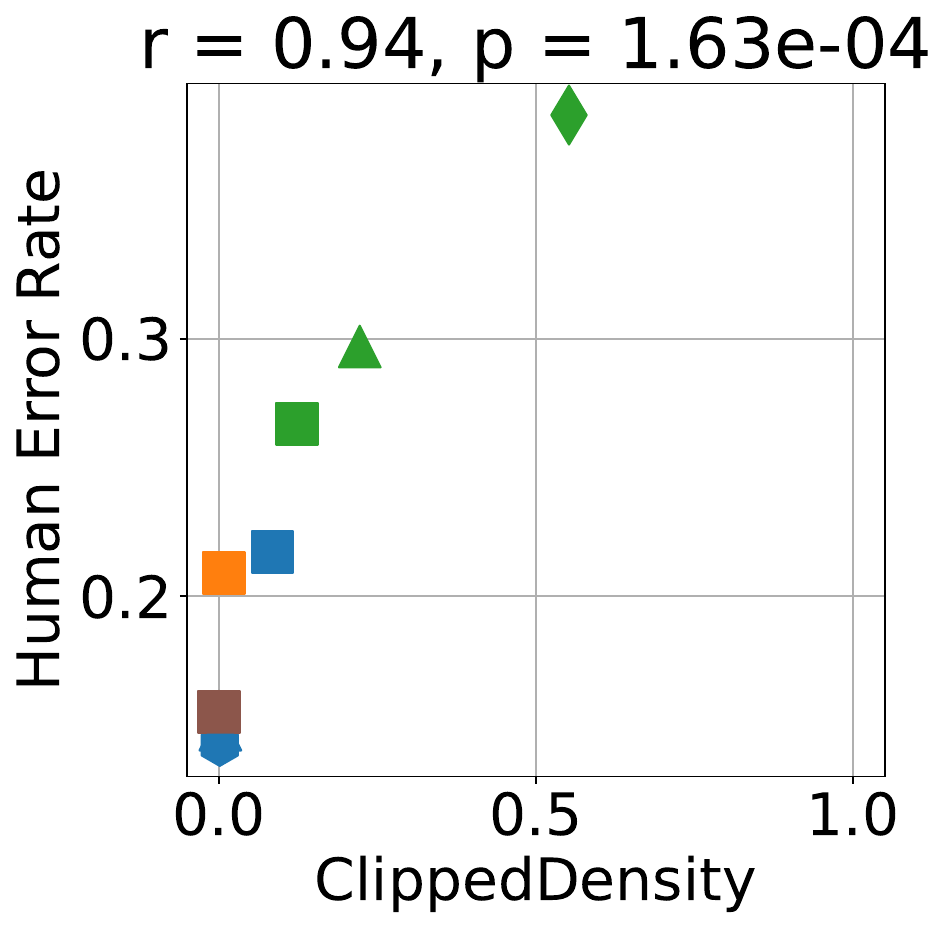}
    \end{subfigure}
    \hfill
    \begin{subfigure}[b]{0.242\textwidth}
        \centering
        \includegraphics[width=\textwidth]{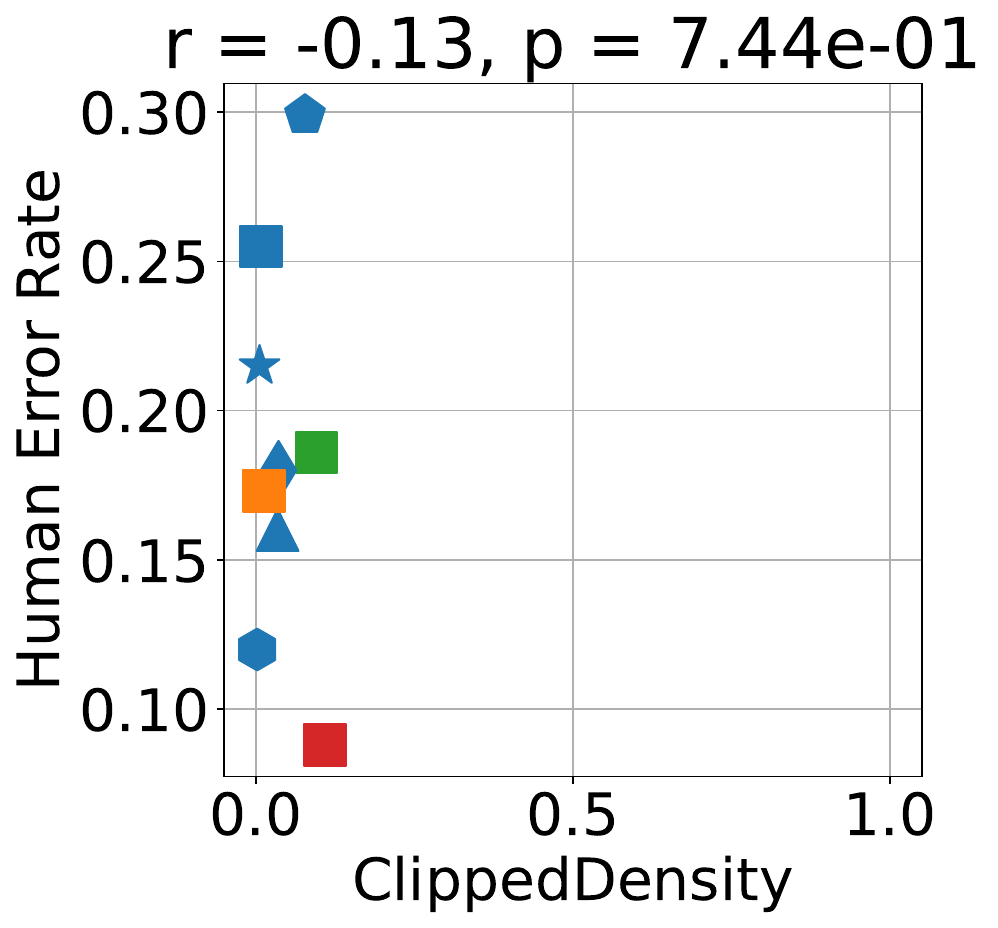}
    \end{subfigure}
        
    \centering
    \begin{subfigure}[b]{0.23\textwidth}
        \centering
        \includegraphics[width=\textwidth]{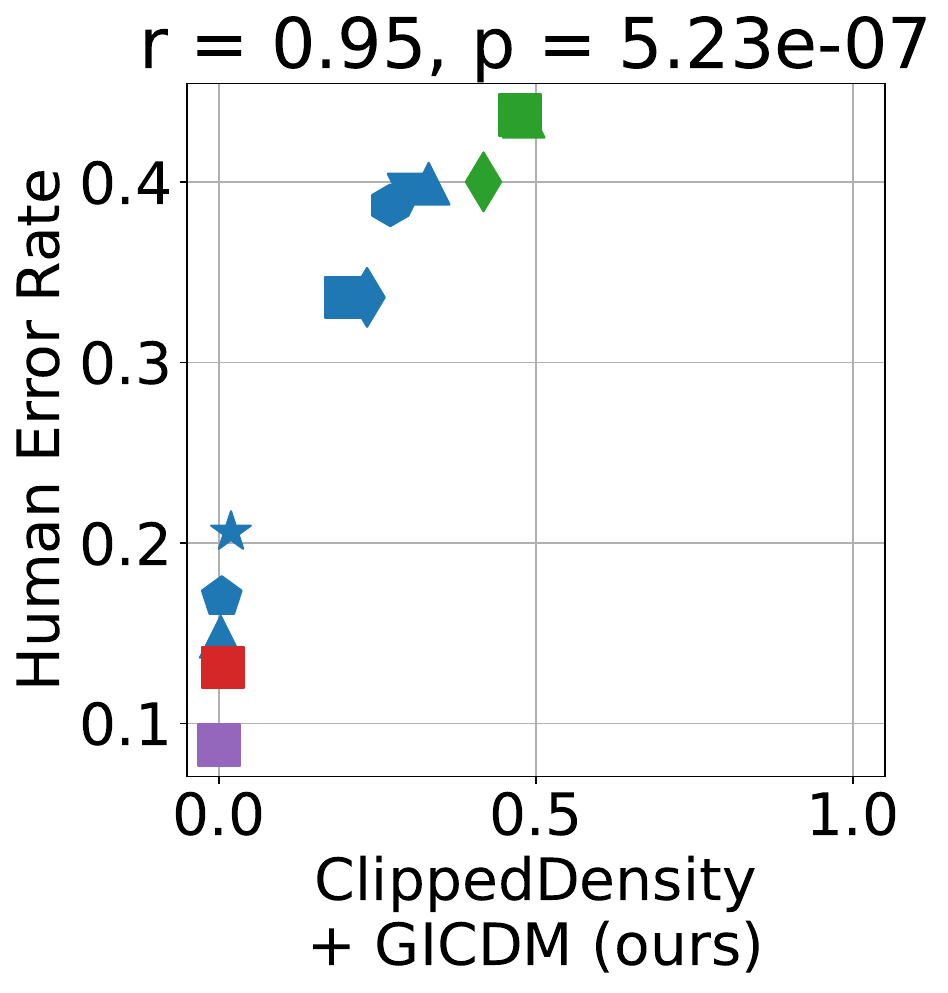}
    \end{subfigure}
    \hfill
    \begin{subfigure}[b]{0.239\textwidth}
        \centering
        \includegraphics[width=\textwidth]{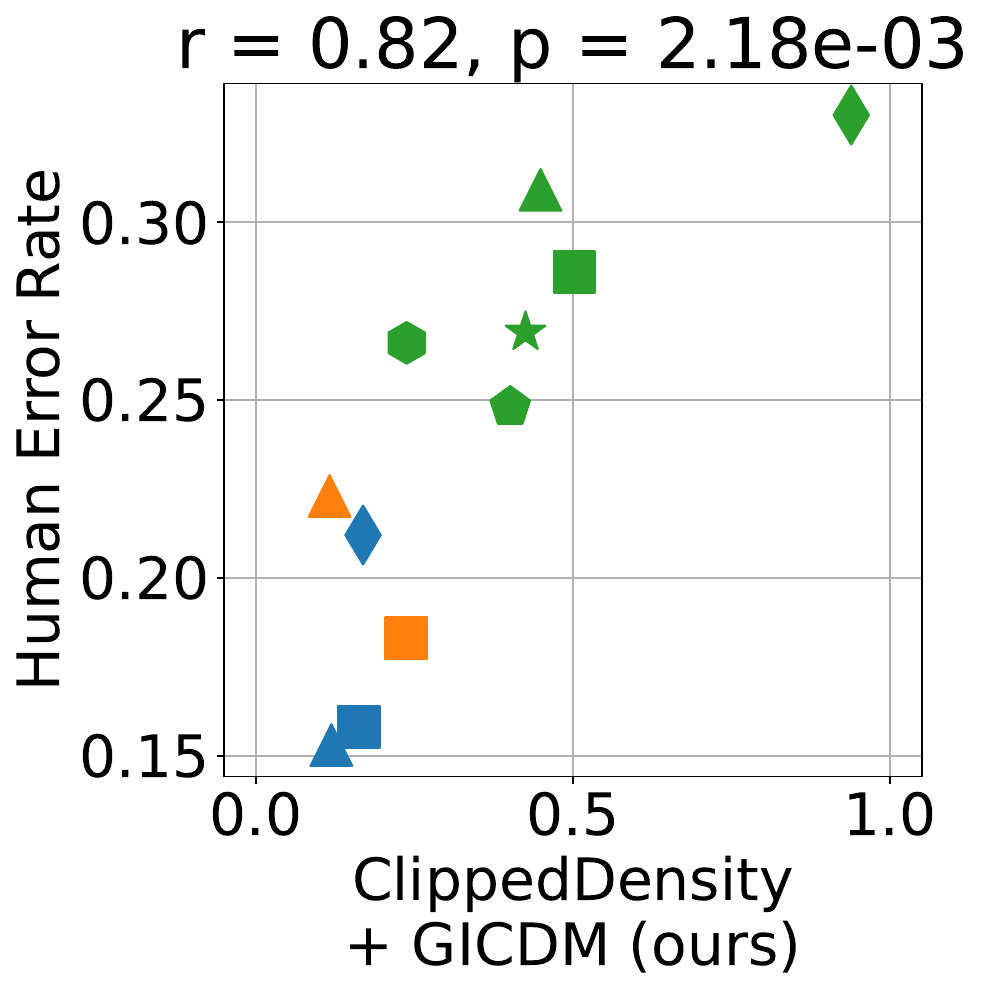}
    \end{subfigure}
    \hfill
    \begin{subfigure}[b]{0.23\textwidth}
        \centering
        \includegraphics[width=\textwidth]{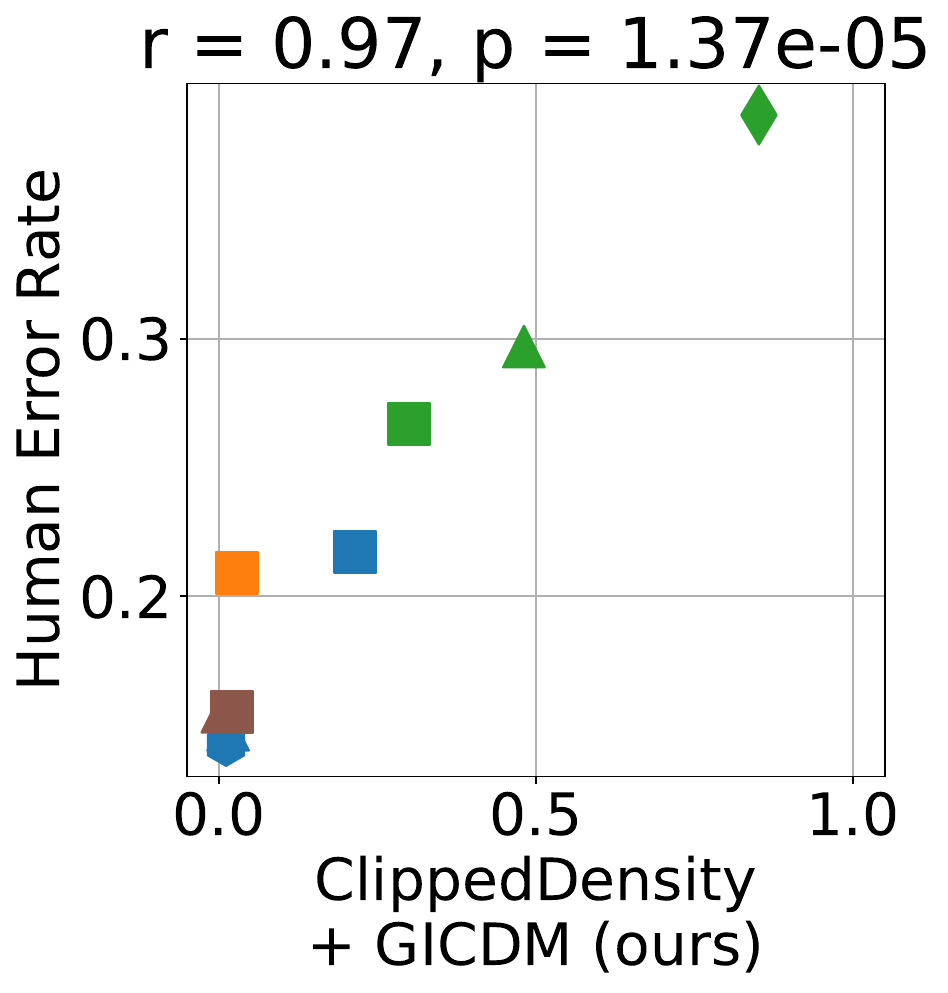}
    \end{subfigure}
    \hfill
    \begin{subfigure}[b]{0.242\textwidth}
        \centering
        \includegraphics[width=\textwidth]{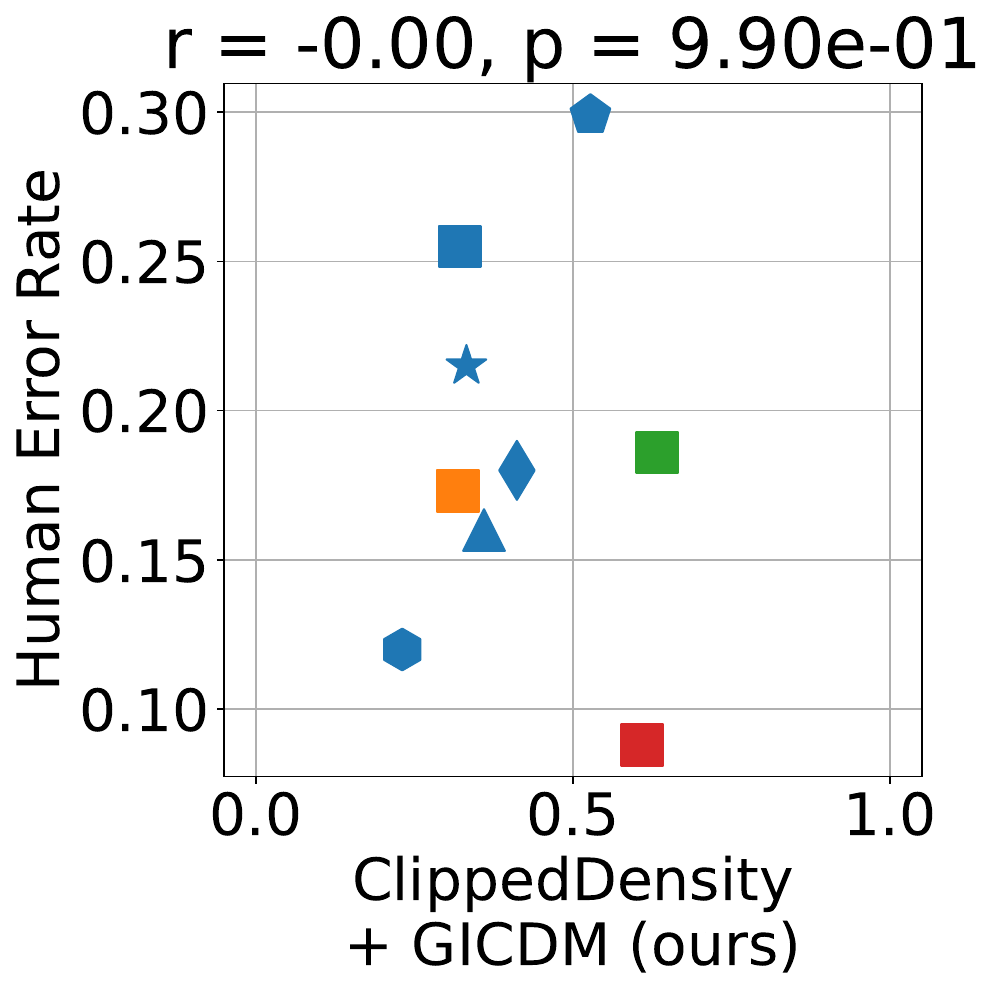}
    \end{subfigure}

    \begin{subfigure}[t]{0.01\textwidth}
    \end{subfigure}
    \hfill
    \begin{subfigure}[t]{0.229\textwidth}
        \centering
        \includegraphics[width=0.8\textwidth]{figures/fig6/DINOv3/Cifar10/legend_models.pdf}
    \end{subfigure}
    \hfill
    \begin{subfigure}[t]{0.24\textwidth}
       \centering
       \raisebox{15pt}[0pt][0pt]{\includegraphics[width=0.82\textwidth, trim={0 0 0 15}, clip]{figures/fig6/DINOv3/ImageNet256/legend_models.pdf}}
    \end{subfigure}   
    \hfill
    \begin{subfigure}[t]{0.24\textwidth}
        \centering
        \raisebox{29pt}[0pt][0pt]{\includegraphics[width=0.87\textwidth, trim={0 0 0 30}, clip]{figures/fig6/DINOv3/LSUN256/legend_models.pdf}}
    \end{subfigure}
    \hfill
    \begin{subfigure}[t]{0.24\textwidth}
        \centering
        \raisebox{30pt}[0pt][0pt]{\includegraphics[width=0.80\textwidth, trim={0 0 0 30}, clip]{figures/fig6/DINOv3/FFHQ256/legend_models.pdf}}
    \end{subfigure} 
    \caption{\textbf{Fidelity vs Human error rates in DINOv3 embeddings}: Same setup as \Cref{fig:dino_v2_human}, but using DINOv3 embeddings. As with DINOv2, results are not significant for FFHQ. For other datasets, correlation increases with GICDM, and the improvement is more pronounced than with DINOv2, since DINOv3 embeddings are more affected by hubness.}
    \label{fig:dino_v3_human}
  \end{figure*}

\begin{table}[t!]
  \caption{\textbf{Correlation with human scores.} Significant Pearson correlation coefficients between metric scores and human ratings for each dataset and embedding; non-significant values are indicated by --. The FFHQ column is omitted due to the absence of significant correlations. This is consistent with prior findings using DINOv2 embeddings \cite{stein2024exposing}.
  For metrics robust to real outliers (Clipped Density and symPrecision, which equals cPrecision in this context), adding GICDM maintains or improves correlation. For other metrics, adding GICDM generally worsens results, indicating that robustness to real outliers is necessary for GICDM to be effective.
  }
  \label{tab:correlation_human_full}
  \centering
  \begin{small}
  \begin{sc}
    \begin{tabular}{llccc}
    \toprule
    Embedding & Metric & Cifar10 & ImageNet & LSUN Bedroom \\
    \midrule
    \multirow{12}{*}{DINOv2}
      & Precision                & $-0.82$ & $0.76$ & -- \\
      & \ \ \ + GICDM        & $0.82$  & --     & $-0.78$ \\[0.5em]
      & Density                  & $-0.91$ & $0.68$ & -- \\
      & \ \ \ + GICDM           & $0.93$  & --     & $-0.72$ \\[0.5em]
      & P-precision              & $0.98$  & $0.75$ & $0.69$ \\
      & \ \ \ + GICDM       & $0.82$  & --     & $-0.78$ \\[0.5em]
      & symPrecision             & $0.87$  & $0.61$ & $0.99$ \\
      & \ \ \ + GICDM      & $0.97$  & $0.76$ & $0.98$ \\[0.5em]
      & \makecell[l]{Clipped Density\\(unclipped radii)} & $0.73$ & $-0.85$ & -- \\
      & \ \ \ + GICDM  & -- & $0.89$ & $-0.78$ \\[0.5em]
      & Clipped Density           & $0.93$  & $0.75$ & $0.81$ \\
      & \ \ \ + GICDM    & $0.97$  & $0.80$ & $0.80$ \\
    \midrule
    \multirow{12}{*}{DINOv3}
      & Precision                & $0.86$  & $0.74$ & $0.96$ \\
      & \ \ \ + GICDM         & $-0.96$ & --     & -- \\[0.5em]
      & Density                  & $0.83$  & $0.66$ & $0.95$ \\
      & \ \ \ + GICDM           & $-0.60$ & $-0.73$ & -- \\[0.5em]
      & P-precision              & $0.97$  & $0.79$ & $0.97$ \\
      & \ \ \ + GICDM           & $-0.96$ & --     & -- \\[0.5em]
      & symPrecision             & $0.86$  & $0.80$ & $0.94$ \\
      & \ \ \ + GICDM      & $0.97$  & $0.82$ & $0.95$ \\[0.5em]
      & \makecell[l]{Clipped Density\\(unclipped radii)} & $0.70$ & $0.85$ & $0.95$ \\
      & \ \ \ + GICDM  & $-0.62$ & $-0.94$ & -- \\[0.5em]
      & Clipped Density           & $0.82$  & $0.67$ & $0.94$ \\
      & \ \ \ + GICDM   & $0.95$  & $0.82$ & $0.97$ \\
    \bottomrule
  \end{tabular}

  \end{sc}
  \end{small}
\end{table}

Alongside generated images, \citet{stein2024exposing} also released human error rates for classifying real versus generated images. This discriminator error rate serves as a measure of fidelity: lower error rates indicate higher fidelity of the generated images. We evaluate the correlation between human scores and fidelity metrics, both with and without GICDM.

\Cref{fig:dino_v2_human,fig:dino_v3_human} show fidelity versus human error rates for DINOv2 and DINOv3 embeddings, respectively, using Clipped Density with and without GICDM. Each plot reports the Pearson correlation coefficient $r$ and its $p$-value.
\Cref{tab:correlation_human_full} summarizes Pearson correlation coefficients for all distance-based metrics, with and without GICDM, on DINOv2 and DINOv3 embeddings.
Results for FFHQ are omitted due to lack of significant correlation, as are VGG16 and Inceptionv3 embeddings, which showed almost no significant correlation with any metric.

For Clipped Density and symPrecision, adding GICDM consistently maintains or improves correlation with human scores.
For other metrics, this improvement is not observed, and adding GICDM often worsens correlation.

We hypothesize that this lack of improvement is due to insufficient robustness to real outliers, as Precision, Density, and P-precision are not robust to real outliers \cite{salvy2026enhanced}.
To test this, we computed Clipped Density without radius clipping, thereby removing its robustness to real outliers. In this case, it exhibited the same lack or worsening of correlation as the other non-robust metrics (\Cref{tab:correlation_human_full}).

Therefore, robustness to real outliers appears to be a necessary condition for GICDM to improve correlation with human scores, further supporting the recommendation to avoid non-robust metrics.

\newpage
\section{Real data Benchmark}
\label{sec:real_data_benchmark}

\begin{figure}[b!]
    \centering
      \begin{subfigure}[b]{0.48\textwidth}
        \centering
        \includegraphics[width=\textwidth]{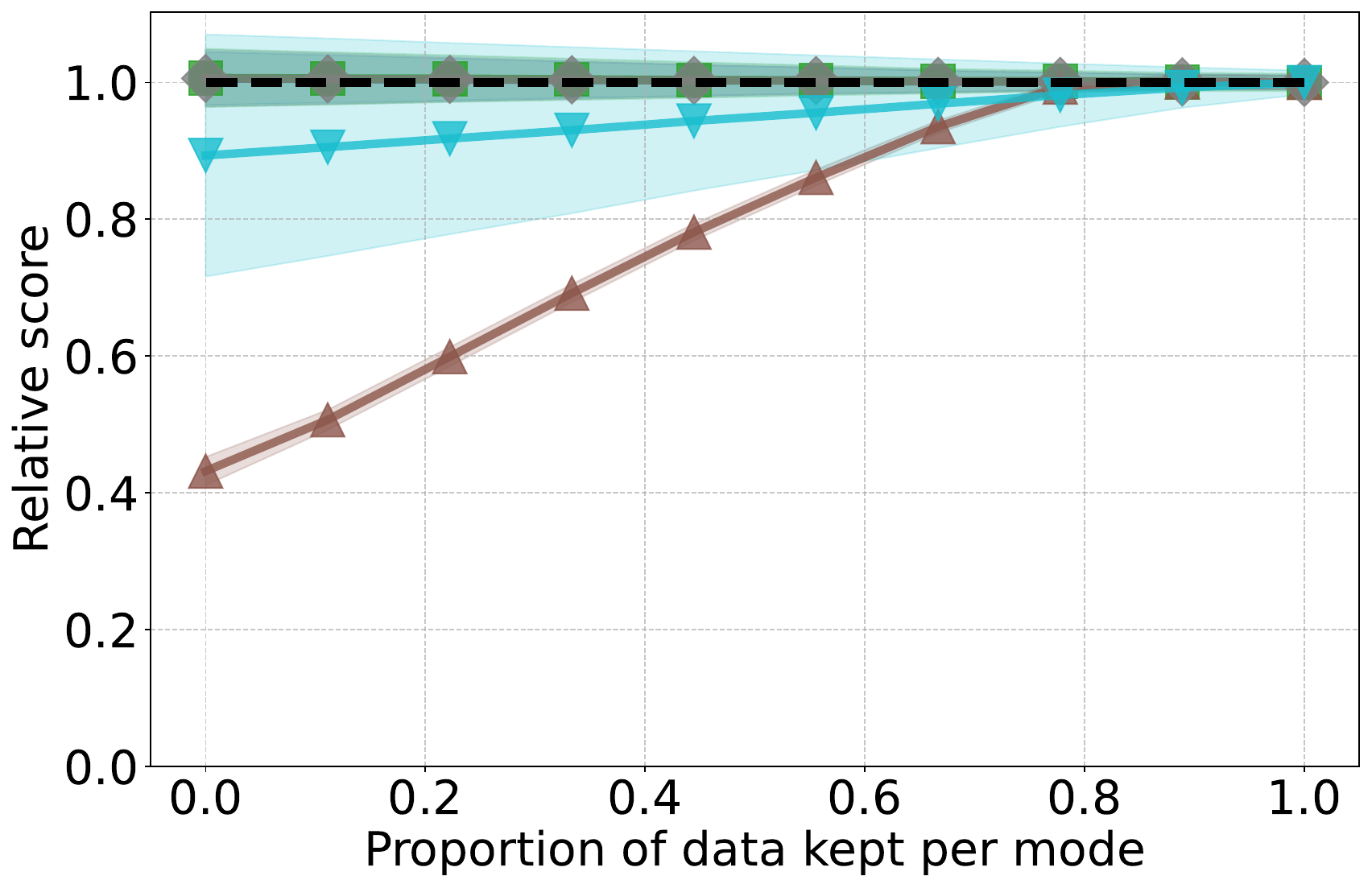}
        \caption{Simultaneously dropping modes (CIFAR-10)}
        \label{fig:real_fidelity_mode_dropping_sim}
    \end{subfigure}
    \hfill
    \begin{subfigure}[b]{0.48\textwidth}
        \centering
        \includegraphics[width=\textwidth]{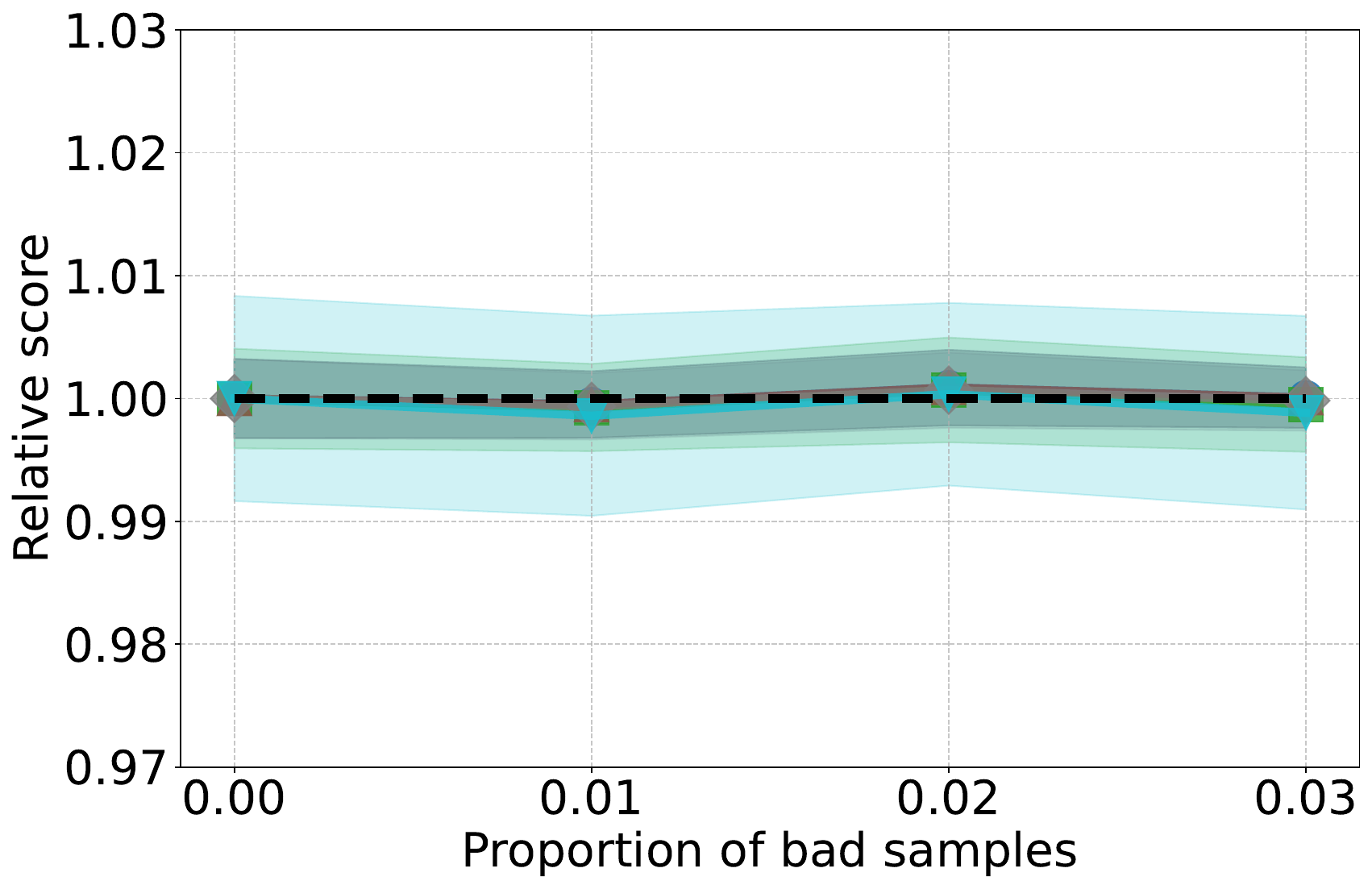}
        \caption{Introducing bad real \& syn. samples (CIFAR-10)}
        \label{fig:real_fidelity_ood_proportion_both}
    \end{subfigure}

    \vspace{0.3cm}

    \begin{subfigure}[b]{0.48\textwidth}
        \centering
        \includegraphics[width=\textwidth]{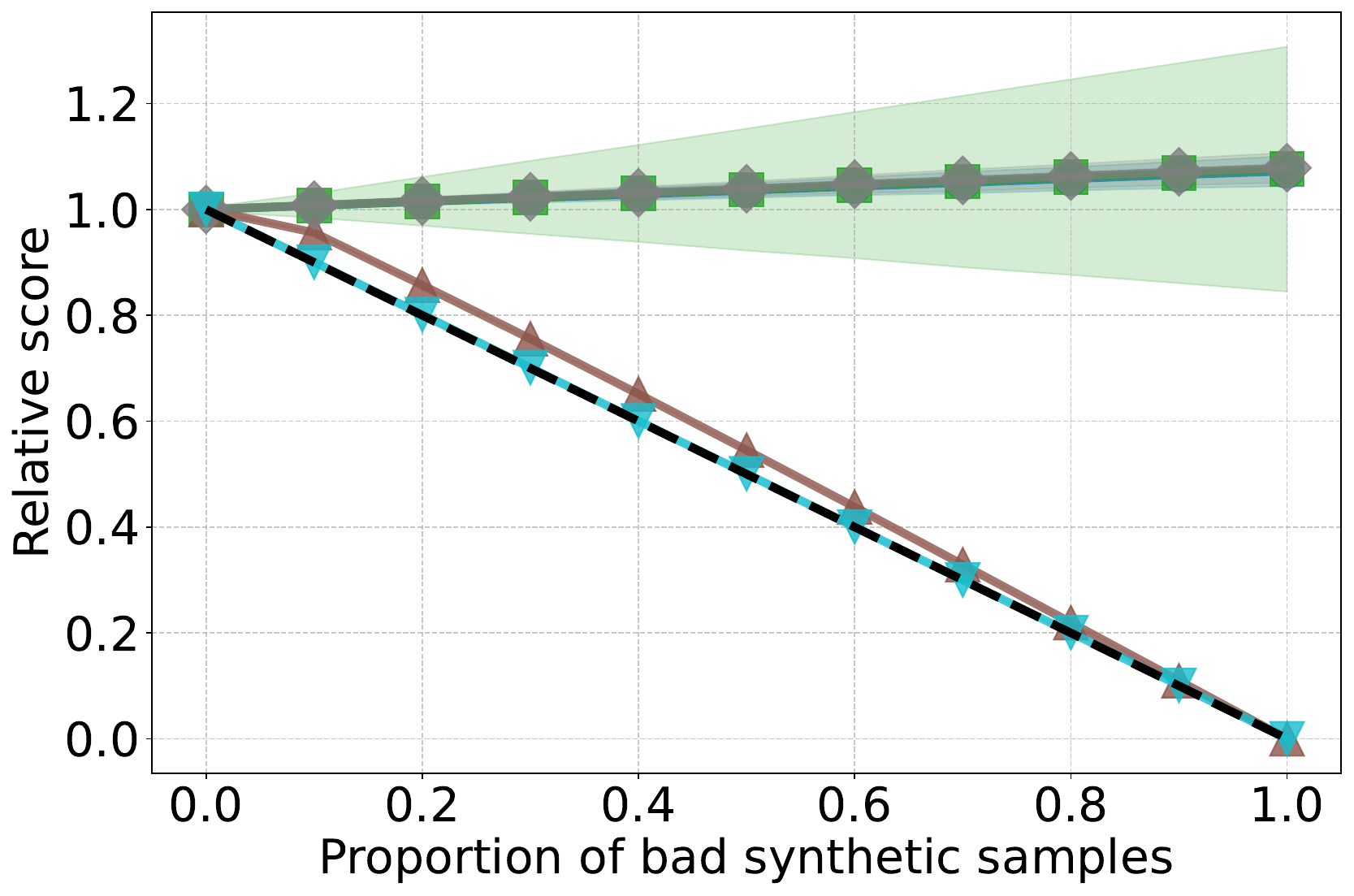}
        \caption{Fidelity reduction with bad samples (\textbf{CIFAR-10})}
        \label{fig:real_fidelity_ood_proportion_generated}
    \end{subfigure}
    
    \begin{subfigure}[b]{\textwidth}
        \centering
        \begin{tabular}{c|c|c|c}
            Legend & {\Cref{fig:real_fidelity_mode_dropping_sim}} & {\Cref{fig:real_fidelity_ood_proportion_both}} & {\Cref{fig:real_fidelity_ood_proportion_generated}} \\
            \hline
            \multirow{6}{*}{\includegraphics[width=0.29\textwidth, trim={0 0 0 10}, clip]{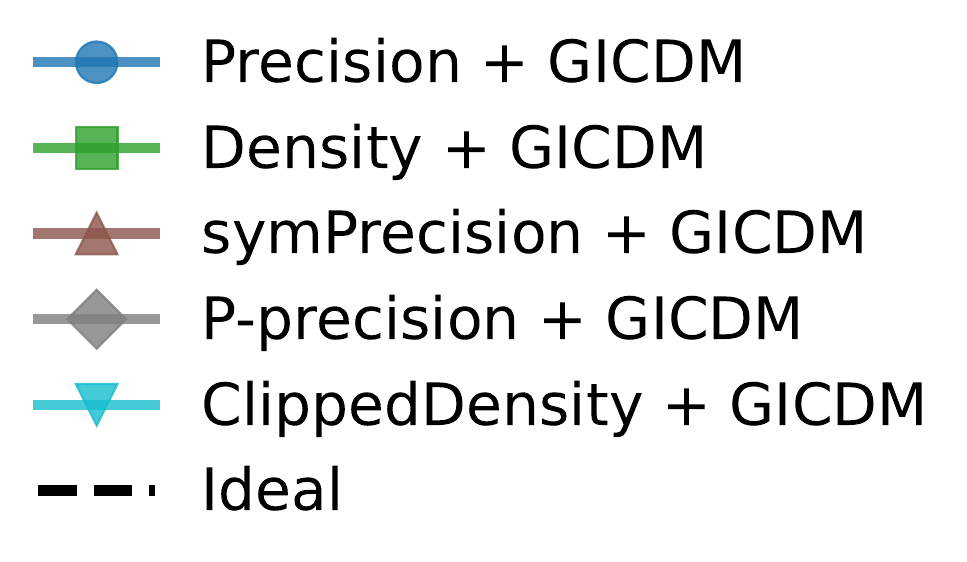}} & \cellcolor{green!20}\ding{51} & \cellcolor{green!20}\ding{51} & \cellcolor{red!20}\ding{55} \\
            \hline
            & \cellcolor{green!20}\ding{51} & \cellcolor{green!20}\ding{51} & \cellcolor{red!20}\ding{55} \\
            \hline
            & \cellcolor{red!20}\ding{55} & \cellcolor{green!20}\ding{51} & \cellcolor{green!20}\ding{51} \\
            \hline
            & \cellcolor{green!20}\ding{51} & \cellcolor{green!20}\ding{51} & \cellcolor{red!20}\ding{55} \\
            \hline
            & \cellcolor{green!20}\ding{51} & \cellcolor{green!20}\ding{51}  & \cellcolor{green!20}\ding{51} \\
            \hline
            \\
        \end{tabular}
        \caption{Legend and summary}
        \label{fig:fidelity_tests_summary}
    \end{subfigure}
    \caption{\textbf{Real data tests for fidelity metrics}. Fidelity metrics with GICDM are evaluated in several scenarios.
    (a) Simultaneous mode dropping: synthetic data from all but one CIFAR-10 class is progressively replaced with data from the remaining class. This test is related to "Dropping + Invention" in \Cref{tab:pos_fidelity}, but on real data. Similarly, only symPrecision fails.
    (b) Introducing real and synthetic out-of-distribution samples at equal rates: this tests the stability of the metrics. All succeed.
    }
    \label{fig:fidelitymetrics}
\end{figure} 

\begin{figure}[t!]
    \centering
      \begin{subfigure}[b]{0.48\textwidth}
        \centering
        \includegraphics[width=\textwidth]{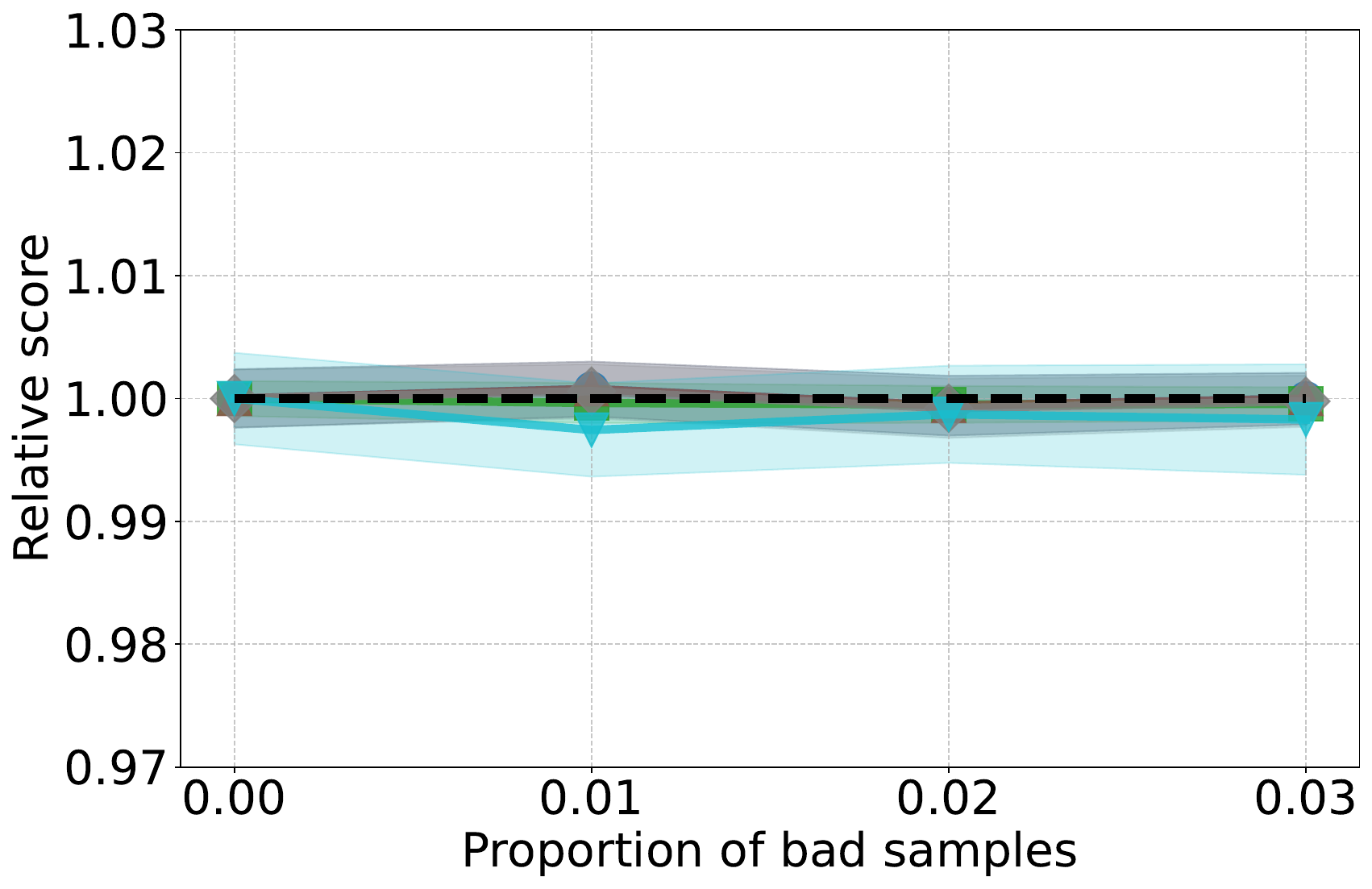}
        \caption{Introducing bad real \& syn. samples (CIFAR-10)}
        \label{fig:real_coverage_ood_proportion_both}
    \end{subfigure}
    \hfill
    \begin{subfigure}[b]{0.48\textwidth}
        \centering
        \includegraphics[width=\textwidth]{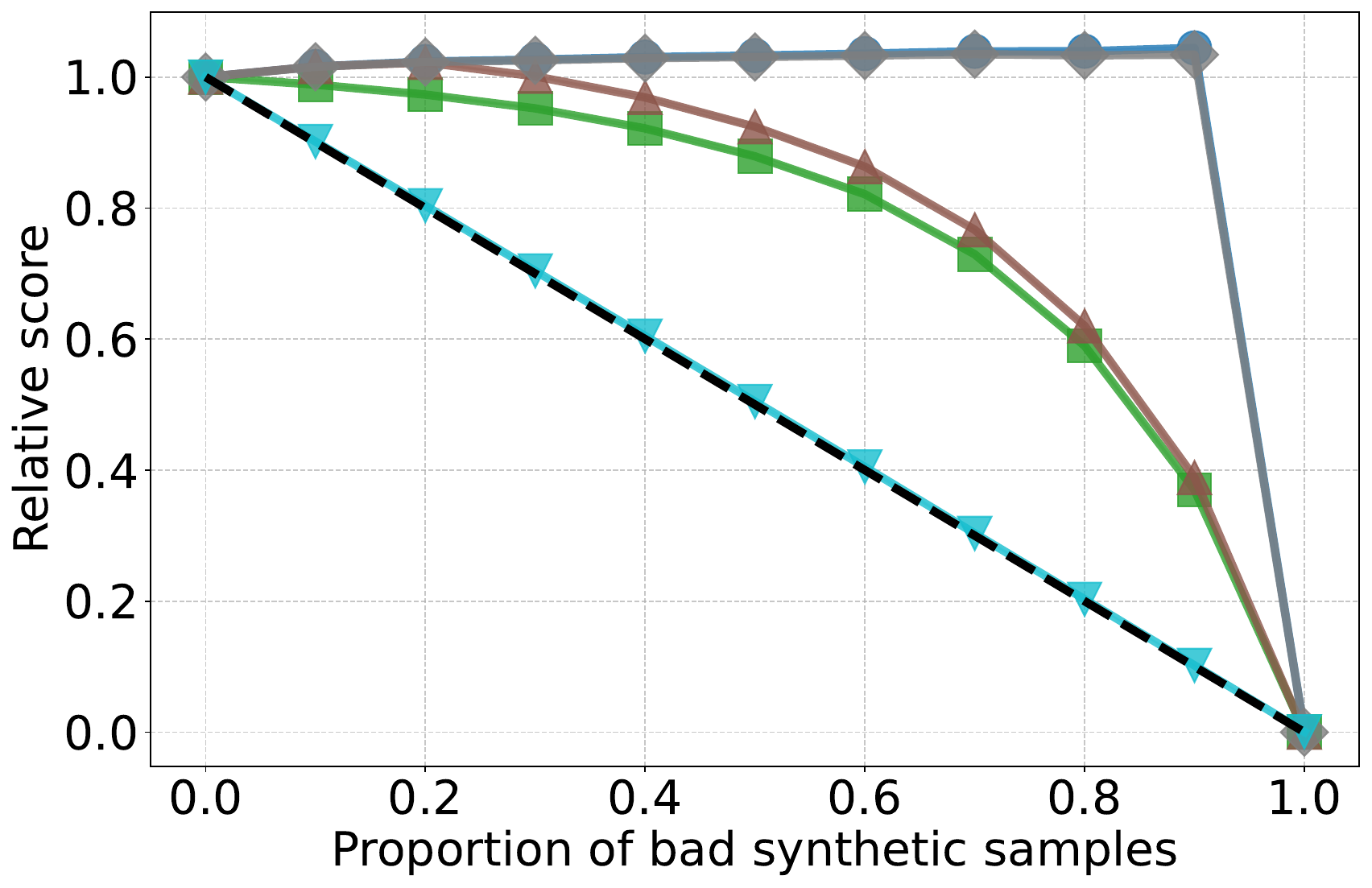}
        \caption{Coverage reduction with bad samples (CIFAR-10)}
        \label{fig:real_coverage_ood_proportion_generated}
    \end{subfigure}

    \vspace{0.3cm}
    
    \begin{subfigure}[b]{\textwidth}
        \centering
        \begin{tabular}{c|c|c}
            Legend & {\Cref{fig:real_coverage_ood_proportion_both}} & {\Cref{fig:real_coverage_ood_proportion_generated}} \\
            \hline
            \multirow{6}{*}{\includegraphics[width=0.3\textwidth, trim={0 0 0 10}, clip]{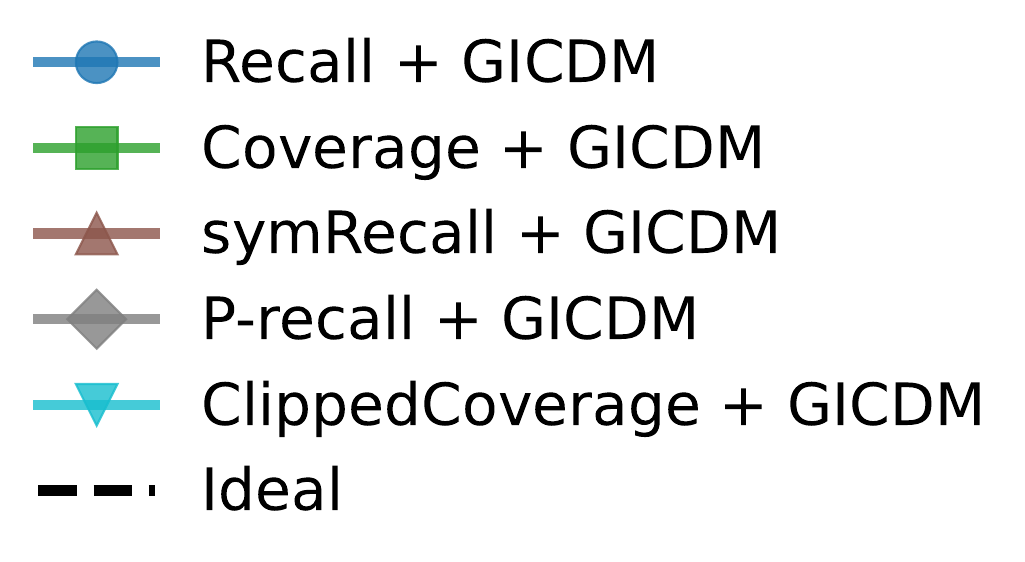}} & \cellcolor{green!20}\ding{51} & \cellcolor{red!20}\ding{55} \\
            \hline
            & \cellcolor{green!20}\ding{51} & \cellcolor{red!20}\ding{55} \\
            \hline
            & \cellcolor{green!20}\ding{51} & \cellcolor{red!20}\ding{55} \\
            \hline
            & \cellcolor{green!20}\ding{51} & \cellcolor{red!20}\ding{55} \\
            \hline
            & \cellcolor{green!20}\ding{51} & \cellcolor{green!20}\ding{51} \\
            \hline
            \\
        \end{tabular}
        \caption{Legend and summary}
        \label{fig:coverage_tests_summary}
    \end{subfigure}
    \caption{\textbf{Real data tests for coverage metrics}. Coverage metrics with GICDM are evaluated in several scenarios, analogous to those in \Cref{fig:fidelitymetrics}.
    (a) Introducing real and synthetic out-of-distribution samples at equal rates: all metrics succeed.
    (b) Progressively replacing good CIFAR-10 synthetic samples with bad ones. Results are consistent with those reported without GICDM: only Clipped Coverage succeeds \cite{salvy2026enhanced}.
    }
    \label{fig:coveragemetrics}
\end{figure} 

\citet{salvy2026enhanced} introduced a benchmark for fidelity and coverage metrics, primarily using DINOv2 embeddings of real CIFAR10 data. In this section, we evaluate distance-based metrics with GICDM on this benchmark.
We omit their "translating synthetic Gaussian test" on synthetic data, as it is similar to the "Gaussian Mean Difference + Outlier" test in \Cref{appendix:position_benchmark}.

Compared to the results reported by \citet{salvy2026enhanced} without GICDM, two main differences arise. 
First, with GICDM, P-precision and P-recall now succeed in the "Introducing bad real and synthetic samples" test.
Second, when progressively replacing good synthetic samples with bad ones for fidelity, only Clipped Density and symPrecision succeed, whereas previously all fidelity metrics succeeded \cite{salvy2026enhanced}. This failure occurs for the same metrics that failed in the human correlation study in \Cref{sec:human_correlation}, specifically those lacking robustness to real outliers \cite{salvy2026enhanced}.

\newpage
\section{Visualization of hubness effects and GICDM correction}

For LSUN Bedroom data embedded with DINOv2, \Cref{fig:intro_viz} (top row) displays the real image that appears most frequently among the 5-nearest-neighbors of other real images, a \emph{hub}, and the set of real images that include it as a neighbor. This hub is present in 62 neighborhoods, highlighting the severity of hubness. After applying ICDM \cite{jegou2010contextual}, only 5 real images retain this hub as a neighbor.

The bottom row of \Cref{fig:intro_viz} presents the same analysis for generated images (from ADM-dropout). The most frequent generated hub is included in the neighborhoods of 64 real images, but after applying GICDM, this number drops to 4.

In both cases, before hubness reduction, many of the images listing the hub as a neighbor are not strongly semantically related to it. After hubness reduction, the remaining images are more semantically similar to the hub image.

\begin{figure}[h]
  \centering
  \begin{subfigure}[b]{0.2\textwidth}
    \centering
    \textbf{Query Image}
  \end{subfigure}
  \hfill
  \begin{subfigure}[b]{0.78\textwidth}
    \centering
    \textbf{Real images including the left Hub in their 5-Nearest-Neighbors}
  \end{subfigure}

  \vspace{0.3em}

  \begin{subfigure}[b]{0.2\textwidth}
    \centering
    LSUN Bedroom Hub
  \end{subfigure}
  \hfill
  \begin{subfigure}[b]{0.5\textwidth}
    \centering
    Original ($n=62$)
  \end{subfigure}
  \hfill
  \begin{subfigure}[b]{0.29\textwidth}
    \centering
    After ICDM ($n=5$)
  \end{subfigure}

  \begin{subfigure}[c]{0.2\textwidth}
    \centering
    \includegraphics[height=3.35cm]{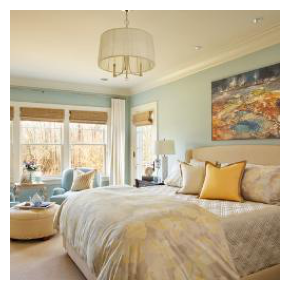}
  \end{subfigure}
  \hfill
  \begin{subfigure}[c]{0.5\textwidth}
    \centering
    \includegraphics[height=3.2cm]{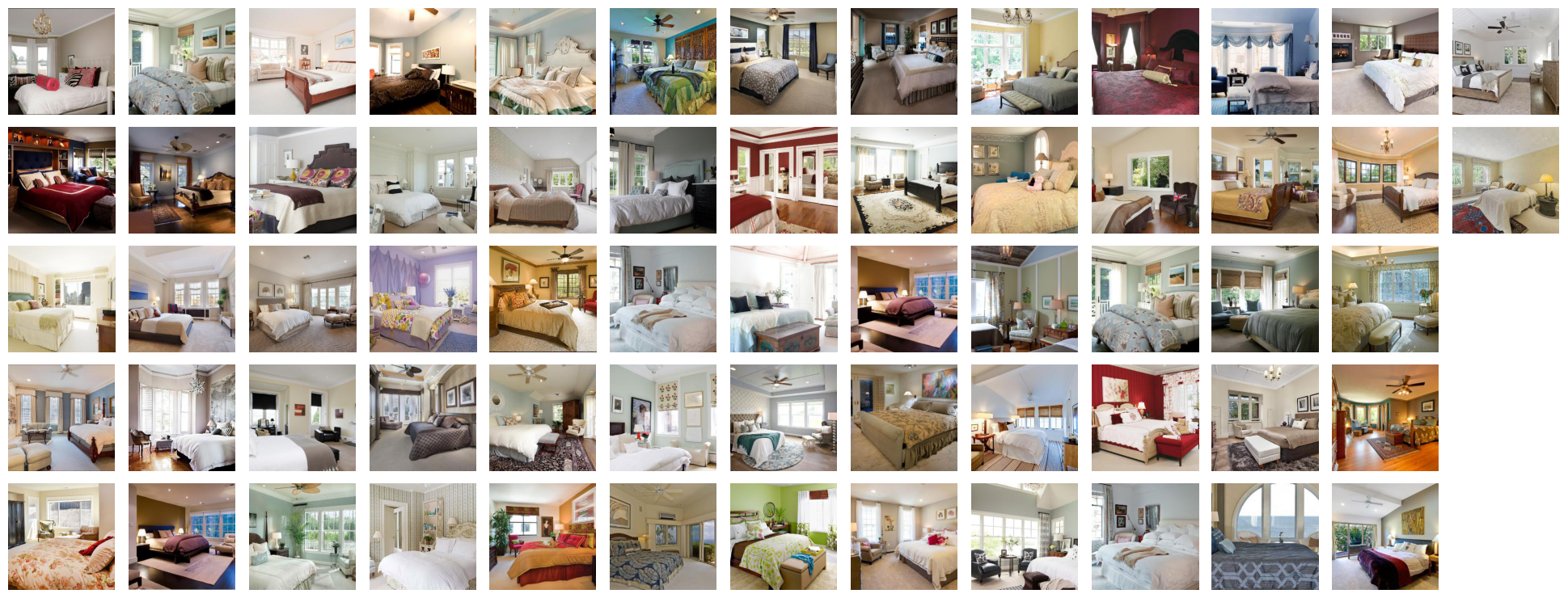}
  \end{subfigure}
  \hfill
  \begin{subfigure}[c]{0.29\textwidth}
    \centering
    \includegraphics[height=3.35cm]{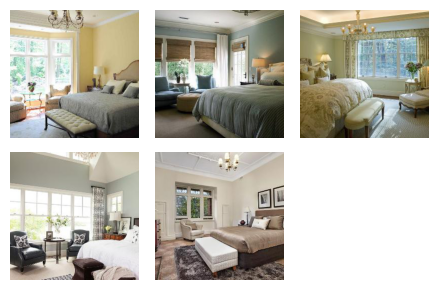}
  \end{subfigure}
  
  \vspace{0.3em}

  \begin{subfigure}[b]{0.2\textwidth}
    \centering
    \textbf{Generated} Hub
  \end{subfigure}
  \hfill
  \begin{subfigure}[b]{0.5\textwidth}
    \centering
    Original ($n=64$)
  \end{subfigure}
  \hfill
  \begin{subfigure}[b]{0.29\textwidth}
    \centering
    After GICDM (Ours) ($n=4$)
  \end{subfigure}

  \begin{subfigure}[c]{0.2\textwidth}
    \centering
    \includegraphics[height=3.35cm]{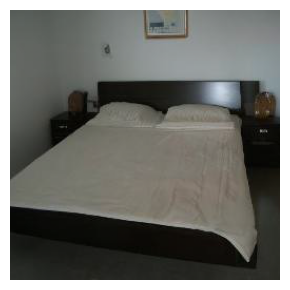}
  \end{subfigure}
  \hfill
  \begin{subfigure}[c]{0.5\textwidth}
    \centering
    \includegraphics[height=3.2cm]{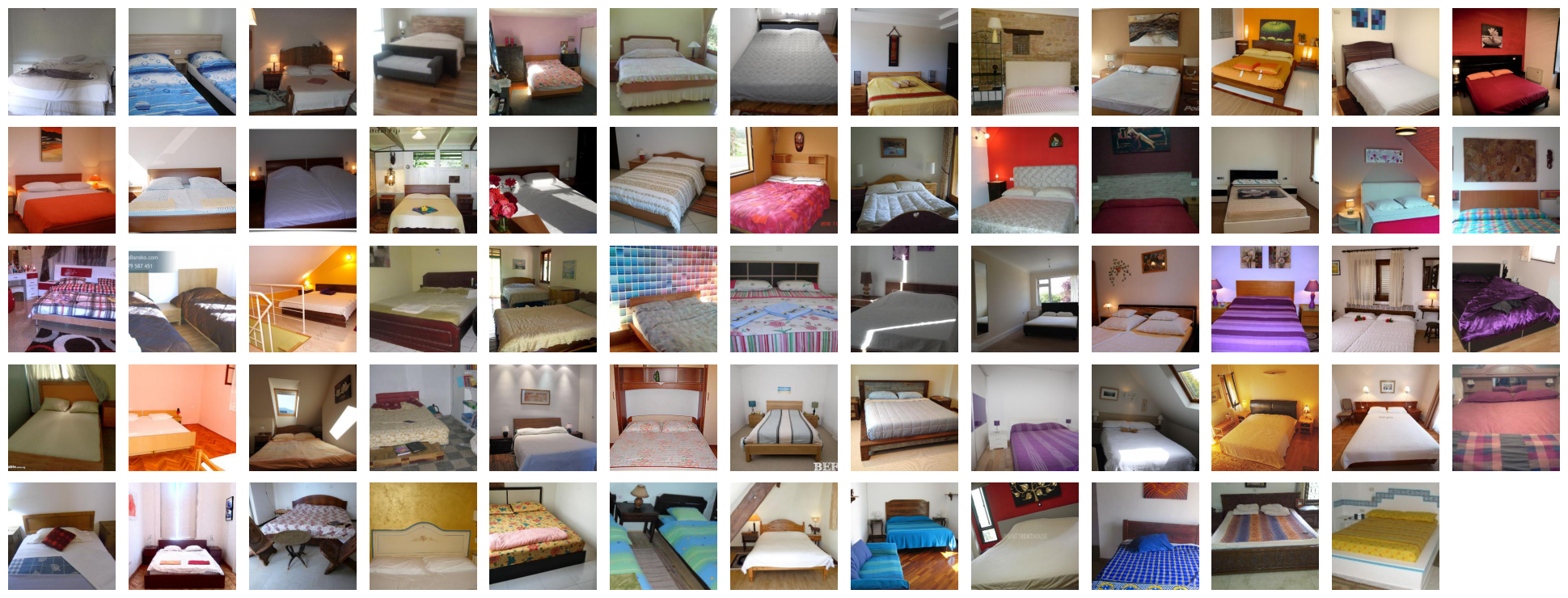}
  \end{subfigure}
  \hfill
  \begin{subfigure}[c]{0.29\textwidth}
    \centering
    \includegraphics[height=3.35cm]{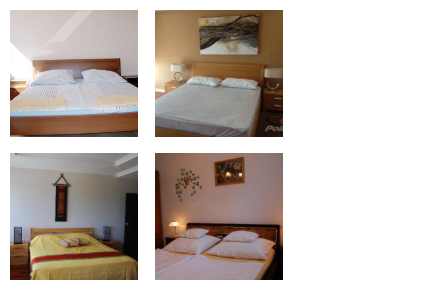}
  \end{subfigure}

  \caption{\textbf{Hubness in DINOv2-embedded LSUN Bedroom and generated sets.} 
  The first column shows the most occurring real (top) and generated (bottom, ADM-dropout) images within the 5-nearest neighborhoods of real images. 
  The middle column displays the real images that include them as neighbors. In both cases, there are more than 60 such real images, identifying the images in the first column as \emph{hubs} and highlighting the asymmetry of neighborhood relationships. After applying hubness reduction (ICDM for real data, GICDM for generated data), the number of real images including these hubs in their neighborhoods drops significantly to 5 and 4, respectively. In both cases, the remaining images listing the hub as a neighbor are more semantically related to it than many of the original ones.}
  \label{fig:intro_viz}
\end{figure}

\newpage
\section{Complexity and Computation time}
\label{sec:complexity}

Let $N$ be the number of real samples and $M$ be the number of generated samples. Computing pairwise distances takes $O(N^2)$ (real-real) and $O(NM)$ (real-generated). Given all pairwise distances, finding the $k$-NN for each
sample matches this cost. ICDM repeats this process $T$ times. Once these distances are computed, the remaining operations (computing $\delta$, ratios, and the threshold) are linear. Thus, the overall computational complexity of GICDM is $O(N(TN + M))$.

Note that standard distance-based metrics also scale as $O(N^2)$ in high dimensions because the most efficient method for
finding the $k$-NN of all $N$ points is a brute-force approach \cite{DBLP:journals/corr/KomarovDD13}, which constructs the distance
matrix and identifies the smallest $k$ elements in each row. Both steps operate in $O(N^2)$ time.

As mentioned in \Cref{sec:experiments}, all experiments were conducted on a single H100 GPU with 80GB of memory.
\begin{itemize}
  \item For $M=50000$ and $N=50000$, computing scores for all metrics discussed in this work takes approximately 30 minutes
  (GICDM only needs to be run once). 42 generated sets are evaluated across 4 different embedding spaces for a total of about 84 hours.
  \item Embedding $50000$ samples takes approximately 4 minutes for Inceptionv3 and DINOv2, 5 minutes for VGG16, and 54 minutes
  for DINOv3. Performing this for 42 generated sets and 4 real sets totals around 51 hours.
  \item Computing all hubness reduction metrics across all tested parameter values takes on average 1.5 hours per embedding-dataset pair. Repeating this for 17 pairs totals about 25 hours.
  \item The real data benchmark in \Cref{sec:real_data_benchmark} required around 50 hours of compute, while the synthetic benchmark in \Cref{appendix:position_benchmark} took roughly 25 hours.
\end{itemize}
The runtime of the remaining experiments is negligible compared to those stated above. In total, the computation time required to reproduce the results in this paper is approximately 250 GPU hours, while the entire research project (including preliminary experiments) used around 500 GPU hours.

For reference, although we had access to the generated samples from \citet{stein2024exposing}, generating 50000 samples on a single H100 GPU takes approximately 70 hours with ADM \cite{dhariwal2021diffusion} and about 3 hours with recent latent diffusion models.

\newpage
\section{Hypersphere test results for each metric}
\label{sec:hypersphere_metric_wise_results}

\Cref{fig:hypersphere_metrics} shows the individual results of standard distance-based metrics on the hypersphere test from \Cref{fig:hypersphere_test}, while \Cref{fig:hypersphere_metrics_gicdm} displays the results after applying GICDM. Without GICDM, all metrics fail the test, whereas with GICDM, all metrics remain at 0, demonstrating that GICDM effectively resolves hubness-related failures.

\begin{figure}[h!]
  \begin{center}
    \subfloat[Precision]{\includegraphics[width=0.23\textwidth]{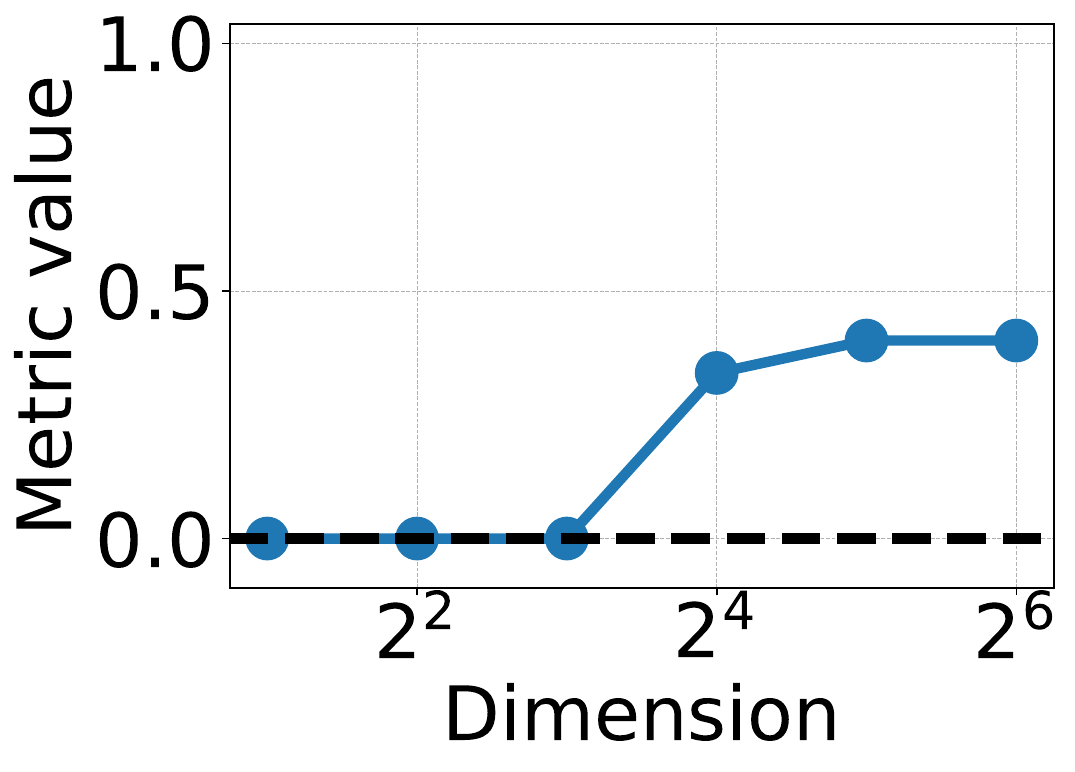}}
    \subfloat[Density]{\includegraphics[width=0.23\textwidth]{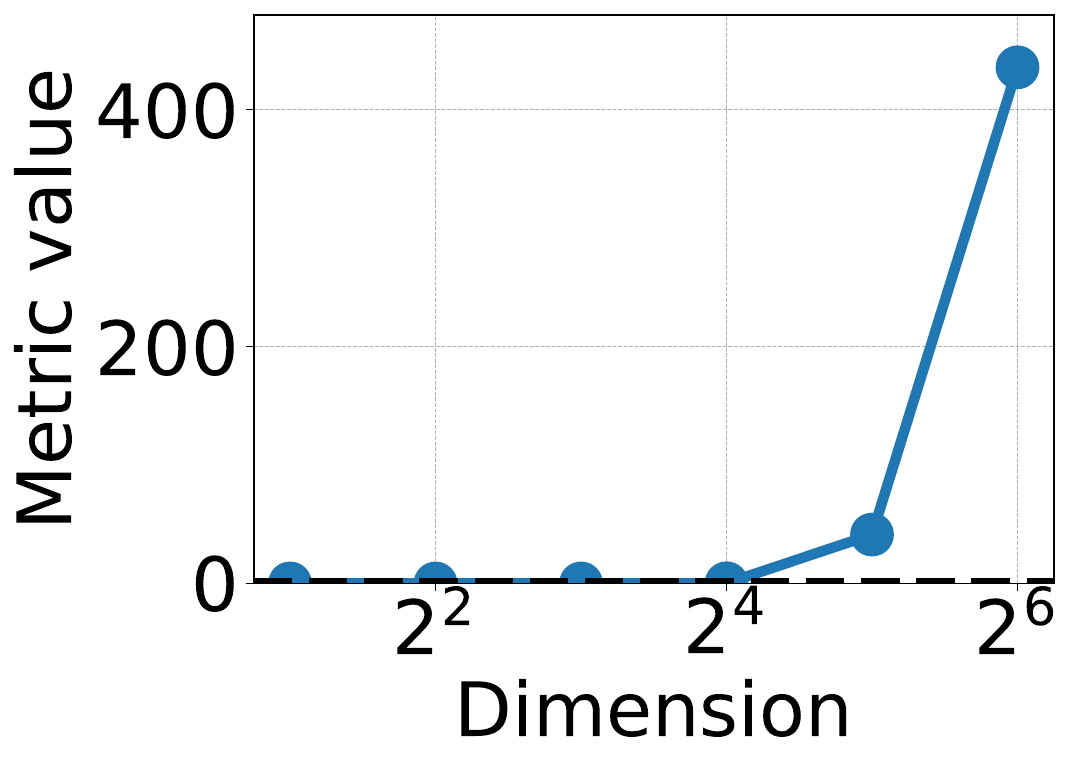}}
    \subfloat[symPrecision]{\includegraphics[width=0.23\textwidth]{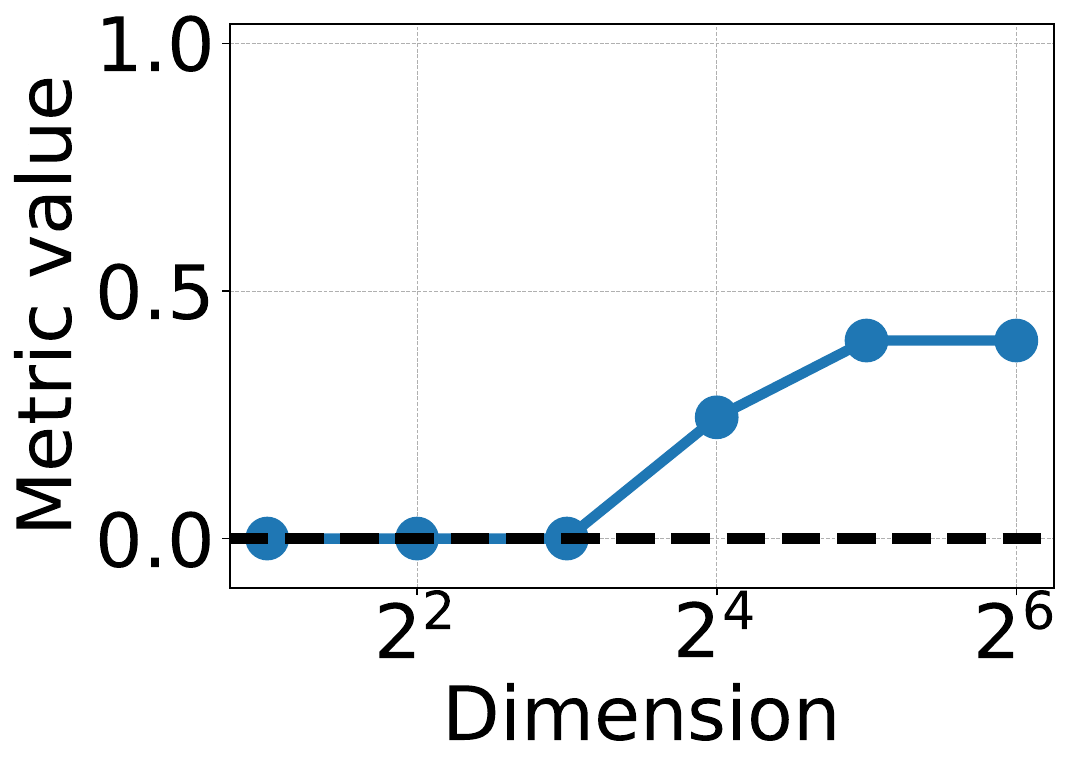}}
    \vfill
    \vspace{0.9em}
    \subfloat[P-precision]{\includegraphics[width=0.23\textwidth]{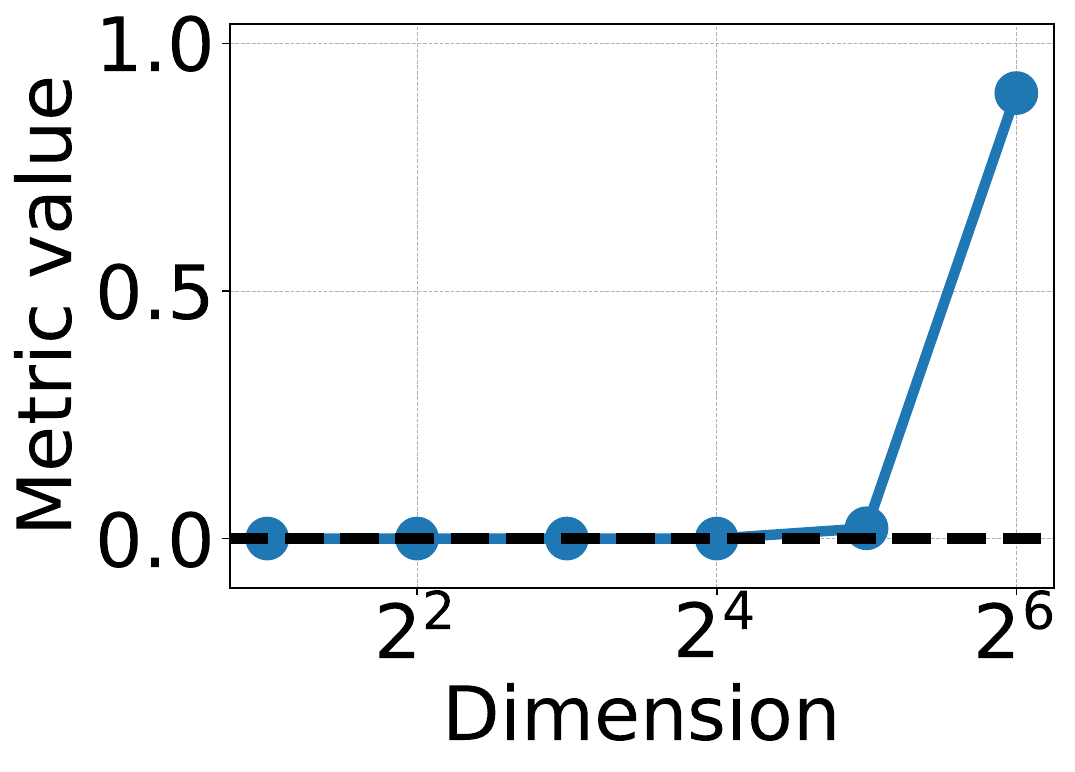}}
    \subfloat[Clipped Density]{\includegraphics[width=0.23\textwidth]{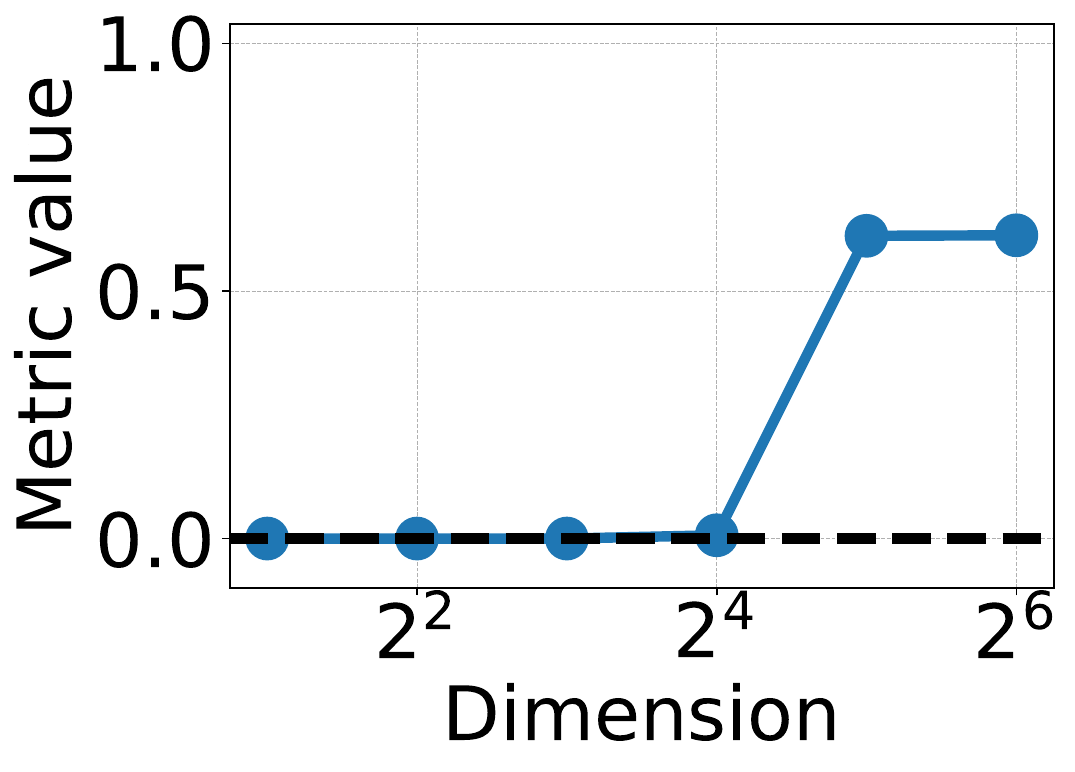}}
    \vfill
    \vspace{0.9em}
    \subfloat[Recall]{\includegraphics[width=0.23\textwidth]{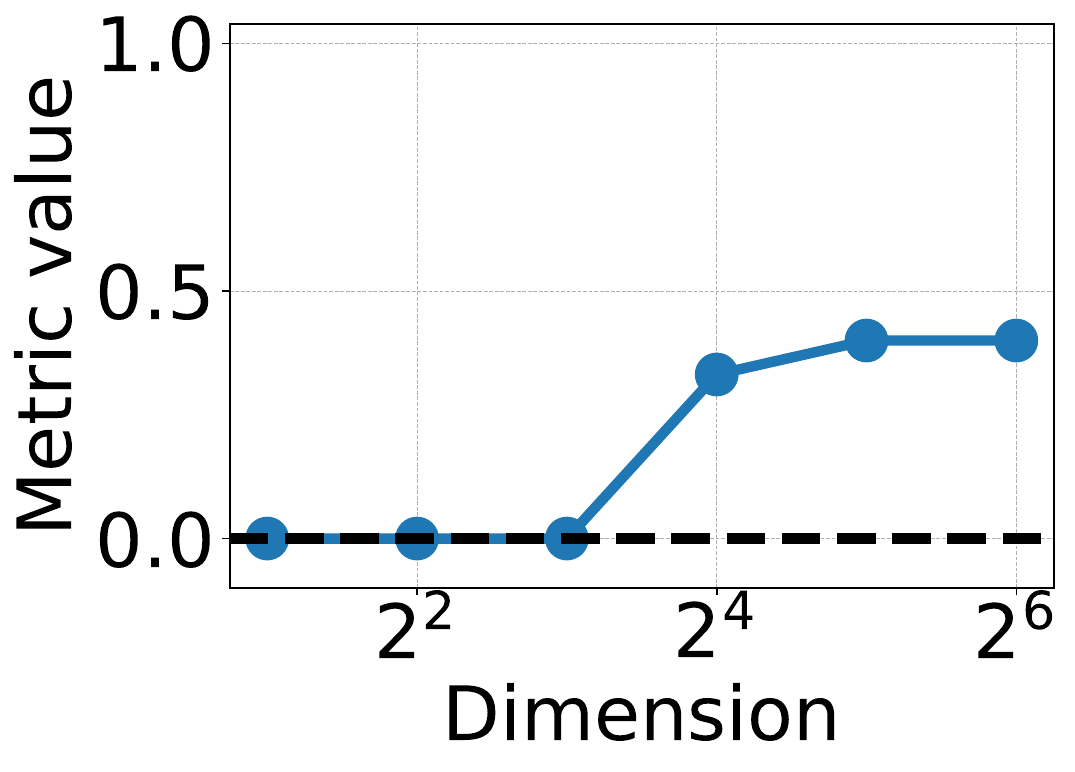}}
    \subfloat[Coverage]{\includegraphics[width=0.23\textwidth]{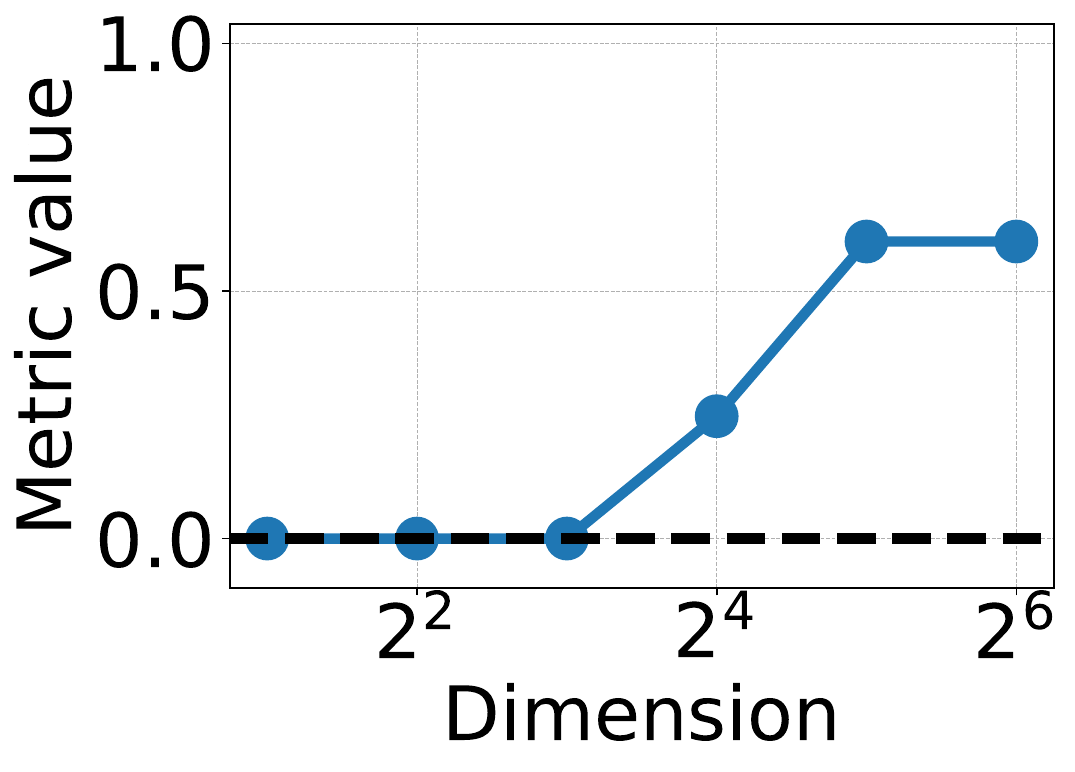}}
    \subfloat[symRecall]{\includegraphics[width=0.23\textwidth]{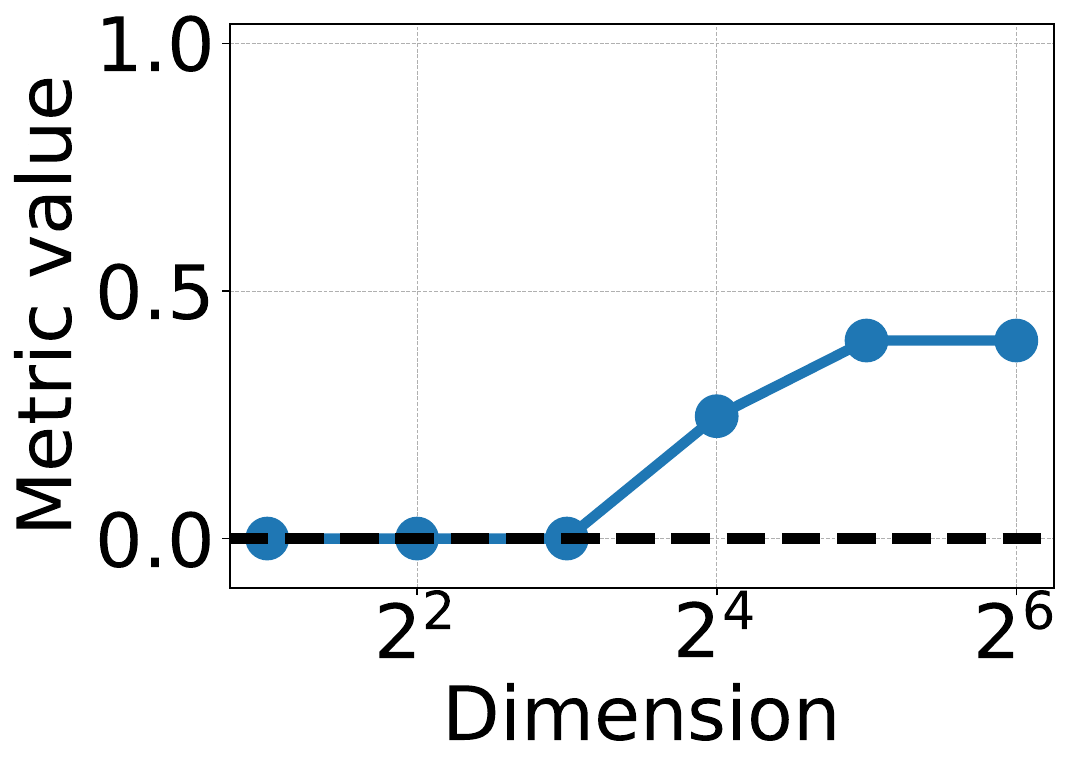}}
    \vfill
    \vspace{0.9em}
    \subfloat[P-recall]{\includegraphics[width=0.23\textwidth]{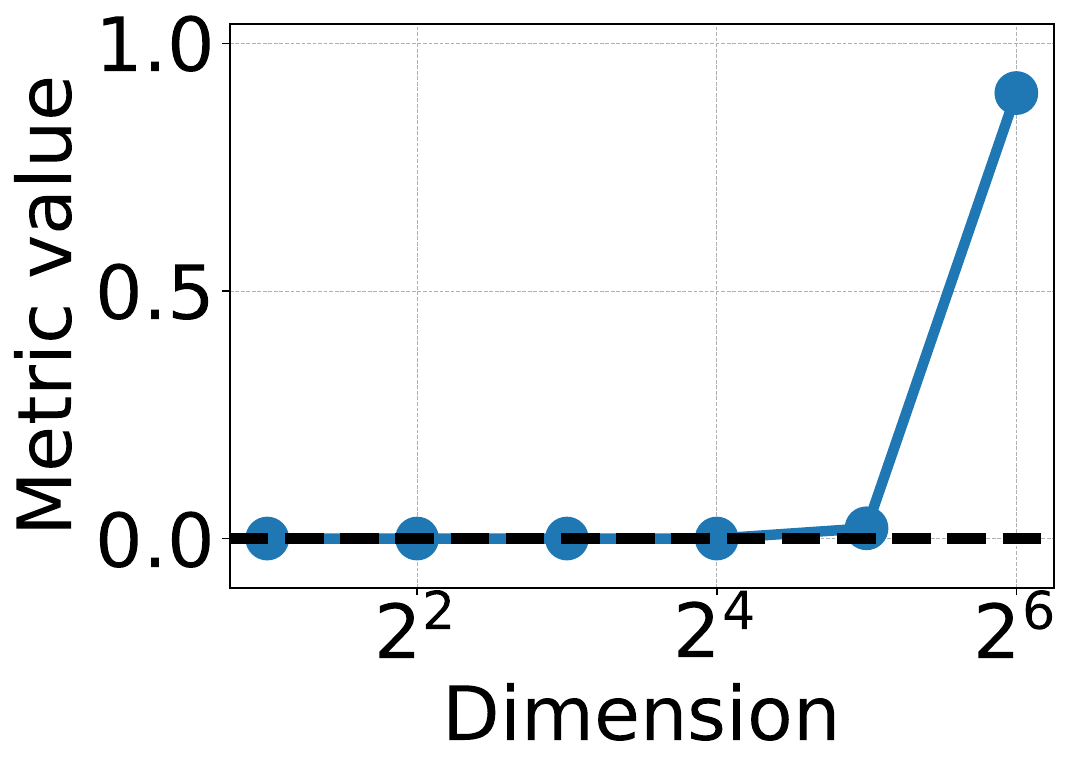}}
    \subfloat[Clipped Coverage]{\includegraphics[width=0.23\textwidth]{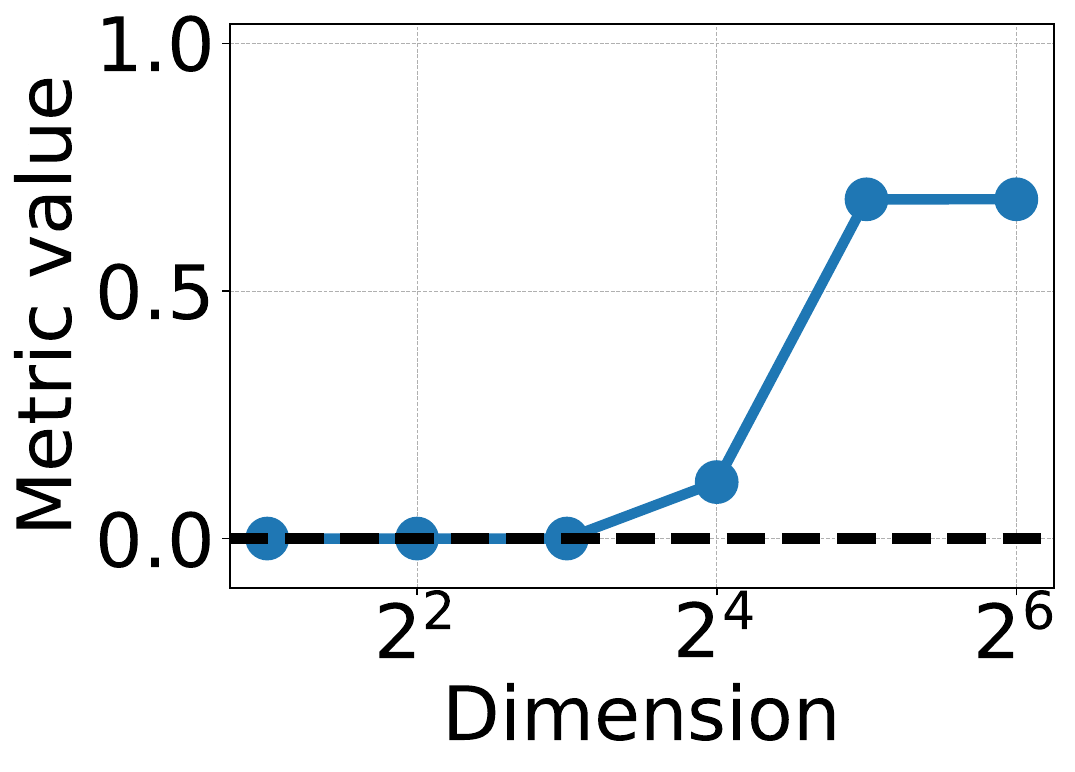}}
    \caption{\textbf{Metrics without GICDM on the Hypersphere Test}: Each subplot shows how a standard metric behaves as the dimension increases in the hypersphere test scenario described in \Cref{fig:hypersphere_test}. Ideally, all metrics should remain at zero for all dimensions, since the real and generated distributions are disjoint.}
    \label{fig:hypersphere_metrics}
  \end{center}
\end{figure}

\begin{figure}[t!]
  \begin{center}
    \subfloat[Precision]{\includegraphics[width=0.23\textwidth]{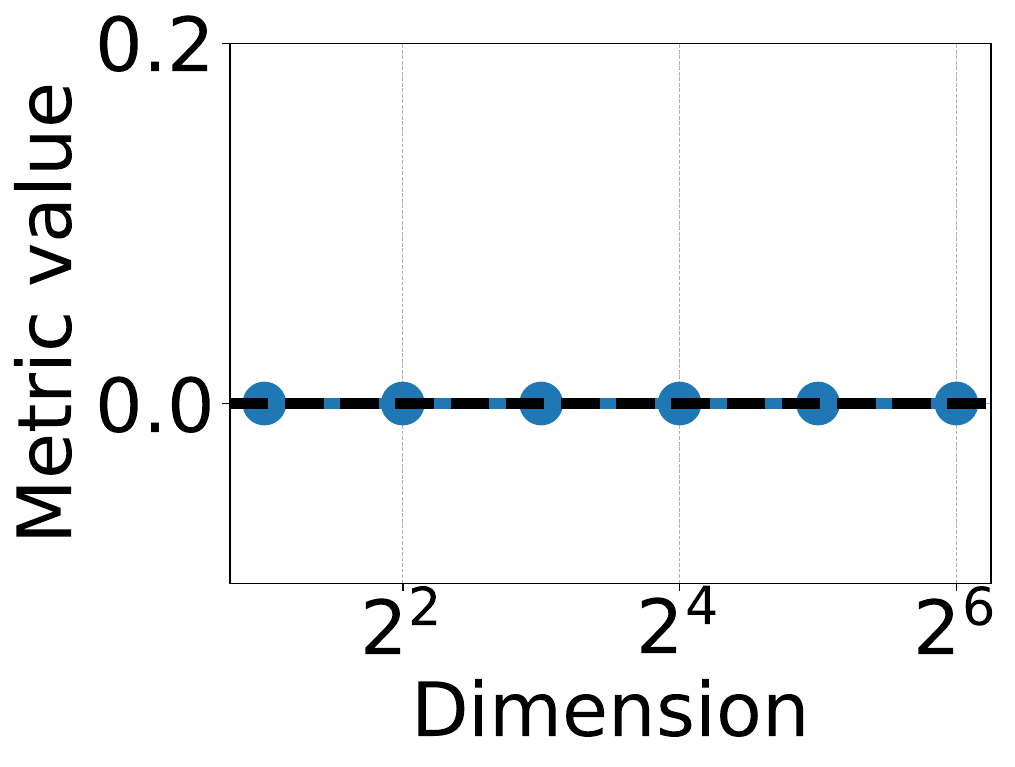}}
    \subfloat[Density]{\includegraphics[width=0.23\textwidth]{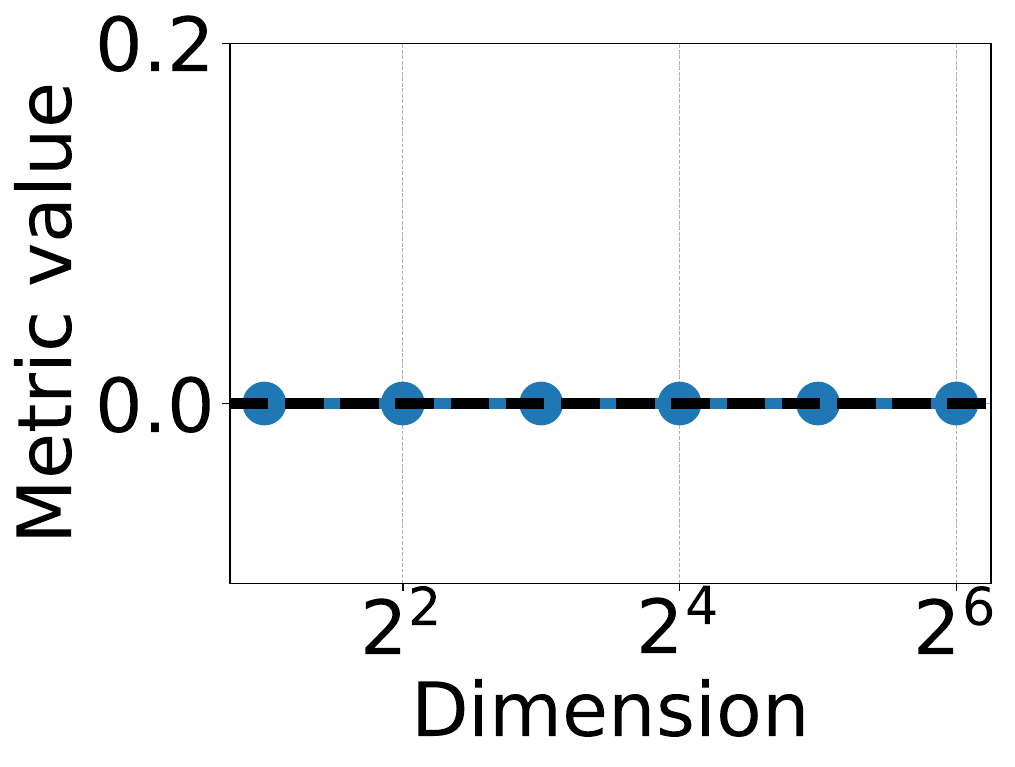}}
    \subfloat[symPrecision]{\includegraphics[width=0.23\textwidth]{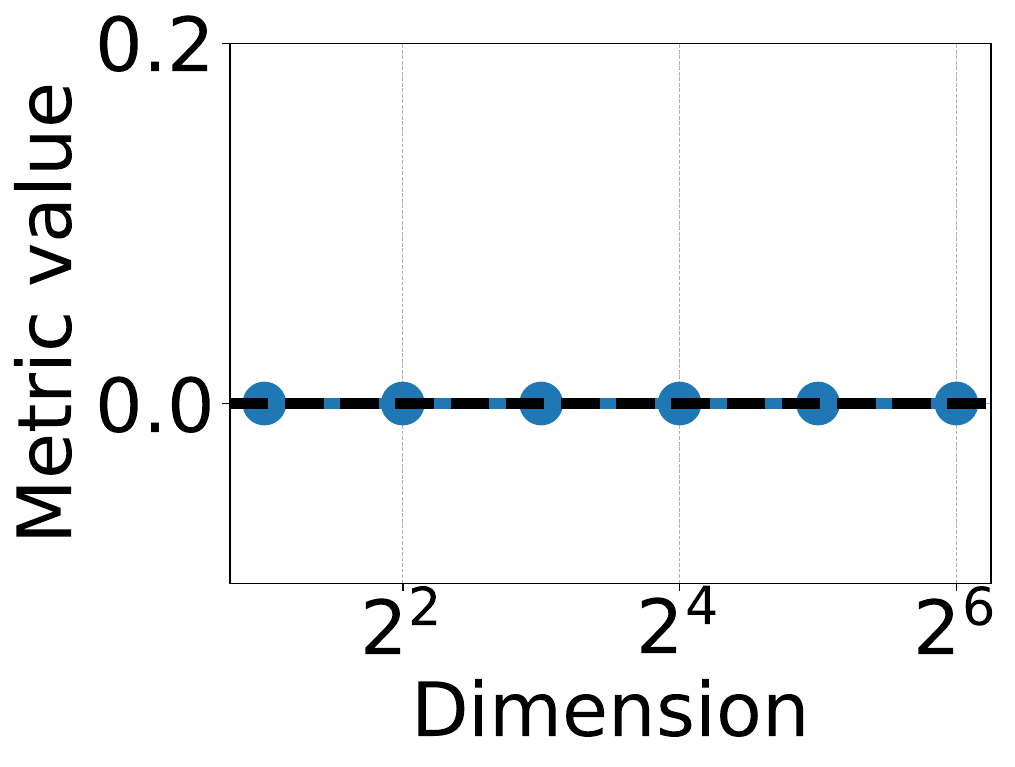}}
    \vfill
    \vspace{0.9em}
    \subfloat[P-precision]{\includegraphics[width=0.23\textwidth]{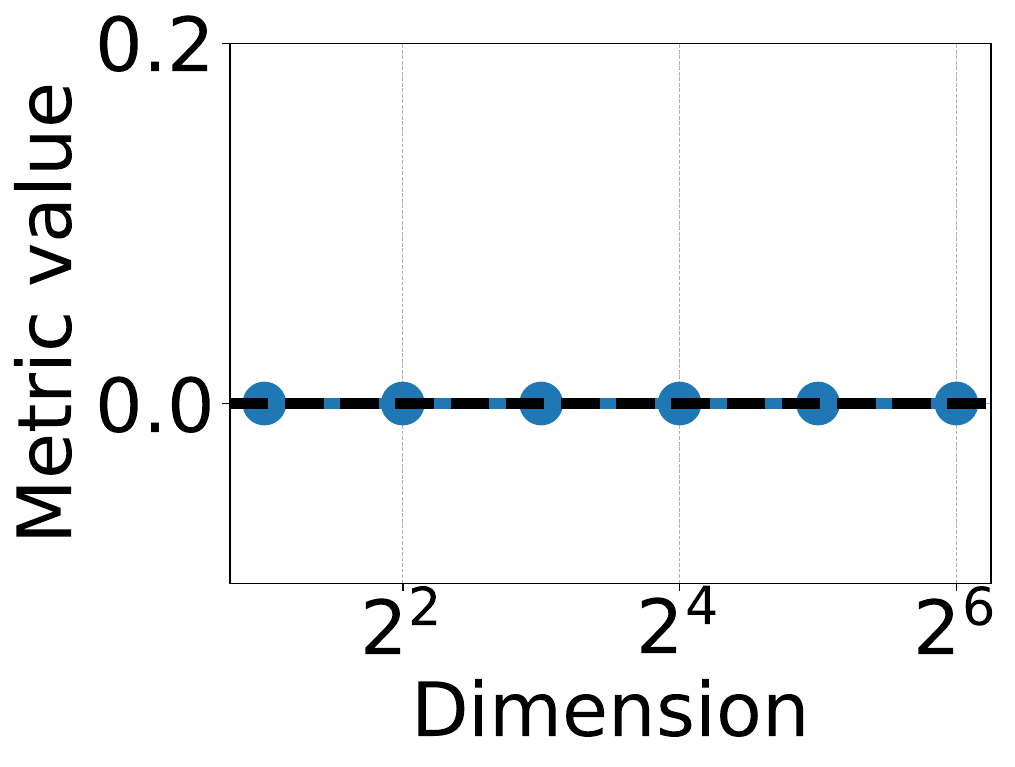}}
    \subfloat[Clipped Density]{\includegraphics[width=0.23\textwidth]{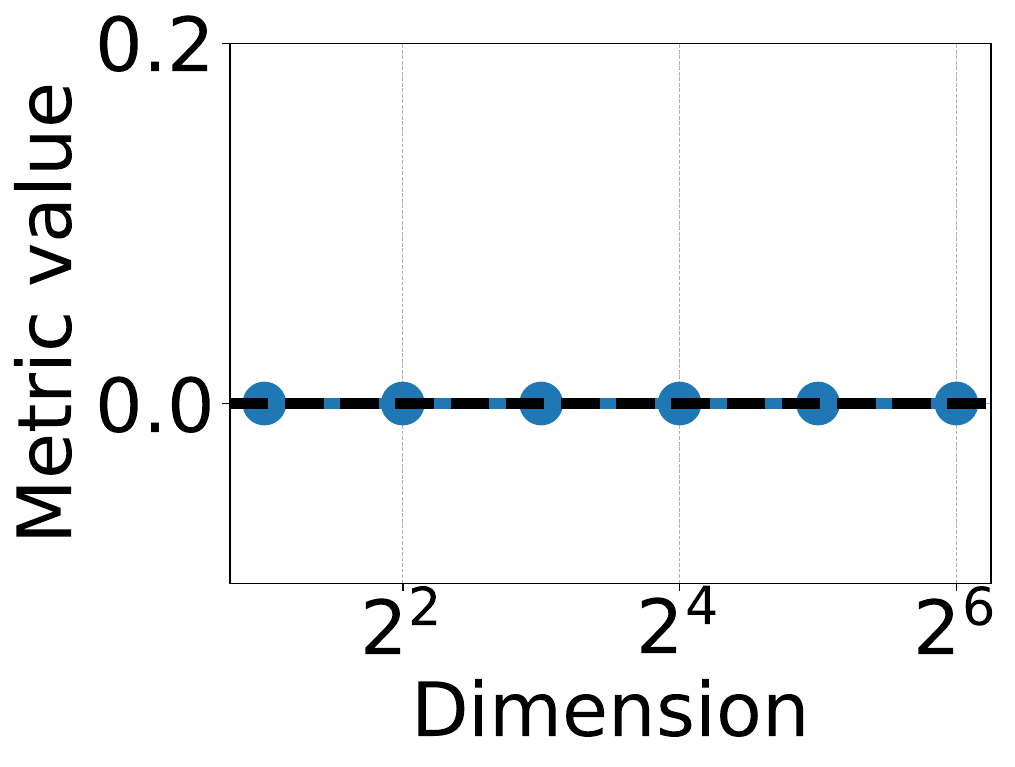}}
    \vfill
    \vspace{0.9em}
    \subfloat[Recall]{\includegraphics[width=0.23\textwidth]{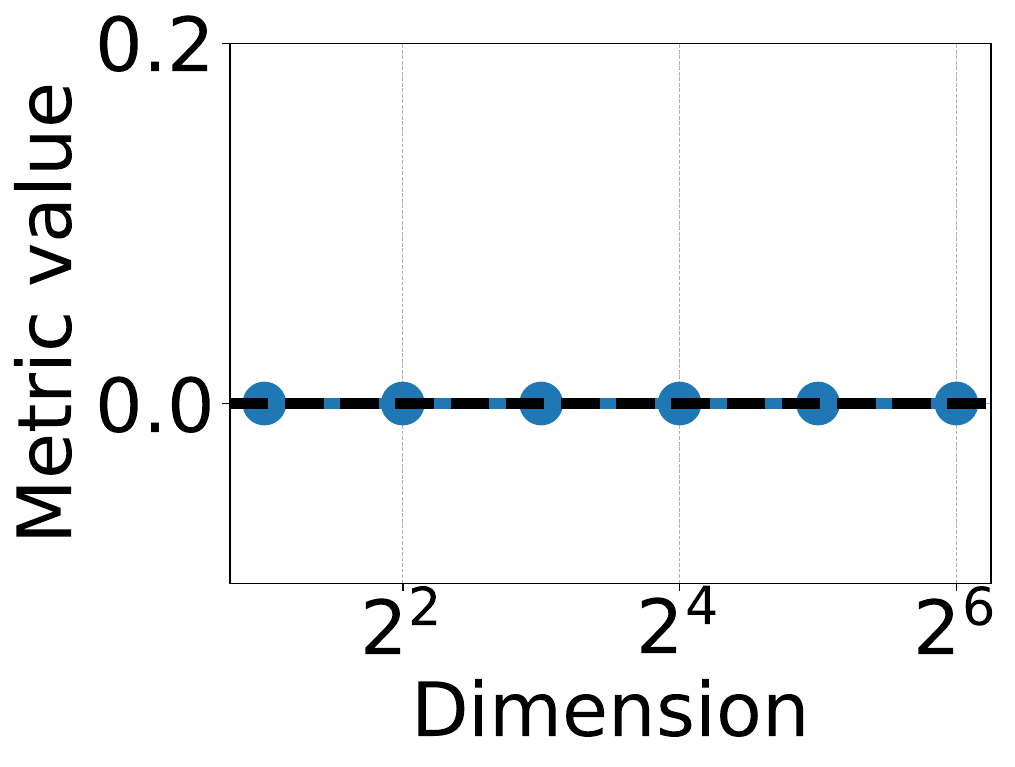}}
    \subfloat[Coverage]{\includegraphics[width=0.23\textwidth]{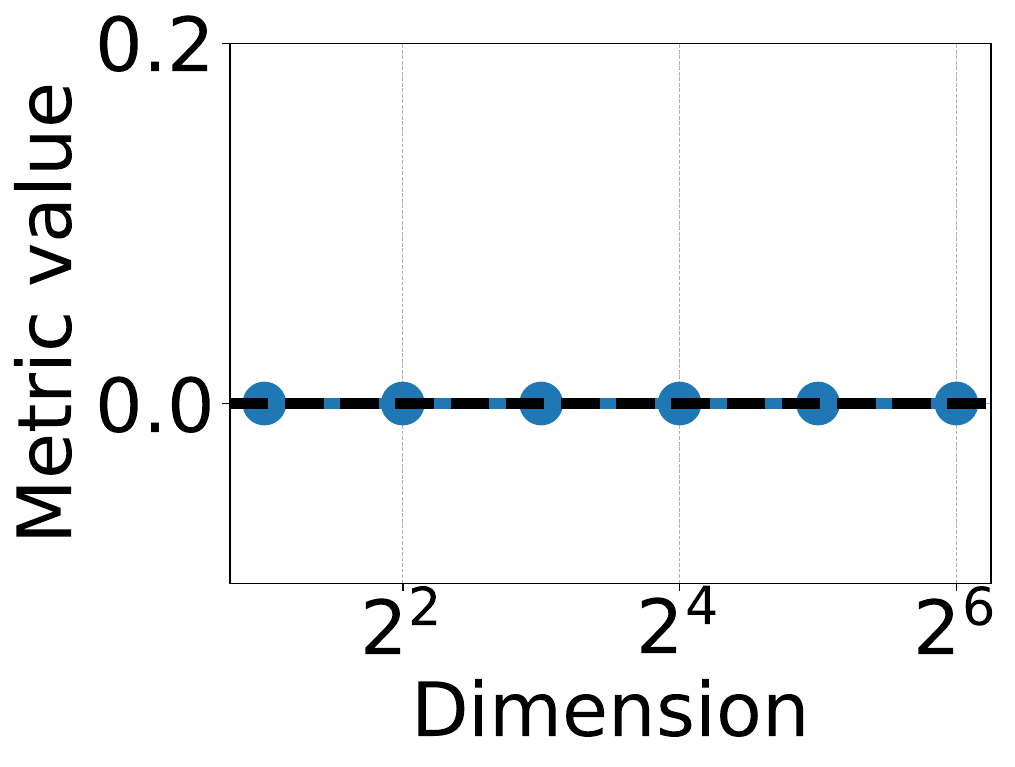}}
    \subfloat[symRecall]{\includegraphics[width=0.23\textwidth]{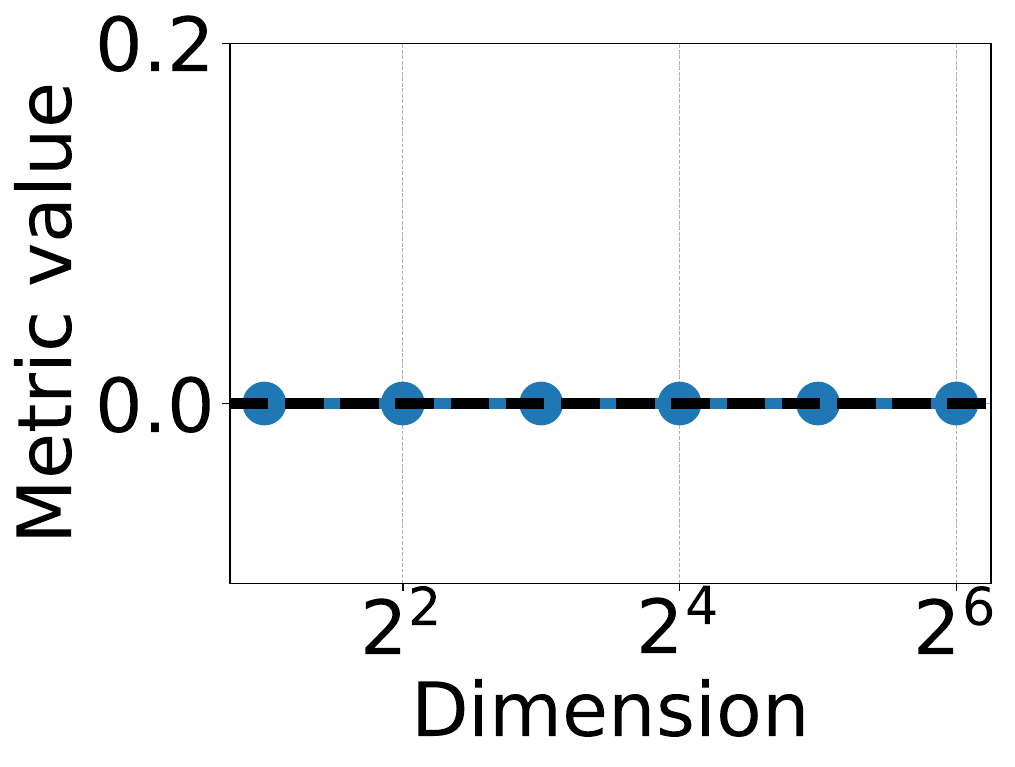}}
    \vfill
    \vspace{0.9em}
    \subfloat[P-recall]{\includegraphics[width=0.23\textwidth]{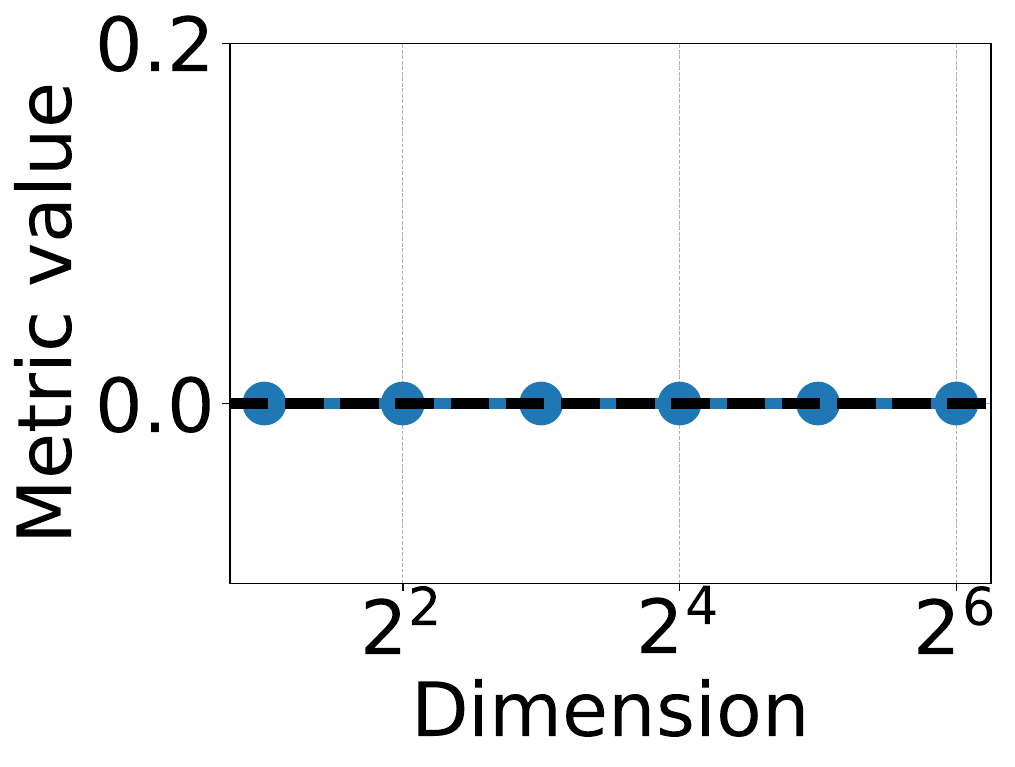}}
    \subfloat[Clipped Coverage]{\includegraphics[width=0.23\textwidth]{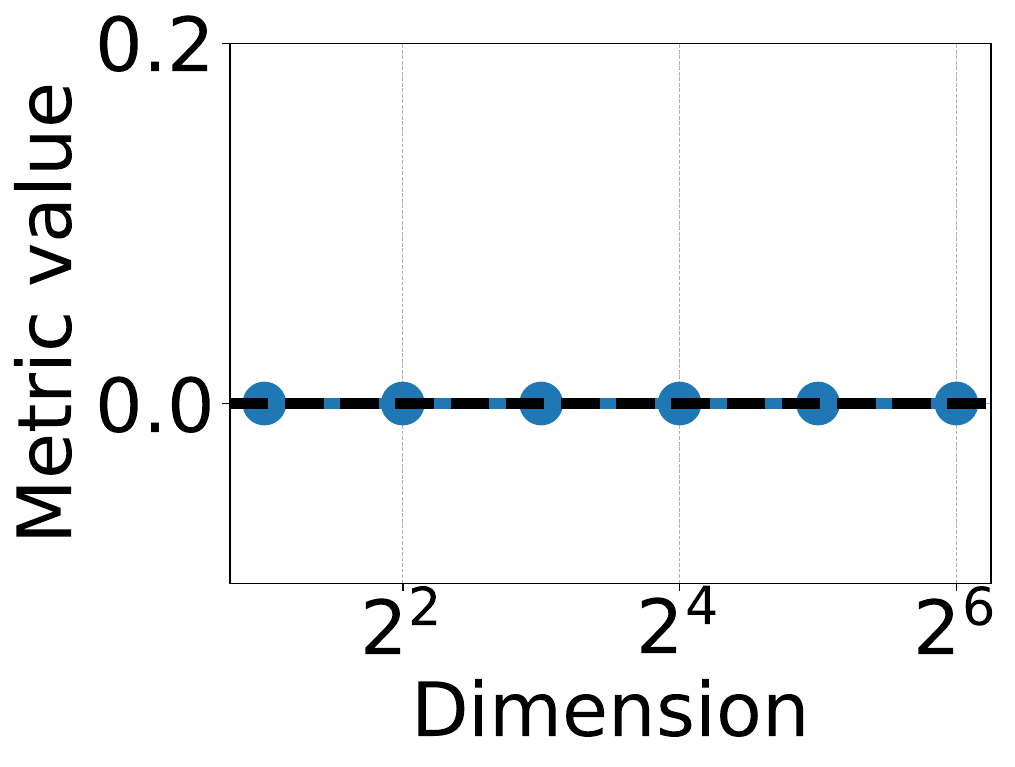}}
    \caption{\textbf{Metrics with GICDM on the Hypersphere Test}: Each subplot shows how a metric, after applying GICDM, behaves as the dimension increases in the hypersphere scenario described in \Cref{fig:hypersphere_test}. GICDM successfully corrects hubness-related failures, keeping metric values at 0.}
    \label{fig:hypersphere_metrics_gicdm}
  \end{center}
\end{figure}

\newpage
\ 

\newpage
\section{In-sample hubness reduction full results}
\label{sec:hubness_reduction_full_results}

For all hubness reduction methods described in \Cref{sec:prior_work}, as well as projection onto a sphere, \Cref{tab:hubness_reduction_full_1,tab:hubness_reduction_full_2} present detailed results for each image dataset and embedding, while \Cref{tab:hubness_reduction_full_3} does so for a music dataset. For methods requiring a neighborhood size $K$, we report the $K$ value in $\{5,10,20,50\}$ that yields the lowest $h_1^5(1\%)$. Results are sorted by dataset and then by $h_1^5(1\%)$.

ICDM consistently achieves the lowest hubness scores across all datasets and embeddings, with optimal $K$ values that are very stable (always $20$, except for Inceptionv3 on ImageNet, where it is $10$). Its $h_1^5(1\%)$ values are always below $2.0$ and $A^5$ remains close to $0.00$, indicating that hubness is effectively eliminated.
Some hubness reduction methods can occasionally perform worse than no correction, particularly on DINOv2 ImageNet, where DSL, LS, and Sphere projection all increase $h_1^5(1\%)$.

To the best of our knowledge, this is the first demonstration of the capabilities of ICDM in that context.

\begin{table}[b!]
  \caption{\textbf{Hubness reduction comparison}: using either DINOv2 (left) or DINOv3 (right). The metrics are $h_1^5(1\%)$, which measures how much more frequently the top $1\%$ of hubs appear in the 5-nearest neighborhoods compared to the average, and $A^5$, the proportion of antihubs. In both cases, lower values are better. On ImageNet with DINOv2 embeddings, some methods perform worse than no correction.
  }
  \label{tab:hubness_reduction_full_1}
  \centering
  \begin{minipage}[r]{0.47\textwidth}
    \begin{small}
    \begin{sc}
    \begin{tabular}{clccc}
    \toprule
    & \multicolumn{4}{c}{DINOv2 Embeddings} \\
    \cmidrule(r){2-5}
    Dataset & Method & K & $h_1^5(1\%)$ & $A^5$ \\
    \midrule
    \multirow{8}{*}{CIFAR10}
      & Original & -- & 6.5 & 0.14 \\
      & Sphere & -- & 6.2 & 0.12 \\
      & $\text{MP}^\text{Gauss}$ & -- & 4.5 & 0.08 \\
      & LS & 10 & 4.0 & 0.03 \\
      & CSLS & 20 & 2.9 & 0.02 \\
      & NICDM & 20 & 2.8 & 0.02 \\
      & DSL & 50 & 2.8 & 0.01 \\
      & ICDM & 20 & 1.7 & 0.00 \\
    \midrule
    \multirow{8}{*}{FFHQ}
      & Original & -- & 5.8 & 0.12 \\
      & Sphere & -- & 5.5 & 0.11 \\
      & $\text{MP}^\text{Gauss}$ & -- & 4.5 & 0.07 \\
      & LS & 10 & 3.6 & 0.02 \\
      & DSL & 50 & 2.8 & 0.01 \\
      & CSLS & 20 & 2.6 & 0.02 \\
      & NICDM & 20 & 2.6 & 0.02 \\
      & ICDM & 20 & 1.7 & 0.00 \\
    \midrule
    \multirow{8}{*}{LSUN Bedroom}
      & Sphere & -- & 6.0 & 0.14 \\
      & Original & -- & 5.8 & 0.13 \\
      & $\text{MP}^\text{Gauss}$ & -- & 4.7 & 0.09 \\
      & LS & 20 & 3.7 & 0.03 \\
      & DSL & 50 & 3.0 & 0.01 \\
      & CSLS & 20 & 2.7 & 0.02 \\
      & NICDM & 20 & 2.7 & 0.02 \\
      & ICDM & 20 & 1.7 & 0.00 \\
    \midrule
    \multirow{8}{*}{ImageNet}
      & DSL & 5 & 7.3 & 0.01 \\
      & LS & 5 & 5.5 & 0.02 \\
      & Sphere & -- & 4.4 & 0.10 \\
      & Original & -- & 4.4 & 0.10 \\
      & $\text{MP}^\text{Gauss}$ & -- & 2.8 & 0.03 \\
      & CSLS & 10 & 2.6 & 0.01 \\
      & NICDM & 10 & 2.5 & 0.01 \\
      & ICDM & 20 & 1.8 & 0.00 \\
    \bottomrule
    \end{tabular}
    \end{sc}
    \end{small}
  \end{minipage}
  \begin{minipage}{0.35\textwidth}
    \begin{small}
    \begin{sc}
    \begin{tabular}{lccc}
    \toprule
    \multicolumn{4}{c}{DINOv3 Embeddings} \\
    \cmidrule(r){1-4}
    Method & K & $h_1^5(1\%)$ & $A^5$ \\
    \midrule

    Original & -- & 9.0 & 0.24 \\
    Sphere & -- & 6.1 & 0.12 \\
    $\text{MP}^\text{Gauss}$ & -- & 5.0 & 0.10 \\
    LS & 5 & 4.5 & 0.05 \\
    CSLS & 20 & 3.4 & 0.05 \\
    NICDM & 20 & 3.2 & 0.04 \\
    DSL & 50 & 3.1 & 0.01 \\
    ICDM & 20 & 1.8 & 0.00 \\
    \midrule

    Original & -- & 10.1 & 0.26 \\
    DSL & 50 & 6.3 & 0.02 \\
    Sphere & -- & 5.6 & 0.10 \\
    $\text{MP}^\text{Gauss}$ & -- & 4.9 & 0.10 \\
    LS & 20 & 4.2 & 0.07 \\
    CSLS & 20 & 3.6 & 0.06 \\
    NICDM & 20 & 3.3 & 0.05 \\
    ICDM & 20 & 1.8 & 0.00 \\
    \midrule
    Original & -- & 9.0 & 0.23 \\
    Sphere & -- & 7.0 & 0.15 \\
    $\text{MP}^\text{Gauss}$ & -- & 5.4 & 0.12 \\
    LS & 20 & 4.2 & 0.07 \\
    DSL & 50 & 3.5 & 0.01 \\
    CSLS & 20 & 3.4 & 0.06 \\
    NICDM & 10 & 3.2 & 0.04 \\
    ICDM & 20 & 1.8 & 0.00 \\
    \midrule
    Original & -- & 8.0 & 0.20 \\
    LS & 50 & 5.0 & 0.14 \\
    Sphere & -- & 4.7 & 0.12 \\
    $\text{MP}^\text{Gauss}$ & -- & 4.0 & 0.10 \\
    CSLS & 20 & 3.3 & 0.06 \\
    DSL & 20 & 3.1 & 0.01 \\
    NICDM & 20 & 3.1 & 0.05 \\
    ICDM & 20 & 1.9 & 0.00 \\
    \bottomrule
    \end{tabular}
    \end{sc}
    \end{small}
  \end{minipage}
\end{table}

\begin{table}[t]
  \caption{\textbf{Hubness reduction comparison}: using either Inceptionv3 (left) or VGG16 (right) embeddings. The metrics and datasets are the same as in \Cref{tab:hubness_reduction_full_1}. The initial hubness levels are generally higher than with DINO embeddings, but ICDM still consistently achieves the lowest hubness scores across all datasets and embeddings.}
  \label{tab:hubness_reduction_full_2}
  \centering
  \begin{minipage}[r]{0.47\textwidth}
    \begin{small}
    \begin{sc}
    \begin{tabular}{clccc}
    \toprule
    & \multicolumn{4}{c}{Inceptionv3 Embeddings} \\
    \cmidrule(r){2-5}
    Dataset & Method & K & $h_1^5(1\%)$ & $A^5$ \\
    \midrule
    \multirow{8}{*}{CIFAR10}
     & Original & -- & 9.0 & 0.22 \\
     & Sphere & -- & 6.2 & 0.12 \\
     & $\text{MP}^\text{Gauss}$ & -- & 4.5 & 0.07 \\
     & LS & 20 & 3.8 & 0.04 \\
     & CSLS & 50 & 3.3 & 0.04 \\
     & NICDM & 20 & 3.2 & 0.03 \\
     & DSL & 50 & 2.9 & 0.01 \\
     & ICDM & 20 & 1.7 & 0.00 \\
    \midrule
    \multirow{8}{*}{FFHQ}
     & Original & -- & 8.3 & 0.20 \\
     & Sphere & -- & 8.1 & 0.17 \\
     & $\text{MP}^\text{Gauss}$ & -- & 4.9 & 0.08 \\
     & LS & 20 & 3.9 & 0.04 \\
     & CSLS & 50 & 3.2 & 0.04 \\
     & NICDM & 20 & 3.1 & 0.03 \\
     & DSL & 50 & 2.6 & 0.01 \\
     & ICDM & 20 & 1.7 & 0.00 \\
    \midrule
    \multirow{8}{*}{LSUN Bedroom}
     & Original & -- & 12.9 & 0.28 \\
     & Sphere & -- & 9.0 & 0.19 \\
     & $\text{MP}^\text{Gauss}$ & -- & 5.5 & 0.10 \\
     & LS & 20 & 4.1 & 0.05 \\
     & CSLS & 50 & 3.8 & 0.06 \\
     & NICDM & 20 & 3.6 & 0.04 \\
     & DSL & 50 & 3.1 & 0.01 \\
     & ICDM & 20 & 1.7 & 0.00 \\
    \midrule
    \multirow{8}{*}{ImageNet}
     & Original & -- & 5.7 & 0.20 \\
     & Sphere & -- & 4.7 & 0.11 \\
     & LS & 50 & 4.7 & 0.09 \\
     & $\text{MP}^\text{Gauss}$ & -- & 4.0 & 0.08 \\
     & DSL & 20 & 3.4 & 0.01 \\
     & CSLS & 20 & 3.0 & 0.04 \\
     & NICDM & 20 & 2.9 & 0.03 \\
     & ICDM & 10 & 1.7 & 0.00 \\
    \bottomrule
    \end{tabular}
    \end{sc}
    \end{small}
  \end{minipage}
  \begin{minipage}{0.35\textwidth}
    \begin{small}
    \begin{sc}
    \begin{tabular}{lccc}
    \toprule
    \multicolumn{4}{c}{VGG16 Embeddings} \\
    \cmidrule(r){1-4}
    Method & K & $h_1^5(1\%)$ & $A^5$ \\
    \midrule
    Original & -- & 10.4 & 0.21 \\
    Sphere & -- & 5.6 & 0.10 \\
    $\text{MP}^\text{Gauss}$ & -- & 4.3 & 0.06 \\
    LS & 50 & 3.7 & 0.04 \\
    CSLS & 50 & 3.3 & 0.04 \\
    NICDM & 20 & 3.2 & 0.03 \\
    DSL & 50 & 2.4 & 0.01 \\
    ICDM & 20 & 1.7 & 0.00 \\
    \midrule
    Original & -- & 9.8 & 0.23 \\
    Sphere & -- & 6.2 & 0.12 \\
    $\text{MP}^\text{Gauss}$ & -- & 4.8 & 0.08 \\
    LS & 20 & 3.7 & 0.04 \\
    CSLS & 20 & 3.3 & 0.04 \\
    NICDM & 20 & 3.2 & 0.03 \\
    DSL & 50 & 2.5 & 0.01 \\
    ICDM & 20 & 1.7 & 0.00 \\
    \midrule
    Original & -- & 16.6 & 0.31 \\
    Sphere & -- & 7.0 & 0.11 \\
    $\text{MP}^\text{Gauss}$ & -- & 4.6 & 0.07 \\
    LS & 50 & 4.3 & 0.06 \\
    CSLS & 50 & 4.0 & 0.06 \\
    NICDM & 20 & 3.8 & 0.04 \\
    DSL & 50 & 2.4 & 0.01 \\
    ICDM & 20 & 1.7 & 0.00 \\
    \midrule
     Original & -- & 12.6 & 0.20 \\
      Sphere & -- & 4.5 & 0.11 \\
      $\text{MP}^\text{Gauss}$ & -- & 3.9 & 0.07 \\
      LS & 20 & 3.9 & 0.05 \\
      CSLS & 20 & 3.2 & 0.04 \\
      NICDM & 20 & 3.1 & 0.03 \\
      DSL & 20 & 2.8 & 0.00 \\
      ICDM & 20 & 1.7 & 0.00 \\
    \bottomrule
    \end{tabular}
    \end{sc}
    \end{small}
  \end{minipage}
\end{table}

\begin{table}[b]
  \caption{\textbf{Hubness reduction comparison}: for CLAP embeddings of 7-second music samples from the Free Music Archive (FMA) dataset. The metrics are the same as in \Cref{tab:hubness_reduction_full_1,tab:hubness_reduction_full_2}. Even in this different modality, hubness is present, and ICDM achieves the best performance.}
  \label{tab:hubness_reduction_full_3}
  \centering
  \begin{small}
  \begin{sc}
  \begin{tabular}{clccc}
    \toprule
    & \multicolumn{4}{c}{CLAP Embeddings} \\
    \cmidrule(r){2-5}
    Dataset & Method & K & $h_1^5(1\%)$ & $A^5$ \\
    \midrule
    \multirow{8}{*}{FMA} & Original      & --  & 4.9 & 0.09 \\
     & Sphere        & --  & 4.6 & 0.08 \\
     & $\text{MP}^\text{Gauss}$  & --  & 3.9 & 0.06 \\
     & LS            & 20  & 3.3 & 0.02 \\
     & CSLS          & 20  & 2.6 & 0.01 \\
     & NICDM         & 20  & 2.5 & 0.01 \\
     & DSL           & 50  & 2.4 & 0.01 \\
     & ICDM          & 20  & 1.8 & 0.00 \\
    \bottomrule
  \end{tabular}
  \end{sc}
  \end{small}
\end{table}

\end{document}